\newcommand*{\citet}[1]{\AtNextCite{\AtEachCitekey{\defcounter{maxnames}{2}}} \textcite{#1}}
\newcommand*{\citep}[1]{\cite{#1}}
\newcommand{\alglinelabel}{%
  \addtocounter{ALC@line}{-1}
  \refstepcounter{ALC@line}
  \label
}
\begin{document}
\title{On Differentially  Private Federated Linear Contextual Bandits}
\author {
     Xingyu Zhou\thanks{Wayne State University, Detroit, USA.  Email: \texttt{xingyu.zhou@wayne.edu}}\quad 
    Sayak Ray Chowdhury\thanks{Microsoft Research, Bengaluru, Karnataka, India. Email: \texttt{t-sayakr@microsoft.com}  }
}

\date{}

\maketitle

\begin{abstract}
We consider cross-silo federated linear contextual bandit (LCB) problem under differential privacy, where multiple silos (agents) interact with the local users and communicate via a central server to realize collaboration while without sacrificing each user's privacy. We identify three issues in the state-of-the-art: (i) failure of claimed  privacy protection and (ii) incorrect regret bound due to noise miscalculation and (iii) ungrounded communication cost. 
To resolve these issues, we take a two-step principled approach. First, we design an algorithmic framework consisting of a generic federated LCB algorithm and flexible privacy protocols. Then, leveraging the proposed framework, we study federated LCBs under two different privacy constraints. We first establish privacy and regret guarantees under silo-level local differential privacy, which fix the issues present in state-of-the-art algorithm.
To further improve the regret performance, we next consider shuffle model of differential privacy, under which we show that our algorithm can achieve nearly ``optimal'' regret without a trusted server. 
We accomplish this via two different schemes --  one relies on a new result on privacy amplification via shuffling for DP mechanisms and another one leverages the integration of a shuffle protocol for vector sum into the tree-based mechanism, both of which might be of independent interest. Finally, we support our theoretical results with
numerical evaluations over contextual bandit instances generated from both synthetic and real-life data.
\end{abstract}

\newpage
\tableofcontents
\newpage

\section{Introduction}
We consider the classic \emph{cross-silo} Federated Learning (FL) paradigm~\cite{kairouz2021advances} applied to linear contextual bandits (LCB). In this setting, a set of $M$ local silos or agents (e.g., hospitals) communicate with a central server to learn about the unknown bandit parameter (e.g., hidden vector representing values of the
user for different medicines). In particular, at each round $t \in [T]$, each local agent $i \in [M]$ receives a new user (e.g., patient) with context information $c_{t,i} \in \cC_i$ (e.g., age, gender, medical history), recommends an action $a_{t,i} \in \cK_i$ (e.g., a choice of medicine), and then it observes a real-valued reward $y_{t,i}$ (e.g., effectiveness of the prescribed medicine). In linear contextual bandits, the reward $y_{t,i}$ is a linear function of the unknown bandit parameter $\theta^* \in \Real^d$ corrupted by $i.i.d$ mean-zero observation noise $\eta_{t,i}$, i.e., $y_{t,i} = \inner{x_{t,i}}{\theta^*} + \eta_{t,i}$, where $x_{t,i}= \phi_i(c_{t,i}, a_{t,i})$ and $\phi_i: \cC_i \times \cK_i \to \Real^d$ is a known function that maps a context-action pair to a $d$-dimensional real-valued feature vector. 
The goal of federated LCB is to minimize the cumulative \emph{group} pseudo-regret defined as 
\begin{align*}
    R_M(T) = \sum_{i=1}^M\sum_{ t=1}^T \left[\max_{a\in \cK_i} \inner{\phi_i(c_{t,i}, a)}{\theta^*} - \inner{x_{t,i}}{\theta^*}\right].
\end{align*}
To achieve the goal, as in standard cross-silo FL, the agents are allowed to communicate with the central server following a star-shaped communication, i.e., each agent can communicate with the server by uploading and downloading data, but agents cannot communicate with each other directly. However, the communication process (i.e., both data and schedule) could also possibly incur privacy leakage for each user $t$ at each silo $i$, e.g., the sensitive context information $c_{t,i}$ and reward $y_{t,i}$.

To address this privacy risk, we resort to \emph{differential privacy}~\cite{dwork2014algorithmic}, a principled way to prove
privacy guarantee against adversaries with arbitrary auxiliary information. 
In standard \emph{cross-device} FL, the notion of privacy is often the client-level DP, which protects the identity of each participating client or device. However, it has limitations in the setting of cross-silo FL, where the protection targets are users (e.g., patients) rather than participating silos or agents (e.g., hospitals). Also, in order to adopt client-level DP to cross-silo FL, one needs the server and other silos to be trustworthy, which is often not the case. Hence, recent studies~\cite{lowy2021private,lowy2022private, liu2022privacy,dobbe2018customized} on cross-silo federated supervised learning have converged to a new privacy notion, \emph{which 
requires that for each silo, all of its communication during the entire process is private (``indistinguishable'') with respect to change of one local user of its own.} This allows one to protect \emph{each user} within each silo without trustworthy server and other silos. In this paper, we adapt it to the setting of cross-silo federated contextual bandits and call it \emph{silo-level LDP}.


\citet{dubey2020differentially} adopt a similar but somewhat weaker notion of privacy called \emph{Federated DP} and takes the first step to tackle this important problem of private and federated linear contextual bandits (LCBs). In fact, the performance guarantees presented by the authors are currently the state-of-the-art for this problem. The proposed algorithm \emph{claims} to protect the privacy of each user at each silo. Furthermore, given a privacy budget $\epsilon > 0$, the claimed regret bound is $\widetilde{O}(\sqrt{MT/\epsilon})$ with only $O(M\log T)$ communication cost,
which matches the regret of a super-single agent that plays for total $MT$ rounds. Unfortunately, in spite of being the state-of-the-art, the aforementioned privacy, regret and communication cost guarantees have fundamental gaps, as discussed below.


\subsection{Our Contributions}
\textbf{Identify privacy, regret and communication gaps in state-of-the-art \cite{dubey2020differentially}.} 
In Section~\ref{sec:leakage}, we first show that the proposed algorithm in~\cite{dubey2020differentially} could leak privacy from the side channel of adaptive communication schedule, which depends on users' \emph{non-private} local data. Next, we identify a mistake in total injected privacy noise in the current regret analysis. Accounting for this miscalculation, the correct regret bound would amount to $\widetilde{O}(M^{3/4}\sqrt{T/\epsilon})$, which is 
$M^{1/4}$ factor higher than the claimed one, and doesn't match regret performance of the super agent. Finally, we observe that due to the presence of privacy noise, its current analysis for $O(M\log T)$ communication cost 
no longer holds. To resolve these issues, we take the following two-step principled approach:

\textbf{(i) design a generic algorithmic and analytical framework.} In Section~\ref{sec:alg}, we propose a generic federated LCB algorithm along with a flexible privacy protocol. Our algorithm adopts a fixed-batch schedule (rather than an adaptive one in~\cite{dubey2020differentially}) that helps avoid privacy leakage from the side channel, as well as subtleties in communication analysis. Our privacy protocol builds on a distributed version of the celebrated tree-based algorithm~\cite{chan2011private,dwork2010differential}, enabling us to provide different privacy guarantees in a unified way. 
We further show that our algorithm enjoys a simple and generic analytical regret bound that only depends on the total amount of injected privacy noise under the required privacy constraints. 

\textbf{(ii) prove performance guarantees under different privacy notions.} We build upon the above framework to study federated LCBs under two different privacy constraints. In Section~\ref{sec:main-LDP}, we consider silo-level LDP (a stronger notion of privacy than Federated DP of~\cite{dubey2020differentially}) and establish privacy guarantee with a correct regret bound $\widetilde{O}(M^{3/4}\sqrt{T/\epsilon})$ and communication cost $O(\sqrt{MT})$, hence fixing  the gaps in~\cite{dubey2020differentially}. Next, to match the regret of a super single agent, we consider shuffle DP (SDP)~\cite{cheu2019distributed} in Section~\ref{sec:main-SDP} and establish a regret bound of $\widetilde{O}(\sqrt{MT/\epsilon})$. We provide two different techniques to achieve this -- one that relies on a new result on privacy amplification via shuffling for DP mechanisms and the other that integrates a shuffle protocol for vector sums \cite{cheu2021shuffle} into the tree-based mechanism, both of which might be of independent interest.
In Section~\ref{sec:exp}, we support our theoretical results with
simulations on contextual bandit instances generated from synthetic and real-life data.



\section{Related Work}
Private bandit learning has recently received increasing attention under various notion of DP. 
For multi-armed bandits (MAB) where rewards are the sensitive data, different DP models including the central model~\cite{mishra2015nearly,azize2022privacy,sajed2019optimal}, local model~\cite{ren2020multi} and distributed model~\cite{chowdhury2022distributed,tenenbaum2021differentially} have been studied. Among them, we note that~\cite{chowdhury2022distributed} also presents optimal private regret bounds under the above three DP models while only relying on discrete privacy noise, hence avoiding the privacy leakage of continuous privacy noise on finite computers due to floating point arithmetic. 
For linear bandits (without contexts protection),~\cite{li2022differentially} establishes the first near-optimal private regret bounds for central, local, and shuffle models of approximate DP. The same problem has also been studied under pure-DP in~\cite{hanna2022differentially}. 
In the specific case of linear contextual bandits, where both the contexts and rewards need to be protected, there are recent line of work under the central~\cite{shariff2018differentially}, local~\cite{zheng2020locally} and shuffle model~\cite{pmlr-v162-chowdhury22a,garcelon2022privacy,tenenbaum2023concurrent} of DP. 
Private bandit learning has also been studied beyond linear settings, such as kernel bandits~\cite{Zhou_Tan_2021,dubey2021no,li2023private}.

All the above papers consider learning by a single agent.
To the best of our knowledge,~\citet{dubey2020differentially} is the first to consider cross-silo federated linear contextual bandits (LCBs). Non-private federated or distributed LCBs have also been well studied~\cite{wang2019distributed,he2022simple,huang2021federated}. One common goal is to match the regret achieved by a super single agent that plays $MT$ rounds while keeping communication among agents as low as possible. Our work shares the same spirit in that we aim to match the  regret achieved by a super single agent under differential privacy.

Broadly speaking, our work also draws inspiration from recent advances in private cross-silo federated supervised learning~\cite{lowy2021private,liu2022privacy}. In particular, our silo-level local and shuffle DP definitions for federated LCBs in the main paper can be viewed as counterparts of the ones proposed for cross-silo federated supervised learning (see, e.g., ~\citet{lowy2021private}). 

\section{Differential Privacy in Federated LCBs}
\label{sec:DP}

We now formally introduce differential privacy in cross-silo federated contextual bandits. Let 
a dataset $D_i$ at each silo $i$ be given by a sequence of $T$ \emph{unique} users $U_{1,i}, \ldots, U_{T,i}$.
Each user $U_{t,i}$ is identified by her context information $c_{t,i}$ as well as reward responses she would give to all possible actions recommended to her. We say two datasets $D_i$ and $D_i'$ at silo $i$ are adjacent if they differ exactly in one participating user, i.e., $U_{\tau,i} \neq U'_{\tau,i}$ for some $\tau \in [T]$ and $U_{s,i} = U'_{s,i}$ for all $s \neq \tau$.

\textbf{Silo-level local differential privacy (LDP).}
Consider a multi-round, cross-silo 
federated learning algorithm $\cQ$. At each round $t$, each silo $i$ communicates a randomized message $Z_i^t$
of its data $D_i$ to the server, which may depend (due to collaboration) on previous randomized messages $Z_{j}^1,\ldots, Z_{j}^{t-1}$ from all other silos $j\neq i$. We allow $Z_i^{t}$ to be empty if there is no communication at round $t$. 
Let $Z_i = (Z_i^1,\ldots,Z_i^T)$ denote the full transcript of silo $i$'s communications with the server over $T$ rounds and $\cQ_i$ the induced local mechanism in this process. Note that $Z_i$ is a realization of random messages generated according to the local mechanism $\cQ_i$. We denote by $Z_{- i}=(Z_1,\ldots,Z_{i-1},Z_{i+1},\ldots,Z_M)$ the full transcripts of all but silo $i$. We assume that $Z_i$ is conditionally independent of $D_j$ for all $j \neq i$ given $D_i$ and $Z_{-i}$. With this notation, we have the following definition of silo-level LDP.

\begin{definition}[Silo-level LDP]
\label{def:silo-LDP-general}
A cross-silo federated learning algorithm $\cQ$ with $M$ silos
is said to be $(\epsilon_i,\delta_i)_{i \in M}$ silo-level LDP if for each silo $i \!\in\! [M]$, it holds that
\begin{align*}
 \mathbb{P}\Big[\cQ_i(Z_i \!\in\! \cE_i | D_i, \!Z_{-i})\Big] \!\le\! e^{\epsilon_i} \mathbb{P}\Big[\cQ_i(Z_i \!\in\! \cE_i | D'_i, \!Z_{-i})\Big] \!+\! \delta_i~,
\end{align*}
for all adjacent datasets $D_i$ and $D'_i$, and
for all events $\cE_i$ in the range of $\cQ_i$. 
If $\epsilon_i = \epsilon$ and $\delta_i=\delta$ for all $i \in [M]$, we simply say $\cQ$ is $(\epsilon,\delta)$-silo-level LDP.
\end{definition}



Roughly speaking, a silo-level LDP algorithm protects the privacy of each individual user (e.g., patient) within each silo in the sense that an adversary (which could either be the central server or other silos) cannot infer too much about any individual's sensitive information (e.g., context and reward) or determine whether an individual participated in the learning process.\footnote{This is indeed a notion of item-level DP. It appears under different names in prior work, e.g., silo-specific sample-level DP~\cite{liu2022privacy}, inter-silo record-level DP~\cite{lowy2021private}. A comparison of this notion of privacy with standard local DP, central DP and shuffle DP for single-agent LCBs is presented in Appendix~\ref{app:ldp}.
}


\begin{remark}[Federated DP vs. Silo-level LDP]
\label{rem:fed-dp}
\citet{dubey2020differentially} consider a privacy notion called Federated DP (Fed-DP in short). 
As summarized in~\cite{dubey2020differentially}, Fed-DP requires ``the action chosen by any agent must be sufficiently impervious (in probability) to any single pair $(x,y)$ from any other agent''. Both silo-level LDP and Fed-DP are item-level DP as the neighboring relationship is defined by differing in one participating user. The key here is to note that silo-level DP implies Fed-DP by the post-processing property of DP, and thus it is a stronger notion of privacy.
In fact, \citet{dubey2020differentially} claim to achieve Fed-DP by relying on privatizing the communicated data from each silo. However, as we shall see in Section~\ref{sec:leakage}, its proposed algorithm fails to privatize the adaptive synchronization schedule, which is the key reason behind privacy leakage in their algorithm.

\end{remark}


\textbf{Shuffle differential privacy (SDP).}
Next, we consider the notion of SDP~\cite{cheu2019distributed}, which builds upon a trusted third-party (shuffler) to amplify privacy. This provides us with the possibility to achieve a better regret compared to the one under silo-level LDP while still without a trusted server. Under the shuffle model of DP in FL, each silo $i\in [M]$ first applies a local randomizer $\cR$ to its raw local data and sends the randomized output to a shuffler $\cS$. The shuffler $\cS$ permutes all the messages from all $M$ silos uniformly at random and sends those to the central server. Roughly speaking, SDP requires all the messages sent by the shuffler to be private (``indistinguishable'') with respect to a single user change among all $MT$ users. This item-level DP is defined formally as follows.

 \begin{definition}[SDP]
Consider a cross-silo federated learning algorithm $\cQ$ that induces a (randomized) mechanism $\cM$ whose output is the collection of all messages sent by the shuffler during the entire learning process. Then, the algorithm $\cQ$ is said to be $(\epsilon,\delta)$-SDP if 
\begin{align*}
    \mathbb{P}\Big[\cM(D) \in \cE \Big] \le e^{\epsilon}\, \mathbb{P}\Big[\cM(D') \in \cE \Big] + \delta~,
\end{align*}
for all $\cE$ in the range of $\cM$ and for all adjacent datasets $D = (D_1,\ldots, D_M)$ and $D' = (D'_1,\ldots, D'_M)$ such that $\sum_{i=1}^M \sum_{t=1}^T \mathbbm{1}_{\{ U_{t,i} \neq U'_{t,i}\}} = 1$.
\end{definition}



\section{Privacy, Regret and Communication Gaps in State-of-the-Art}\label{sec:leakage}

In this section, we discuss the gaps present in privacy, regret and communication cost guarantees of the state-of-the-art algorithm proposed in \citet{dubey2020differentially}.

\subsection{Gap in Privacy Analysis}
We take a two-step approach to demonstrate the privacy issue in~\cite{dubey2020differentially}.
To start with, we argue that the proposed technique (i.e., Algorithm 1 in~\cite{dubey2020differentially}) fails to achieve silo-level LDP due to privacy leakage through the side channel of communication schedule (i.e., when the agents communicate with the server).
The key issue is that the adaptive communication schedule in their proposed algorithm depends on users' \emph{non-private} data. This fact can be utilized by an adversary or malicious silo $j$ to infer another silo $i$'s users' sensitive information, which violates the requirement of silo-level LDP.  
Specifically, in the proposed algorithm of~\cite{dubey2020differentially}, \emph{all} silos communicate with the server (which is termed as \emph{synchronous} setting) if
\begin{align}
\label{eq:sync}
    \exists\; \text{some silo}\; i \in [M]:f(X_i, Z) > 0~,
\end{align}
where $f$ is some function, $X_i$ is the non-private local data of silo $i$ since the last synchronization and $Z$ is all previously synchronized data. Crucially, the form of $f$ and the rule \eqref{eq:sync} are public information, known to all silos even before the algorithm starts.
This local and non-private data-dependent communication rule in~\eqref{eq:sync} causes privacy leakage, as illustrated below with a toy example.

\begin{example}[Privacy leakage]
\label{ex:leak}
Consider there are two silos $i$ and $j$ following the algorithm in~\cite{dubey2020differentially}. After the first round, $X_i$ in~\eqref{eq:sync} includes the data of the first user in silo $i$ (say Alice), $X_j$ includes the data of the first user in silo $j$ (say Bob) and $Z$ is empty (zero). Let communication be triggered at the end of first round and assume $f(X_j,0) \le 0$. Since the rule \eqref{eq:sync} is public, silo $j$ can infer that $f(X_i,0) > 0$, i.e. the communication is triggered by silo $i$. Since $f$ is also public knowledge, silo $j$ can utilize this to infer some property of $X_i$. Hence, by observing the communication signal \emph{only} (even without looking at the data), silo $j$ can infer some sensitive data of Alice.\footnote{ In fact, given the specific form of $f$ in~\cite{dubey2020differentially}, silo $j$ gets to know that $\log\det\left(I +\lambda_{\min}^{-1} x_{1,i}x_{1,i}^{\top} \right) > D$, where $\lambda_{\min} > 0$ is a regularizer (which depends on privacy budgets $\epsilon,\delta$) and $D > 0$ is some suitable threshold (see Appendix~\ref{app:gap} for the specific form of $f$). This in turn implies that $\norm{x_{1,i}} > C$, where $C$ is some constant. Since $x_{1,i}$ contains the context information of the user, this 
information could immediately reveal that some specific features in the context vector are active, which can be inferred by the adversary silo (e.g., silo $j$).}
\end{example}

The above example demonstrates that the proposed algorithm in~\cite{dubey2020differentially} does not satisfy silo-level LDP, implying (i) their current proof for Fed-DP guarantee via post-processing of silo-level LDP does not hold anymore and (ii) Fed-DP is weak privacy protection. However, it does not necessarily imply that this algorithm also does not satisfy the weaker notion of Fed-DP (as considered in~\cite{dubey2020differentially}). Nevertheless, one can show that this algorithm indeed also fails to guarantee Fed-DP by leveraging Example~\ref{ex:leak}.

To see this, recall the definition of Fed-DP from Remark~\ref{rem:fed-dp}. In the context of Example~\ref{ex:leak}, it translates to silo $j$ selecting similar actions for its users when a single user in silo $i$ changes. Specifically, if the first user in silo $i$ changes from Alice to say, Tracy, Fed-DP mandates that all $T$ actions suggested by silo $j$ to its local $T$ users remain “indistinguishable”. This, in turn, implies that the communicated data from silo $i$ must remain “indistinguishable” at silo $j$ for each $t \!\in\! [T]$. This is because the actions at silo $j$ are chosen \emph{deterministically} based on its local data as well as communicated data from silo $i$,  and the local data at silo $j$ remains unchanged.
However, in Algorithm 1 of~\cite{dubey2020differentially}, the communicated data from silo $i$ is not guaranteed to remain “indistinguishable” as synchronization depends on
non-private local data (e.g. $X_i$ in ~\eqref{eq:sync}). In other words, without additional privacy noise added to $X_i$ in~\eqref{eq:sync}, the change from Alice to Tracy could affect the \emph{existence of synchronization} at round $t\ge1$ a lot. Consequently, under these two neighboring situations (e.g. Alice vs. Tracy), the communicated data from silo $i$ could differ significantly at round $t+1$. 
As a result, the action chosen at round $t+1$ in silo $j$ can be totally different, which violates the Fed-DP definition. This holds true even if silo $i$ injects noise while communicating its data (as done in Algorithm 1 of~\cite{dubey2020differentially}) due to a large change of non-private communicated data (see Appendix~\ref{app:gap}).

\subsection{Gaps in Regret and Communication Analysis}
We now turn to regret and communication analysis of~\cite{dubey2020differentially}, which has fundamental gaps that lead to incorrect conclusions in the end. 
{First}, the reported cost of privacy in regret bound
is $\Tilde{O}(\sqrt{MT/\epsilon})$ (ignoring dependence on dimension $d$ for simplicity), which leads to the (incorrect) conclusion that federated LCBs across $M$ silos under silo-level LDP can achieve the same order of regret as a super single agent that plays $MT$ rounds. 
However, in the proposed analysis, the total amount of injected privacy noise is miscalculated. In particular, variance of total noise needs to be $M\sigma^2$ rather than the proposed value of $\sigma^2$. This comes from the fact that each silo injects Gaussian noise with variance $\sigma^2$ when sending out local data and hence the total amount of noise at the server is $M\sigma^2$. Accounting for this correction, the cost of privacy becomes $\Tilde{O}(M^{3/4}\sqrt{T/\epsilon})$, which is $O(M^{1/4})$ factor worse than the claimed cost. Hence, we conclude that Algorithm 1 in~\cite{dubey2020differentially} cannot achieve the same order of regret as a super single agent.
{Second}, the proposed analysis in \cite{dubey2020differentially} to show $O(\log T)$ communication cost for the \emph{data-adaptive} schedule \eqref{eq:sync} under privacy constraint essentially follows from the non-private analysis of~\cite{wang2019distributed}. Unfortunately, due to additional privacy noise, this direct approach no longer holds, and hence the reported logarithmic communication cost stands ungrounded (see Appendix~\ref{app:gap} for more details on this). 




\section{Our Approach}
\label{sec:alg}

To address all three issues in~\cite{dubey2020differentially}, we introduce a generic algorithm for private and federated linear contextual bandits (Algorithm~\ref{alg:FedLUCB-SDP}) along with a flexible privacy protocol (Algorithm~\ref{alg: p-sdp}), which not only allows us to present the correct privacy, regret, and communication results under silo-level LDP (and hence under Fed-DP) (Section~\ref{sec:main-LDP}), but also helps us achieve the same order of regret as a super single agent under SDP  (Section~\ref{sec:main-SDP}).
Throughout the paper, we make following assumptions.

\begin{assumption}[Boundedness~\cite{shariff2018differentially,pmlr-v162-chowdhury22a}]
\label{ass:bounded}
The rewards are bounded, i.e., $y_{t,i} \in [0,1]$ for all $t\in [T]$ and $i\in [M]$. Moreover, the parameter vector and the context-action features have bounded norms, i.e., $\norm{\theta^*}_2 \le 1$ and $\sup_{c,a}\norm{\phi_i(c,a)}_2 \le 1$ for all $i \in [M]$.
\end{assumption}

\subsection{Algorithm: Private Federated LinUCB}
\label{sec:algo}

\begin{algorithm}[!t]
  \caption{Private-FedLinUCB }
  \label{alg:FedLUCB-SDP}
\begin{algorithmic}[1]
  \STATE {\bfseries Parameters:} Batch size $B \in \mathbb{N}$, regularization $\lambda >0$, confidence radii $\{\beta_{t,i}\}_{t\in [T], i\in [M]}$, feature map $\phi_i:\cC_i \times \cK_i \to \Real^d$, privacy protocol $\cP = (\cR,\cS,\cA)$
  \STATE {\bfseries Initialize:} $W_{i} = 0, U_{i} = 0$ for all agents $ i \in [M]$, $\widetilde{W}_{\text{syn}} = 0$, $\widetilde{U}_{\text{syn}} = 0$ 
 \FOR{$t\!=\!1, \ldots, T$}
    \FOR{each agent $i = 1,\ldots, M$}
      \STATE Receive context $c_{t,i}$; compute $V_{t,i} =  \lambda I+ \widetilde{W}_{\text{syn}} + W_{i}$ and $\hat{\theta}_{t,i} = V_{t,i}^{-1}(\widetilde{U}_{\text{syn}} + U_{i})$ \alglinelabel{line:data}
      \STATE Play action
      $a_{t,i} \!=\! \argmax_{a \in \cK_i} \inner{\phi_i(c_{t,i},\! a)} {\hat{\theta}_{t,i}}\! + \!\beta_{t,i} \!\norm{\phi_i(c_{t,i},\!a)}_{V_{t,i}^{-1}}$; observe reward $y_{t,i}$\alglinelabel{line:beta}
      \STATE Set $x_{t,i} \!=\! \phi_i(c_{t,i}, a_{t,i})$, $U_{i} = U_{i} + x_{t,i}y_{t,i}$ and $W_{i} = W_{i} + x_{t,i}x_{t,i}^{\top}$\alglinelabel{line:UW}
      \ENDFOR
      \IF{$t \Mod B = 0$} \alglinelabel{line:com}
       \STATE{\textcolor{DarkBlue}{\texttt{{// Local randomizer $\cR$ at \emph{all} agents $i \in [M]$ }}}}
        \STATE Send randomized messages $R_{t,i}^{\text{bias}} = \cR^{\text{bias}}(U_i)$ and $R_{t,i}^{\text{cov}} = \cR^{\text{cov}}(W_i)$ to $\cS$
         \STATE{\textcolor{DarkBlue}{\texttt{// Third party $\cS$ }}}
         \STATE Shuffle (or, not) all messages $S_t^{\text{bias}} = \cS(\{R_{t,i}^{\text{bias}}\}_{i \in [M]})$ and $S_t^{\text{cov}} = \cS(\{R_{t,i}^{\text{cov}}\}_{i \in [M]})$
          \STATE{\textcolor{DarkBlue}{\texttt{// Analyzer $\cA$ at the server}}}
         \STATE Compute private synchronized statistics $\widetilde{U}_{\text{syn}}= \cA^{\text{bias}}(S_t^{\text{bias}})$ and $\widetilde{W}_{\text{syn}}= \cA^{\text{cov}}(S_t^{\text{cov}})$
         \STATE{\textcolor{DarkBlue}{\texttt{// \emph{All} agents $i \in [M]$}}}
        \STATE Receive $\widetilde{W}_{\text{syn}}$ and $\widetilde{U}_{\text{syn}}$ from the server and reset $W_{i} = 0$, $U_{i} = 0$\alglinelabel{line:sync}
      \ENDIF  
  \ENDFOR
\end{algorithmic}
\end{algorithm}

We build upon the celebrated LinUCB algorithm~\cite{abbasi2011improved} by adopting a \emph{fixed-batch} schedule for synchronization among agents and designing a privacy protocol $\cP$ (Algorithm~\ref{alg: p-sdp}) for both silo-level LDP and SDP .
At each round $t$, each agent $i$ recommends an action $a_{t,i}$ to each local user following \emph{optimism in the face of uncertainty} principle. First, the agent computes a local estimate $\hat{\theta}_{t,i}$ based on \emph{all} available data to her, which includes previously synchronized data from all agents as well as her own new local data (\textcolor{blue}{line}~\ref{line:data} of Algorithm~\ref{alg:FedLUCB-SDP}). Then, the action $a_{t,i}$ is selected based on the LinUCB decision rule (\textcolor{blue}{line}~\ref{line:beta}), where a proper radius $\beta_{t,i}$ is chosen to balance between exploration and exploitation. After observing the reward $y_{t,i}$, each agent accumulates her own local data (bias vector $x_{t,i}y_{t,i}$ and covariance matrix $x_{t,i}x_{t,i}^{\top}$) and stores them in $U_i$ and $W_i$, respectively (\textcolor{blue}{line}~\ref{line:UW}). 
A communication is triggered between agents and central server whenever a batch ends -- we assume w.l.o.g. total rounds $T$ is divisible by batch size $B$ (\textcolor{blue}{line}~\ref{line:com}). During this process, a protocol $\cP = (\cR,\cS,\cA)$ assists in aggregating local data among all agents while guaranteeing privacy properties (to be discussed in detail soon). After communication, each agent receives latest synchronized data $\widetilde{W}_{\text{syn}}, \widetilde{U}_{\text{syn}}$ from the server (\textcolor{blue}{line}~\ref{line:sync}). Here, for any $t \!=\! kB, k\in[T/B]$, $\widetilde{W}_{\text{syn}}$ represents noisy version of all covariance matrices up to round $t$ from all agents (i.e., $\sum_{i=1}^M\sum_{s=1}^t x_{s,i}x_{s,i}^{\top}$) and similarly, $\widetilde{U}_{\text{syn}}$ represents noisy version of all bias vectors $\sum_{i=1}^M\sum_{s=1}^t x_{s,i}y_{s,i}$. Finally, each agent resets $W_i$ and $U_i$ so that they can be used to accumulate new local data for the next batch. Note that Algorithm~\ref{alg:FedLUCB-SDP} uses a fixed-batch (data-independent) communication schedule rather than the adaptive, data-dependent one in~\cite{dubey2020differentially}. This allows us to resolve privacy and communication issues in~\cite{dubey2020differentially} (to be discussed in Section~\ref{sec:theory}).


\subsection{Privacy Protocol}
We now turn to our privacy protocol $\cP$ (Algorithm~\ref{alg: p-sdp}), which helps to aggregate data among all agents under privacy constraints. 
The key component of $\cP$ is a \emph{distributed} version of the classic tree-based algorithm, which was originally designed for continual release of private sum statistics~\cite{chan2011private,dwork2010differential}. That is, given a stream of (multivariate) data $\gamma \!=\! (\gamma_1,\ldots, \gamma_K)$, one aims to release $s_k \!=\! \sum_{l=1}^k\gamma_l$ \emph{privately} for all $k \!\in\! [K]$. The tree-based mechanism constructs a complete binary tree $\cT$ in online manner. The leaf nodes contain data $\gamma_1$ to $\gamma_K$, and internal nodes contain the sum of all leaf nodes in its sub-tree, see Fig.~\ref{fig:tree-based} for an illustration. For any new arrival data $\gamma_k$, it only releases a tree node privately, which corresponds to a noisy partial sum (p-sum) between two time indices. As an example, take $k=6$, and hence the new arrival is $\gamma_6$. The tree-based mechanism first computes the p-sum $\sum[5,6] = \gamma_5 + \gamma_6$ (\textcolor{blue}{line}~\ref{line:p-sum} in Algorithm~\ref{alg: p-sdp}). Then, it adds a Gaussian noise with appropriate variance $\sigma_0^2$ to $\sum[5,6]$ and releases the noisy p-sum (\textcolor{blue}{line}~\ref{line:np-sum}). Finally, to compute the prefix sum statistic $\sum[1,6]$ privately, it simply adds noisy p-sums for $\sum[1,4]$ and $\sum[5,6]$, respectively. Reasons behind releasing and aggregating p-sums are that (i) each data point $\gamma_k$ only affects at most $1+\log K$ p-sums (useful for privacy) and (ii) each sum statistic $\sum[1,k]$ only involves at most $1+\log k$ p-sums (useful for utility).

\begin{figure}[ht]
\vspace{-3mm}
\centering
\begin{minipage}[b]{0.6\linewidth}
\centering
\begin{algorithm}[H]
\caption{$\mathcal{P}$, a privacy protocol used in Algorithm~\ref{alg:FedLUCB-SDP}}
\label{alg: p-sdp}
\begin{algorithmic}[1]
%
\STATE {\bfseries Procedure:} Local Randomizer $\mathcal{R}$ at each agent
\STATE{\textcolor{DarkBlue}{\texttt{//Input: stream data $(\gamma_1,\ldots,\gamma_K)$, $\epsilon \!>\! 0, \delta \!\in\! (0,1]$}}}
\begin{ALC@g}
    \FOR{$k\!=\!1, \ldots, K$}
       \STATE Express $k$ in binary form: $k = \sum_j \text{Bin}_j(k) \cdot 2^j$\alglinelabel{line:start}
       \STATE Find index of first one $i_k \!=\! \min\{j: \text{Bin}_j(k) \!=\! 1\}$
       \STATE Compute p-sum $\alpha_{i_k} \!=\! \sum_{j < i_k} \alpha_j \!+\! \gamma_k$\alglinelabel{line:p-sum}
       \STATE Output $\hat{\alpha}_{k} \!=\! \alpha_{i_k} \!+\! \cN(0,\!\sigma_0^2 I)$ \alglinelabel{line:np-sum}
  \ENDFOR
\end{ALC@g}
\STATE {\bfseries Procedure:} Analyzer $\mathcal{A}$ at server
\STATE{\textcolor{DarkBlue}{\texttt{//Input : data from $\cS\!:\!( \hat\alpha_{k,1},\ldots,\hat\alpha_{k,M}), k\! \in\! [K]$ }}}
\begin{ALC@g}
   \FOR{$k\!=\!1, \ldots, K$}
       \STATE Express $k$ in binary and find index of first one $i_k$
        \STATE Add noisy p-sums of all agents: $\widetilde{\alpha}_{i_k} = \sum_{i=1}^M \hat{\alpha}_{k,i}$\alglinelabel{line:add}
        \STATE Output: $\widetilde{s}_k = \sum_{j: \text{Bin}_j(k)=1} \widetilde{\alpha}_j$\alglinelabel{line:out}
  \ENDFOR
\end{ALC@g}
\end{algorithmic}
\end{algorithm}
\end{minipage}
\hspace{0mm}
\begin{minipage}[b]{0.38\linewidth}
\centering
\includegraphics[width=\linewidth]{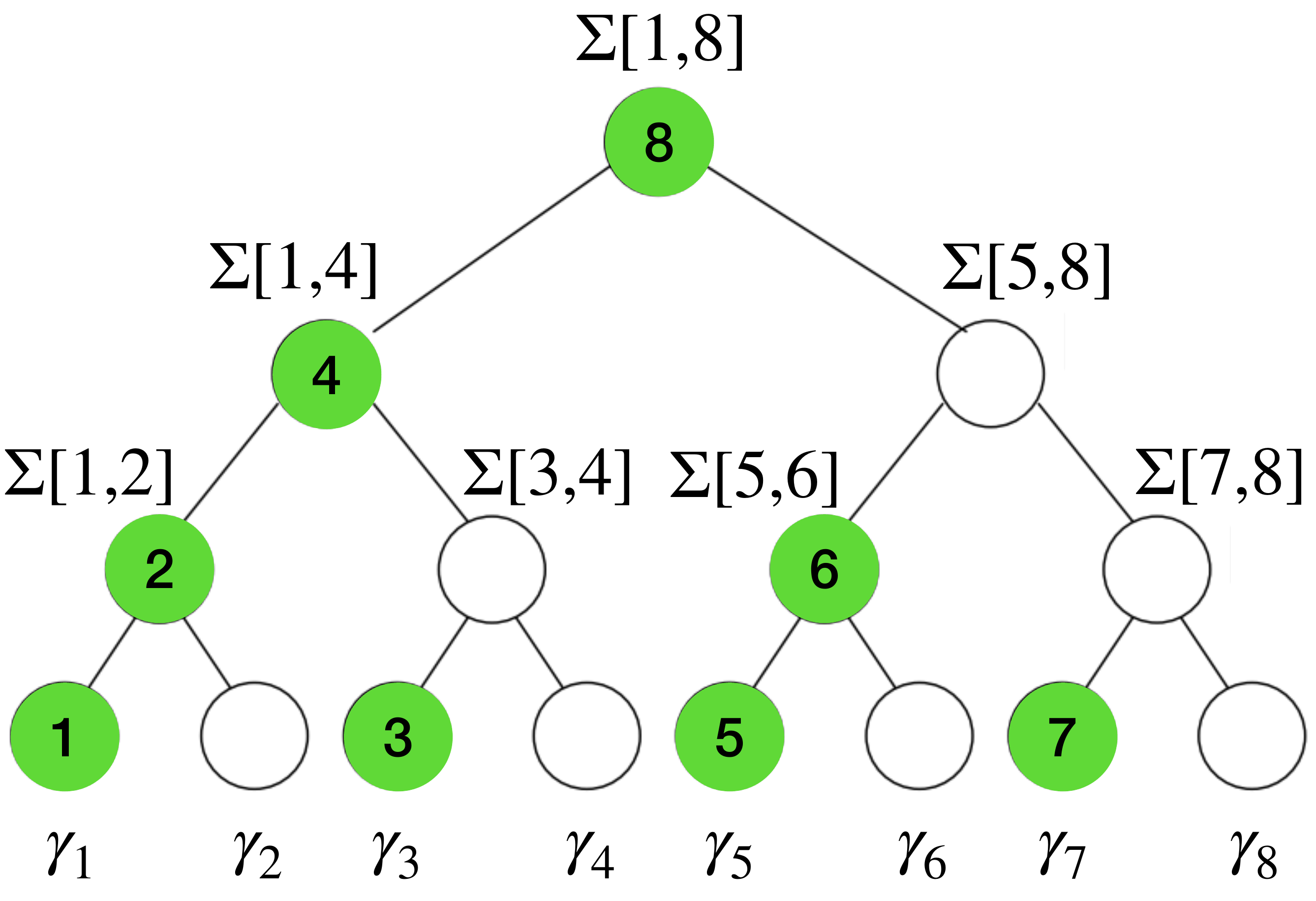}
\caption{\footnotesize{Illustration of the tree-based algorithm. Each leaf node is the stream data and each internal node is a p-sum $\Sigma[i,j] = \sum_{l=i}^j \gamma_l$. The green node corresponds to the newly computed p-sum at each $k$, i.e., $\alpha_{i_k}$ in Algorithm~\ref{alg: p-sdp}.} }
\label{fig:tree-based}
\end{minipage}
\vspace{-3mm}
\end{figure}

Our privacy protocol $\cP \!=\! (\cR,\cS,\cA)$ breaks down the above classic mechanism of releasing and aggregating p-sums into a local randomizer $\cR$ at each agent and an analyzer $\cA$ at the server, separately, while allowing for a possible shuffler in between to amplify privacy. 
For each $k$, the local randomizer $\cR$ at each agent computes and releases the noisy p-sum to a third-party $\cS$ (\textcolor{blue}{lines}~\ref{line:start}-\ref{line:np-sum}). $\cS$ can either be a shuffler that permutes the data uniformly at random (for SDP) or can simply be an identity mapping (for silo-level LDP).  It receives a total of $M$ noisy p-sums, one from each agent, and sends them to the central server. The analyzer $\cA$ at the server first adds these $M$ new noisy p-sums to synchronize them (\textcolor{blue}{line}~\ref{line:add}). It then privately releases the synchronized prefix sum by adding up all relevant synchronized p-sums as discussed in above paragraph  (\textcolor{blue}{line}~\ref{line:out}). 
Finally, we employ $\cP$ to Algorithm~\ref{alg:FedLUCB-SDP} by observing that local data $\gamma_{k,i}$ for batch $k$ and agent $i$ consists of bias vectors $\gamma_{k,i}^{\text{bias}} \!=\! \sum_{t=(k-1)B+1}^{kB} x_{t,i}y_{t,i}$ and covariance matrices $\gamma_{k,i}^{\text{cov}} \!=\! \sum_{t=(k-1)B+1}^{kB} x_{t,i}x_{t,i}^{\top}$, which are stored in $U_i$ and $W_i$ respectively. We denote the randomizer and analyzer for bias vectors as $\cR^{\text{bias}}$ and $\cA^{\text{bias}}$, and for covariance matrices as $\cR^{\text{cov}}$ and $\cA^{\text{cov}}$ in Algorithm~\ref{alg:FedLUCB-SDP}.

\begin{remark}[Sensitivity vs. norm]
Although the $l_2$ norm of each $\gamma_k$ in Algorithm~\ref{alg:FedLUCB-SDP} scales linearly with batch size $B$, its sensitivity is only one, i.e., changing one user's data only changes Euclidean norm of the vector $\gamma_{k,i}^{\text{bias}}$ and Frobenius norm of the matrix $\gamma_{k,i}^{\text{cov}}$ by at most one, due to our boundedness assumption.  It is this sensitivity that determines the noise level for privacy in Algorithm~\ref{alg: p-sdp}.
\end{remark}


\section{Theoretical Results}
\label{sec:theory}
We now show that our generic algorithmic framework (Algorithms~\ref{alg:FedLUCB-SDP} and ~\ref{alg: p-sdp}) enables us to establish regret bounds of federated LCBs under both silo-level LDP and SDP in a simple and unified way. Proofs of all the results are deferred to Appendices~\ref{app:LDP} and \ref{app:SDP} due to space constraint.  
\subsection{Federated LCBs under Silo-level LDP}
\label{sec:main-LDP}
We first present the performance of Algorithm~\ref{alg:FedLUCB-SDP} under silo-level LDP, hence fixing the privacy, regret and communication issues of the state-of-the-art algorithm in~\cite{dubey2020differentially}. The key idea is to inject Gaussian noise with proper variance ($\sigma_0^2$ in Algorithm~\ref{alg: p-sdp}) when releasing a p-sum such that all the released p-sums up to any batch $k \!\in\! [K]$ is $(\epsilon,\delta)$-DP for any agent $i \!\in\! [M]$. Then, by Definition~\ref{def:silo-LDP-general}, it achieves silo-level LDP. Note that in this case, there is no shuffler, which is equivalent to the fact that the third party $\cS$ in $\cP$ is simply an identity mapping, denoted by $\cI$. The following result states this formally.

\begin{theorem}[Performance under silo-level LDP]
\label{thm:LDP}
Fix batch size $B$, privacy budgets $\epsilon \!>\!0$, $\delta \!\in\! (0,1)$. Let $\cP \!=\! (\cR,\cI,\cA)$ be a protocol given by Algorithm~\ref{alg: p-sdp}
with parameters $\sigma_0^2 \!=\! 8 \kappa  \cdot\frac{(\log(2/\delta) + \epsilon)}{\epsilon^2}$, where
$\kappa\!=\! 1\!+\!\log(T/B)$.
Then, under Assumption~\ref{ass:bounded}, Algorithm~\ref{alg:FedLUCB-SDP} instantiated with $\cP$
satisfies $(\epsilon,\delta)$-silo-level LDP. Moreover, for any $\alpha \!\in\! (0,1]$, there exist choices of $\lambda$ and $\{\beta_{t,i}\}_{t,i}$ such that, with probability at least $1-\alpha$, it enjoys a group regret 
\vspace{-1mm}
\begin{align*}
    R_M(T) \!=\! O\left({d M  B}\log T \!+\! d\sqrt{MT}\log(MT/\alpha)\right) \!+\!\widetilde{O}\!\left(\!\sqrt{T}\frac{(Md)^{3/4}\log^{1/4}(1/\delta)}{\sqrt{\epsilon}}\log^{1/4}\!\left(\frac{T}{B\alpha}\!\right)\!\right).
\end{align*}
\end{theorem}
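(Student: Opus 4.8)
The plan is to prove the privacy and regret claims separately, since the fixed-batch schedule cleanly decouples them: privacy follows from the tree-based mechanism applied per silo, while the regret follows from a noisy-Gram-matrix version of the LinUCB analysis.

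\textbf{Privacy.} I would first reduce $(\epsilon,\delta)$-silo-level LDP to a per-silo statement: because the shuffler is the identity $\cI$ and, by the conditional-independence assumption in Definition~\ref{def:silo-LDP-general}, it suffices to bound the privacy loss of silo $i$'s transcript $Z_i$ as a function of $D_i$ alone. Here $Z_i$ is exactly the stream of noisy p-sums $\{\hat\alpha_{k,i}\}_{k\in[K]}$ emitted by $\cR$, with $K = T/B$ batches. Two facts drive the bound: by the sensitivity-vs-norm remark, changing one user of silo $i$ alters the batch datum $\gamma_{k,i}$ for exactly one $k$ and changes its Euclidean/Frobenius norm by at most one, so each p-sum has $\ell_2$-sensitivity $1$; and by the binary-tree structure, that single batch datum enters at most $\kappa = 1 + \log(T/B)$ of the released p-sums. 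Each p-sum carries fresh noise $\cN(0,\sigma_0^2 I)$ and is $\tfrac{1}{2\sigma_0^2}$-zCDP, so composing the at most $\kappa$ affected releases is $\tfrac{\kappa}{2\sigma_0^2}$-zCDP; I would then convert to $(\epsilon,\delta)$-DP and verify that $\sigma_0^2 = 8\kappa(\log(2/\delta)+\epsilon)/\epsilon^2$ drives the loss down to $\epsilon$. Applying this to every silo yields silo-level LDP.

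\textbf{Noise accounting.} The crux of the regret bound is sizing the total privacy noise entering the statistics the agents actually use---precisely the quantity miscalculated in \cite{dubey2020differentially}. At each synchronization the analyzer builds $\widetilde{W}_{\mathrm{syn}}$ and $\widetilde{U}_{\mathrm{syn}}$ by summing, over at most $\kappa$ tree nodes, the aggregated p-sums $\widetilde\alpha_j = \sum_{i=1}^M \hat\alpha_{j,i}$. Each aggregated node carries per-coordinate variance $M\sigma_0^2$ (one independent $\cN(0,\sigma_0^2)$ per silo), so the total noise matrix $H$ added to the true covariance has symmetric-Gaussian entries of variance $\Theta(\kappa M\sigma_0^2)$, and the bias-noise vector $h$ has coordinate variance $\Theta(\kappa M\sigma_0^2)$. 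A standard Gaussian operator-norm bound then gives $\norm{H}_{\mathrm{op}} = \widetilde{O}(\sqrt{d\kappa M}\,\sigma_0)$ and $\norm{h}_2 = \widetilde{O}(\sqrt{d\kappa M}\,\sigma_0)$, uniformly over the $T/B$ synchronizations, with probability $1-\alpha$. The extra $\sqrt{M}$ relative to single-silo noise is exactly what produces the $M^{3/4}$ (rather than $M^{1/2}$) dependence.

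\textbf{Confidence ellipsoid, summation, and batching.} Following the technique of \cite{shariff2018differentially}, I would set $\lambda = \Theta(\norm{H}_{\mathrm{op}})$ so that $V_{t,i} = \lambda I + \widetilde{W}_{\mathrm{syn}} + W_i \succeq \tfrac{\lambda}{2} I \succ 0$ despite the indefinite perturbation $H$. Treating $H,h$ as fixed perturbations bounded as above, a self-normalized concentration argument gives $\norm{\hat\theta_{t,i} - \theta^*}_{V_{t,i}} \le \beta_{t,i}$ with $\beta_{t,i}$ assembled from three pieces: the Abbasi-Yadkori reward-noise term $O(\sqrt{d\log(MT/\alpha)})$, the regularization bias $\sqrt{\lambda}\norm{\theta^*}_2$, and the privacy term $\norm{h}_2/\sqrt{\lambda}$; with $\lambda \asymp \norm{H}_{\mathrm{op}}$ the dominant privacy contribution is $\beta^{\mathrm{priv}} = \widetilde{O}(\sqrt{\norm{H}_{\mathrm{op}}})$. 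Optimism bounds the per-user regret by $2\beta_{t,i}\norm{x_{t,i}}_{V_{t,i}^{-1}}$ plus a correction for using stale pre-synchronization statistics within a batch. Summing over all $MT$ interactions and applying the elliptical-potential lemma gives $\sum_{i,t}\norm{x_{t,i}}_{V_{t,i}^{-1}} = O(\sqrt{MT\,d\log(MT)})$, so the statistical part of $\beta$ yields the super-agent term $O(d\sqrt{MT}\log(MT/\alpha))$ and, substituting $\sigma_0$, the privacy part yields the cost-of-privacy term $\widetilde{O}(\sqrt{T}\,(Md)^{3/4}\log^{1/4}(1/\delta)/\sqrt{\epsilon})$. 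Finally, since each agent's usable statistics lag by at most $MB$ unshared observations per batch, a delayed-feedback/determinant argument over the $T/B$ batches adds the $O(dMB\log T)$ term; collecting the three pieces gives the stated bound.

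\textbf{Main obstacle.} I expect the hard part to be the regret analysis under the indefinite, $M$-inflated privacy noise: establishing a valid private confidence ellipsoid (the PSD correction $\lambda \asymp \norm{H}_{\mathrm{op}}$ together with the self-normalized bound in the presence of the tree-correlated perturbations $H,h$), and then tracking the $\sqrt{M}$ inflation through $\lambda$, $\beta$, and the elliptical potential so the privacy cost emerges as $(Md)^{3/4}$ rather than $(Md)^{1/2}$. The batch-delay term, though routine, must be reconciled with the privacy noise so it does not inflate beyond $O(dMB\log T)$.
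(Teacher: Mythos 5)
Your proposal is correct and takes essentially the same route as the paper: the privacy argument via per-node zCDP, the $\kappa$-fold composition from the binary-tree structure, and the conversion to $(\epsilon,\delta)$-DP is exactly the paper's proof (you would only need to add the budget split across the bias and covariance streams to recover the exact constant $8\kappa(\log(2/\delta)+\epsilon)/\epsilon^2$), and your noise accounting (total variance $M\kappa\sigma_0^2$), the choice $\lambda \asymp \norm{H}_{\mathrm{op}}$, and the good/bad-epoch elliptical-potential analysis reproduce the paper's generic regret lemmas. No gaps.
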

\vspace{-3mm}
The first term in the above regret bound doesn't depend on privacy budgets $\epsilon, \delta$, and serves as a representative regret bound for federated LCBs without privacy constraint. The second term is the dominant one which depends on $\epsilon,\delta$ and denotes the cost of privacy due to injected noise.


\begin{corollary}\label{cor:LDP}
Setting $B \!=\! \sqrt{T/M}$, Algorithm~\ref{alg:FedLUCB-SDP} achieves $\widetilde{O}\left(d\sqrt{MT} + \!\sqrt{T}\frac{(Md)^{3/4}\log^{1/4}(1/\delta)} {\sqrt{\epsilon}}\right)$ group regret, with total $\sqrt{MT}$ synchronizations under $(\epsilon,\delta)$-silo-level LDP. 
\end{corollary}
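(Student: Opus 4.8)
\emph{Privacy.} The plan is to reduce silo-level LDP to the privacy of the tree-based release at a single silo. I fix a silo $i$ and two adjacent datasets $D_i,D_i'$ differing in the user at round $\tau$; this round lies in a single batch $k=\lceil \tau/B\rceil$, so only the batch statistic $(\gamma_{k,i}^{\text{bias}},\gamma_{k,i}^{\text{cov}})$ changes, and by the sensitivity remark its $\ell_2$/Frobenius sensitivity is $O(1)$. In the binary-tree mechanism this single leaf perturbs at most $\kappa=1+\log(T/B)$ of the released noisy p-sums $\hat\alpha_k$, each formed by adding $\cN(0,\sigma_0^2 I)$ to both the bias and covariance streams. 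I would therefore view silo $i$'s entire transcript $Z_i$ (which, for $\cS=\cI$, is a pure post-processing of these p-sums) as an adaptive composition of at most $\kappa$ Gaussian mechanisms of bounded sensitivity. Converting each to zero-concentrated DP, composing to total $\rho=O(\kappa/\sigma_0^2)$-zCDP, and applying the zCDP-to-$(\epsilon,\delta)$ conversion, the choice $\sigma_0^2 = 8\kappa(\log(2/\delta)+\epsilon)/\epsilon^2$ yields $(\epsilon,\delta)$-DP for $Z_i$; I would verify that the constant $8$ leaves slack in $\rho+2\sqrt{\rho\log(1/\delta)}\le\epsilon$. Since silo $i$'s injected noise is independent of the other silos and its messages depend on $Z_{-i}$ only through post-processing, the conditional guarantee of Definition~\ref{def:silo-LDP-general} follows, giving $(\epsilon,\delta)$-silo-level LDP.

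\emph{Regret: confidence ellipsoid.} The heart of the argument is a LinUCB confidence bound that absorbs the privacy noise. Writing the synchronized statistics as $\widetilde W_{\text{syn}}=W^{\star}+H$ and $\widetilde U_{\text{syn}}=U^{\star}+h$ with $H,h$ the accumulated tree noise, the key quantitative step is that each aggregated p-sum sums $M$ independent agent noises (line~\ref{line:add}) and each prefix sum collects at most $\kappa$ of them, so $H,h$ have per-coordinate variance $O(\kappa M\sigma_0^2)$ --- it is exactly this factor $M$ (missed in~\cite{dubey2020differentially}) that drives the $M^{3/4}$ rate. A high-probability operator/Euclidean-norm bound then gives $\max(\norm{H}_{\mathrm{op}},\norm{h}_2)\le\Lambda$ with $\Lambda=\widetilde O(\sigma_0\sqrt{\kappa M}(\sqrt d+\sqrt{\log(1/\alpha)}))$. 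I would set $\lambda\asymp\Lambda$ so that $V_{t,i}\succeq(\lambda-\Lambda)I\succ0$ and $V_{t,i}\preceq 2\bar V_{t,i}$ for the noiseless regularized matrix $\bar V_{t,i}$. Decomposing
\begin{align*}
\hat\theta_{t,i}-\theta^\star = V_{t,i}^{-1}\big(\xi + h - \lambda\theta^\star - H\theta^\star\big),\qquad \xi=\textstyle\sum_s x_s\eta_s,
\end{align*}
I would bound $\norm{\xi}_{V_{t,i}^{-1}}$ by the self-normalized martingale inequality applied through the noiseless $\bar V_{t,i}$ (legitimate precisely because $H$ is not adapted), and control the three remaining terms crudely by $\Lambda/\sqrt{\lambda}$ and $\sqrt{\lambda}$. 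This yields $\beta_{t,i}=O\big(\sqrt{d\log(MT/\alpha)}+\sqrt{\Lambda}\big)$ such that $\theta^\star$ lies in the ellipsoid uniformly over $t,i$ with probability $\ge 1-\alpha$.

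\emph{Regret: summation.} With the ellipsoid in hand, optimism gives instantaneous regret $\le 2\beta_{t,i}\norm{x_{t,i}}_{V_{t,i}^{-1}}$. Summing over the $MT$ pulls, I would split into the standard elliptical-potential sum $\sum_{t,i}\norm{x_{t,i}}^2_{V_{t,i}^{-1}}=O(d\log(MT))$ and a correction for the fixed-batch delay: within a batch each agent acts on a stale $V_{t,i}$ that omits the at most $MB$ feature updates of the current batch, and bounding the resulting determinant ratio contributes the $O(dMB\log T)$ term. Plugging in $\beta_{t,i}$ and applying Cauchy--Schwarz over $\sqrt{MT}$ pulls turns the $\sqrt{d\log}$ part into $d\sqrt{MT}\log(MT/\alpha)$ and the $\sqrt{\Lambda}$ part into the privacy term $\widetilde O(\sqrt{T}(Md)^{3/4}\log^{1/4}(1/\delta)/\sqrt{\epsilon})$ after substituting $\sigma_0$ and $\Lambda$, with the $\log^{1/4}(T/(B\alpha))$ originating from the high-probability noise tail. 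The main obstacle is this last step together with the delay analysis: one must simultaneously keep the non-adapted Gram noise controlled (forcing $\lambda\asymp\Lambda$ and the $\bar V$ detour) and bound the stale-matrix determinant ratio, since a loose delay bound would inflate the leading non-private term. Corollary~\ref{cor:LDP} then follows by taking $B=\sqrt{T/M}$, which balances $dMB\log T=d\sqrt{MT}\log T$ against the other terms, makes the number of synchronizations $T/B=\sqrt{MT}$, and keeps $\kappa=O(\log(MT))$ inside $\widetilde O$.
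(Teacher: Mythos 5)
Your proposal is correct and follows essentially the same route as the paper: the same zCDP-based composition over the $\kappa$ tree nodes touched by one user for privacy, the same noise accounting ($M$ silos times $\kappa$ p-sums per prefix sum, giving total variance $M\kappa\sigma_0^2$), the same privacy-noise-augmented confidence ellipsoid with $\lambda\asymp\Lambda$, and the same elliptical-potential-plus-batch-delay decomposition (the paper phrases the delay correction as a good/bad-epoch split with at most $O(d\log T)$ bad epochs each costing $MB$). The only detail you gloss over is that the budget is split between the bias and covariance streams, which is where the constant $8$ in $\sigma_0^2$ comes from.
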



\textbf{Comparisons with related work.}
First, we avoid privacy leakage and gap in communication analysis of~\cite{dubey2020differentially} by adopting data-independent synchronization. This, however, leads to an $O(\sqrt{T})$ communication cost rather than the reported $O(\log T)$ cost of~\cite{dubey2020differentially}. It remains open to design an data-adaptive communication schedule with a correct performance analysis (see Appendix~\ref{app:priv-com} for more details). 
We also show that privacy cost scales as $O(M^{3/4})$ with number of agents $M$, correcting the reported $\sqrt{M}$ scaling of ~\cite{dubey2020differentially}. Next, we compare our result with that of a (super) single agent running for $MT$ rounds under the central model DP (i.e., where central server is trusted), which serves as a benchmark for our results. As shown in~\cite{shariff2018differentially,pmlr-v162-chowdhury22a}, the total regret for such a single agent is $\widetilde{O}\left(d\sqrt{MT} + \!\sqrt{MT}\frac{d^{3/4}\log^{1/4}(1/\delta)}{\sqrt{\epsilon}}\right)$. Comparing this bound with Corollary~\ref{cor:LDP}, we observe that the privacy cost of federated LCBs under silo-level LDP is a multiplicative $M^{1/4}$ factor higher than a super agent under central DP. This observation motivates us to consider SDP in the next section.

\subsection{Federated LCBs under SDP}
\label{sec:main-SDP}
We now close the above $M^{1/4}$ gap in the privacy cost under silo-level LDP compared to that achieved by a super single agent (with a truseted central server). To do so, we consider federated LCBs under SDP, which still enjoys the nice feature of silo-level LDP that the central server is not trusted. Thanks to our flexible privacy protocol $\cP$, the only change needed compared to silo-level LDP is the introduction of a shuffler $\cS$ to amplify privacy and adjustment of the privacy noise $\sigma_0^2$ accordingly. 
\begin{theorem}[Performance under SDP via amplification]
\label{thm:SDP}
 Fix batch size $B$ and let $\kappa\!=\! 1\!+\!\log(T/B)$. Let $\cP \!=\! (\cR,\cS,\cA)$ be a protocol given by Algorithm~\ref{alg: p-sdp}. Then, under Assumption~\ref{ass:bounded}, there exist constants $C_1,C_2>0$ such that for any 
 $\epsilon \!\le\!  \frac{\sqrt{\kappa}}{C_1T\sqrt{M}}$, $\delta \!\le\!  \frac{\kappa}{C_2 T}$, Algorithm~\ref{alg:FedLUCB-SDP} instantiated with $\cP$ and $\sigma_0^2 \!=\! O\left(\frac{2 \kappa \log(1/\delta)\log(\kappa/(\delta T))\log(M\kappa/\delta)}{\epsilon^2 M}\right)$,
   satisfies $(\epsilon,\delta)$-SDP. Moreover, for any $\alpha \in (0,1]$, there exist choices of $\lambda$ and $\{\beta_{t,i}\}_{t,i}$ such that, with a probability at least $1-\alpha$, it enjoys a group regret 
   \begin{align*}
    R_M(T) \!=\! O\left({d M  B}\log T \!+\! d\sqrt{MT}\log(MT/\alpha)\right) \!+\!\widetilde{O}\!\left(\!d^{3/4}\sqrt{MT}\frac{\log^{3/4}(M\kappa/\delta)}{\sqrt{\epsilon}}\log^{1/4}\!\left(\frac{T}{B\alpha}\!\right)\!\right).
\end{align*}
\end{theorem}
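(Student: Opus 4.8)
The plan is to reduce Theorem~\ref{thm:SDP} to two essentially decoupled pieces: a privacy statement and a regret statement, where the regret statement is inherited almost verbatim from the analysis underlying Theorem~\ref{thm:LDP}, and only the privacy argument genuinely requires new work. First I would recall the generic regret bound that the framework of Section~\ref{sec:alg} provides: by construction, Algorithm~\ref{alg:FedLUCB-SDP} is a batched LinUCB whose regret depends on the data only through the total injected privacy noise appearing in the synchronized statistics $\widetilde{W}_{\text{syn}}, \widetilde{U}_{\text{syn}}$. Since the shuffler $\cS$ only permutes messages and the analyzer $\cA$ sums the $M$ noisy p-sums exactly as in the silo-level LDP case, the aggregate noise entering the server's statistics has the same structure as before, with per-coordinate variance governed by $\sigma_0^2$ and the tree-depth factor $\kappa$. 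Consequently the confidence radii $\{\beta_{t,i}\}$ can be chosen exactly as in the proof of Theorem~\ref{thm:LDP}, and the regret decomposes into the same non-private term $O(dMB\log T + d\sqrt{MT}\log(MT/\alpha))$ plus a privacy term that scales with $\sqrt{M\kappa\,\sigma_0^2}$ (the standard deviation of the summed noise across $M$ agents along a prefix path of length $\kappa$). Substituting the new value $\sigma_0^2 = O\!\left(\kappa\log(1/\delta)\log(\kappa/(\delta T))\log(M\kappa/\delta)/(\epsilon^2 M)\right)$ — note the crucial extra $1/M$ factor relative to the LDP case — the $M$ from summing the noise cancels one factor of $M$ inside $\sigma_0^2$, yielding the privacy cost $\widetilde{O}\!\left(d^{3/4}\sqrt{MT}\,\log^{3/4}(M\kappa/\delta)/\sqrt{\epsilon}\cdot\log^{1/4}(T/(B\alpha))\right)$ claimed in the theorem.

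The substantive step is establishing $(\epsilon,\delta)$-SDP for the stated (much smaller) value of $\sigma_0^2$. The plan here is to invoke the privacy-amplification-via-shuffling result advertised in the introduction (``a new result on privacy amplification via shuffling for DP mechanisms''). Concretely, I would first fix an arbitrary batch index $k$ and argue about the collection of messages the shuffler emits at that synchronization. Each agent's local randomizer $\cR$ outputs a single noisy p-sum $\hat\alpha_{k,i}$ whose noise level $\sigma_0^2$ only guarantees a weak, per-message local DP parameter $\epsilon_0$; I would compute this $\epsilon_0$ from the Gaussian mechanism applied to a sensitivity-one quantity (using the ``Sensitivity vs.\ norm'' remark to justify sensitivity one). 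Then, applying the amplification lemma to the shuffling of the $M$ messages, the effective privacy parameter improves from $\epsilon_0$ to roughly $\epsilon_0\sqrt{\log(1/\delta)/M}$, which is precisely where the $\sqrt{1/M}$ saving — and hence the smaller admissible $\sigma_0^2$ — comes from. Because a single user's change affects at most $\kappa = 1+\log(T/B)$ p-sums across the whole horizon (the defining property of the tree-based mechanism), I would then compose the per-batch guarantees across these $\kappa$ affected nodes via advanced composition for approximate DP, absorbing the resulting $\log(1/\delta)$ and $\kappa$ factors, and finally verify that the stated constraints $\epsilon \le \sqrt{\kappa}/(C_1 T\sqrt{M})$ and $\delta \le \kappa/(C_2 T)$ are exactly what is needed for the amplification lemma's preconditions (small-$\epsilon_0$ regime) to hold and for the composition to close at the target $(\epsilon,\delta)$.

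The main obstacle I anticipate is the bookkeeping in the privacy composition, specifically reconciling three sources of privacy loss simultaneously: the per-message Gaussian noise, the amplification-by-shuffling gain of $\sqrt{1/M}$, and the $\kappa$-fold composition across the tree nodes a single user touches. The delicate point is that amplification is applied \emph{per shuffled batch} (there are $T/B$ shuffles), but the tree structure means a user affects only $\kappa$ of the emitted p-sums, so one must be careful to compose only over the affected nodes rather than over all $T/B$ batches — otherwise the bound would be loose by a polynomial factor. Getting the precondition $\epsilon \le \sqrt{\kappa}/(C_1 T \sqrt M)$ to emerge cleanly from the amplification lemma (which typically requires the local $\epsilon_0$ to be $O(1)$ or smaller and $\delta$ sufficiently small relative to the number of shuffled items $M$) is where the constants $C_1, C_2$ are pinned down, and it is the step most prone to subtle error. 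I would therefore carry out the amplification-and-composition calculation explicitly for one coordinate/one tree node first, confirm the resulting $\sigma_0^2$ matches the theorem's expression up to constants, and only then assemble the global guarantee and feed $\sigma_0^2$ back into the regret bound to complete the proof.
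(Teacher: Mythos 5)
Your overall architecture matches the paper's proof: the regret half is handled exactly as in the paper, by feeding the total noise level $M\kappa\sigma_0^2$ (sum over $M$ silos of $\kappa$ tree nodes per prefix sum, each with variance $\sigma_0^2$) into the generic sub-Gaussian regret lemma, and the extra $1/M$ inside the new $\sigma_0^2$ cancels one factor of $M$ to give the $\sqrt{MT}$ privacy cost. Your privacy half also has the right skeleton: per-node Gaussian DP at level $(\hat\epsilon_0,\hat\delta_0)$, amplification by shuffling across the $M$ silos, then advanced composition over only the $\kappa=1+\log(T/B)$ tree nodes that a single user touches (not over all $T/B$ batches) -- this is precisely the paper's accounting, which solves for $\hat\epsilon = \epsilon/(2\sqrt{2\kappa\log(2/\delta)})$ and $\hat\delta=\delta/(2\kappa)$ and back-substitutes to get $\sigma_0^2$.

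The genuine gap is in how you plan to invoke amplification. You describe the precondition as ``the local $\epsilon_0$ to be $O(1)$ or smaller and $\delta$ sufficiently small relative to the number of shuffled items $M$,'' which is the precondition of the \emph{standard} amplification-by-shuffling lemmas -- and those lemmas do not apply here, because each silo's randomizer $\cR$ is a DP (not LDP) mechanism operating on a local dataset of $n=T$ users: a single tree node can aggregate up to $T$ of a silo's data points, so changing ``any pair'' of local datasets (as amplification requires when creating clones) can move the p-sum by a factor of $n$ times the adjacent-pair sensitivity. The paper's proof rests on a new amplification lemma (Lemma~\ref{lem:amp}) specific to Gaussian mechanisms with $n>1$, whose precondition is $\hat\epsilon_0 \le 1/n = 1/T$ (and $\hat\delta_0 \le 1/(MT)$), and it is exactly this $1/T$ requirement -- not an $O(1)$ one -- that, after composition, produces the admissible range $\epsilon \le \sqrt{\kappa}/(C_1 T\sqrt{M})$ in the theorem. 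With your stated precondition you would derive a much wider admissible range and would have no valid lemma to cite for the $1/\sqrt{M}$ gain; with group privacy (the only off-the-shelf fix for $n>1$) you would instead blow up $\delta$ by a factor of $n$. So the missing idea is the lemma itself: a direct sensitivity argument for the Gaussian randomizer showing that the ``clone'' distributions for arbitrary (non-adjacent) local datasets are $(n\hat\epsilon_0,\hat\delta_0)$-indistinguishable, which is what lets the amplification go through while keeping $\delta^r = O(M\hat\delta_0)$ in check. Everything downstream of that lemma in your plan is correct.
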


\begin{corollary}
\label{cor:SDP}
Setting $B = \sqrt{T/M}$, Algorithm~\ref{alg:FedLUCB-SDP} achieves  $\widetilde{O}\left(d\sqrt{MT} + \!d^{3/4}\sqrt{MT}\frac{\log^{3/4}(M\kappa/\delta)}{\sqrt{\epsilon}}\right)$ group regret, with total $\sqrt{MT}$ synchronizations under $(\epsilon,\delta)$-SDP.
\end{corollary}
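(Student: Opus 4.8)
The plan is to obtain Corollary~\ref{cor:SDP} as an immediate specialization of Theorem~\ref{thm:SDP}, which already delivers both the SDP guarantee and a regret bound valid for an arbitrary (data-independent) batch size $B$. All that remains is to substitute the specific choice $B = \sqrt{T/M}$ and simplify. First I would isolate the two quantities in the theorem that depend on $B$, namely the batch-dependent leading term $dMB\log T$ and the tree-depth parameter $\kappa = 1 + \log(T/B)$, and track how each behaves under this substitution.

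Under $B = \sqrt{T/M}$, the batch term becomes $dMB\log T = dM\sqrt{T/M}\log T = d\sqrt{MT}\log T$, which is of the same order (up to the $\log T$ factor) as the non-private term $d\sqrt{MT}\log(MT/\alpha)$ already present in the bound; hence both collapse into a single $\widetilde{O}(d\sqrt{MT})$ contribution. Simultaneously $\kappa = 1 + \log(\sqrt{MT}) = 1 + \frac{1}{2}\log(MT)$ is purely logarithmic in $M$ and $T$, and $\log^{1/4}(T/(B\alpha))$ is likewise logarithmic, so both are absorbed into the $\widetilde{O}(\cdot)$ notation. What survives from the privacy-cost term is $\widetilde{O}\big(d^{3/4}\sqrt{MT}\,\log^{3/4}(M\kappa/\delta)/\sqrt{\epsilon}\big)$, and adding back the merged non-private term yields exactly the claimed bound.

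For the communication count I would simply note that Algorithm~\ref{alg:FedLUCB-SDP} triggers a synchronization once every $B$ rounds, so the total number of synchronizations is $T/B = T/\sqrt{T/M} = \sqrt{MT}$. The one genuinely substantive point is the rationale for the choice $B = \sqrt{T/M}$: since the synchronization count $T/B$ decreases in $B$, we want $B$ as large as possible, but increasing $B$ inflates the batch-dependent regret $dMB\log T$. The value $B = \sqrt{T/M}$ is precisely the largest batch size for which $dMB\log T$ does not exceed the minimax-order term $d\sqrt{MT}$; it equalizes these two quantities and pins the communication at $\sqrt{MT}$, so it is the choice that minimizes communication without inflating the regret order.

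I expect no real technical obstacle, as the heavy lifting (the SDP guarantee via amplification and the generic regret decomposition) is already carried out in Theorem~\ref{thm:SDP}. The only thing I would verify is consistency: the admissible ranges $\epsilon \le \sqrt{\kappa}/(C_1 T\sqrt{M})$ and $\delta \le \kappa/(C_2 T)$ stated in Theorem~\ref{thm:SDP} must remain well-defined after fixing $B$, which they do since $\kappa$ stays positive and logarithmic under this choice, so the privacy guarantee and the regret bound both carry over verbatim.
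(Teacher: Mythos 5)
Your proposal is correct and matches the paper's (implicit) argument exactly: the corollary is obtained by direct substitution of $B=\sqrt{T/M}$ into Theorem~\ref{thm:SDP}, with $dMB\log T = d\sqrt{MT}\log T$ merging into the non-private term, $\kappa$ and $\log^{1/4}(T/(B\alpha))$ absorbed into $\widetilde{O}(\cdot)$, and $T/B=\sqrt{MT}$ synchronizations. The additional remarks on why this $B$ is the communication-minimizing choice and on the continued validity of the admissible $(\epsilon,\delta)$ ranges are sound, though the paper does not spell them out.
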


Corollary~\ref{cor:SDP} asserts that privacy cost of federated LCBs under SDP matches that of a super single agent under central DP (up to a log factor in $T, M,\delta$). 

\textbf{Comparison with existing SDP analysis.} A crucial observation here is that the above result doesn't directly follow from existing amplification lemmas. In particular, prior results on privacy amplification ~\cite{feldman2022hiding,erlingsson2019amplification,cheu2019distributed,balle2019privacy} show that shuffling the outputs of $M$ $(\epsilon,\delta)$-LDP algorithms achieve roughly $1/\sqrt{M}$ factor amplification in privacy for small $\epsilon$ -- the key to close the aforementioned gap in privacy cost. However, these amplification results apply \emph{only} when each mechanism is LDP \emph{in the standard sense, i.e., they operate on a dataset of size $n=1$}. This doesn't hold in our case since the dataset at each silo is a stream of $T$ points. \citet{lowy2021private} adopt group privacy to handle the case of $n>1$, which can amplify any general DP mechanism but comes at the expense of a large increase in $\delta$.
To avoid this, we prove a \emph{new amplification lemma} specific to Gaussian DP mechanisms operating on datasets with size $n\!>\!1$. This helps us achieve the required $1/\sqrt{M}$ amplification in $\epsilon$ while keeping the increase in $\delta$ in check. The key idea behind our new lemma is to directly analyze the sensitivity when creating ``clones'' as in~\cite{feldman2022hiding}, but now by accounting for the fact that all $n\!>\!1$ points can be different (see Appendix~\ref{app:SDP} for a formal statement of the lemma).

\subsubsection{SDP guarantee for a wide range of privacy parameters} 

One limitation of attaining SDP via amplification is that the privacy guarantee holds only for small values of $\epsilon,\delta$ (see Theorem~\ref{thm:SDP}). In this section, we propose an alternative privacy protocol to resolute this limitation. This new protocol leverages the same binary tree structure as in Algorithm~\ref{alg: p-sdp} for releasing and aggregating p-sums, but it employs different local randomizers and analyzers for computing (noisy) synchronized p-sums of bias vectors and covariance matrices ($\widetilde{\alpha}_{i_k}$ in Algorithm~\ref{alg: p-sdp}). Specifically, it applies the vector sum mechanism $\cP_{\text{Vec}}$ of \cite{cheu2021shuffle}, which essentially take $n$ vectors as inputs and outputs their noisy sum. Here privacy is ensured by injecting suitable binomial noise to a fixed-point encoding of each vector entry, which depends on $\epsilon,\delta$ and $n$. 

In our case, one cannot directly aggregate $M$ p-sums using $\cP_{\text{Vec}}$ with $n=M$.
This is because each p-sum would then have a large norm ($O(T)$ at the worst case), which would introduce a large amount of privacy noise (cf. Theorem 3.2 in~\cite{cheu2021shuffle}), resulting in worse utility (regret). Instead, we first expand each p-sum resulting in $O(B)$ data points (bias vectors and covariance matrices) each with $O(1)$ norm, where $B$ is the size of each batch. Then, we aggregate all $n=O(BM)$ of those data points using $\cP_{\text{Vec}}$ mechanism (one each for bias vectors and covariance matrices). For example, consider summing bias vectors during batch $k=6$ and refer back to Fig.~\ref{fig:tree-based} for illustration. Here, the p-sum for each agent is given by $\sum[5,6] = \gamma_5 + \gamma_6$ (see \textcolor{blue}{line}~\ref{line:p-sum} in Algorithm~\ref{alg: p-sdp}), the expansion of which results in $2B$ bias vectors ($B$ each for batch 5 and 6). A noisy sum of $n=2BM$ bias vectors is then computed using $\cP_{\text{Vec}}$. We denote the entire mechanism as $\cP^{\cT}_{\text{Vec}}$ -- see Algorithm~\ref{alg: p-vec-T} in Appendix~\ref{app:pvec} for pseudo-code and complete description.

Now, the key intuition behind using $\cP_{\text{Vec}}$ as a building block is that it allows us to compute private vector sums under the shuffle model using nearly the same amount of noise as in the central model. In other words, it ``simulates'' the privacy noise introduced in vector summation under central model using a shuffler. This, in turn, helps us match the regret of a super single agent under central DP while guaranteeing (strictly stronger) SDP. Specifically, we have the same order of regret as in Theorem~\ref{thm:SDP}, but now it holds for a wide range of privacy budgets $\epsilon, \delta$ as presented below formally.

\begin{theorem}[Performance under SDP via vector sum]
\label{thm:SDP-vec}
 Fix batch size $B$ and let $\kappa\!=\! 1\!+\!\log(T/B)$. Let $\cP_{\text{Vec}}^{\cT}$ be a privacy protocol given by Algorithm~\ref{alg: p-vec-T}. Then, under Assumption~\ref{ass:bounded}, there exist parameter choices of $\cP_{\text{Vec}}^{\cT}$ such that for any $\epsilon \!\le\!  60\sqrt{2\kappa \log(2/\delta)}$ and $\delta \! \le \! 1$, Algorithm~\ref{alg:FedLUCB-SDP} instantiated with $\cP_{\text{Vec}}^{\cT}$ satisfies $(\epsilon,\delta)$-SDP. Moreover, for any $\alpha \in (0,1]$, there exist choices of $\lambda$ and $\{\beta_{t,i}\}_{t,i}$ such that, with a probability at least $1-\alpha$, it enjoys a group regret
  \begin{align*}
    R_M(T) \!=\! O\left({d M  B}\log T \!+\! d\sqrt{MT}\log(MT/\alpha)\right) \!+\!\widetilde{O}\!\left(\!d^{3/4}\sqrt{MT}\frac{\log^{3/4}(\kappa d^2/\delta)}{\sqrt{\epsilon}}\log^{1/4}\!\left(\frac{T}{B\alpha}\!\right)\!\right).
\end{align*}
\end{theorem}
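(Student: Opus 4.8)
The plan is to prove the two claims—$(\epsilon,\delta)$-SDP and the group-regret bound—separately, reusing the machinery behind Theorem~\ref{thm:SDP} wherever possible and isolating the genuinely new ingredient, namely the privacy accounting for the vector-sum-in-a-tree protocol $\cP_{\text{Vec}}^{\cT}$. For the privacy guarantee I would argue in two layers. First, fix a single tree node (p-sum). By construction $\cP_{\text{Vec}}^{\cT}$ expands that p-sum into $n = O(BM)$ bias vectors (resp. covariance matrices), each of $O(1)$ norm by Assumption~\ref{ass:bounded}, and feeds them to the shuffle vector-sum mechanism $\cP_{\text{Vec}}$ of~\cite{cheu2021shuffle}. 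Invoking Theorem~3.2 of~\cite{cheu2021shuffle}, a single invocation is $(\epsilon_0,\delta_0)$-SDP with respect to a change in one of its $n$ input vectors, for a suitable binomial-noise granularity. Second, I would lift this to the whole run: under the SDP adjacency relation exactly one user among all $MT$ users changes, which alters the data of exactly one batch $k$, and by the binary-tree structure that batch participates in at most $\kappa = 1+\log(T/B)$ p-sums. Hence the released transcript is an (adaptive) composition of $\kappa$ mechanisms, each $(\epsilon_0,\delta_0)$-SDP, in each of which precisely one input vector changes, and the advanced composition theorem~\cite{dwork2014algorithmic} then yields $(\epsilon,\delta)$-SDP overall.

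It then remains to choose parameters and derive the admissible range. Advanced composition over $\kappa$ steps gives, to leading order, a budget of order $\sqrt{2\kappa\log(1/\delta)}\,\epsilon_0$, so to hit a target $\epsilon$ it suffices to set $\epsilon_0 \approx \epsilon/\sqrt{2\kappa\log(2/\delta)}$ together with $\delta_0 = \Theta(\delta/\kappa)$, and to pick the binomial-noise parameters inside $\cP_{\text{Vec}}$ accordingly. The vector-sum guarantee of~\cite{cheu2021shuffle} is valid only when the per-invocation budget lies in a bounded range, $\epsilon_0 \le 60$; substituting the above choice of $\epsilon_0$ turns this into exactly the stated condition $\epsilon \le 60\sqrt{2\kappa\log(2/\delta)}$, which explains why this protocol covers a wide range of budgets rather than the small-$\epsilon$ regime forced by the amplification argument of Theorem~\ref{thm:SDP}.

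For the regret I would invoke the generic reduction underlying all three theorems: conditioned on the privacy noise, the group regret of Algorithm~\ref{alg:FedLUCB-SDP} is controlled once we bound the perturbation the noise induces in the synchronized statistics $\widetilde{W}_{\text{syn}}$ and $\widetilde{U}_{\text{syn}}$. Each prefix sum aggregates at most $\kappa$ noisy p-sums, and the crucial point is that $\cP_{\text{Vec}}$ injects central-model-like noise: because the inputs are expanded into $O(1)$-norm pieces, the accuracy guarantee of~\cite{cheu2021shuffle} produces a per-p-sum noise of variance $\widetilde{O}(\kappa/\epsilon^2)$—independent of $M$ up to logarithms—matching the central Gaussian mechanism rather than the $M$-fold-larger local noise. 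Using binomial tail bounds, together with a matrix-concentration step for the covariance noise, I would bound with probability $1-\alpha$ the operator norm of the additive covariance noise and the Euclidean norm of the additive bias noise, and then feed these into the same regret calculation as in Theorem~\ref{thm:SDP}. This reproduces the dominant cost-of-privacy term $\widetilde{O}\big(d^{3/4}\sqrt{MT}\,\epsilon^{-1/2}\big)$, now carrying the factor $\log^{3/4}(\kappa d^2/\delta)$—the $d^2$ coming from a union bound over the $d + d^2$ coordinates privatized by the binomial mechanism, in place of the union bound over the $M$ shuffled messages that produced $\log(M\kappa/\delta)$ in Theorem~\ref{thm:SDP}.

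I expect the privacy accounting to be the main obstacle. The delicate points are (i) verifying that one user change really induces a single-input change in each of the $\kappa$ affected $\cP_{\text{Vec}}$ calls, so that the sensitivity-one privacy of the vector-sum applies per node, and (ii) tracking the constants through advanced composition tightly enough to land on the clean range $\epsilon \le 60\sqrt{2\kappa\log(2/\delta)}$ while simultaneously ensuring the expansion keeps every input at $O(1)$ norm—the very feature that lets the shuffle noise emulate central noise and hence match the super-agent regret. The regret half, by contrast, is largely a re-run of the Theorem~\ref{thm:SDP} analysis with the per-p-sum noise variance replaced by its central-model counterpart.
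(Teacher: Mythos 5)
Your proposal follows essentially the same route as the paper's proof: per-node $(\epsilon_0,\delta_0)$-SDP from the vector-sum mechanism of \cite{cheu2021shuffle}, advanced composition over the at most $\kappa$ tree nodes that a single user's data can affect, and then the generic sub-Gaussian-noise regret lemma with per-p-sum variance $\widetilde{O}(\kappa/\epsilon^2)$ and at most $\kappa$ p-sums per prefix sum. The only discrepancy is in the bookkeeping of the constant $60$: the paper obtains it from the per-invocation range $\epsilon_0 \le 15$ of $\cP_{\text{Vec}}$ combined with splitting the budget equally between the bias and covariance streams (so each stream needs $\epsilon/2 \le 30\sqrt{2\kappa\log(2/\delta)}$), whereas you attribute it to a per-invocation range of $\epsilon_0 \le 60$ and omit the two-stream split --- a minor constant-tracking slip that does not change the structure of the argument.
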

\vspace{-1mm}
\begin{remark}[Importance of communicating P-sums]
One of our key techniques behind closing the regret gap under SDP is to communicate and shuffle \emph{only} the p-sums rather than prefix sums. With this we can ensure that each data point (bias vector/covariance matrix) participates only in at most $\log K$ shuffle mechanisms (rather than in $O(K)$ mechanisms if we communicate and shuffle prefix-sums). This helps us to keep the final privacy cost in check after adaptive composition. In other words, one cannot simply use shuffling to amplify privacy of the proposed algorithm in~\cite{dubey2020differentially}  to close the regret gap (even ignoring its privacy and communication issues), since it communicates prefix sums at each synchronization. This again highlights the algorithmic novelty of our privacy protocols (Algorithms~\ref{alg: p-sdp} and~\ref{alg: p-vec-T}), which could be of independent interest. See Appendix~\ref{app:SDP} for further details.
\end{remark}

\subsection{Key Techniques: Overview}
\label{sec:key}
Our first key tool is a generic regret bound for Algorithm~\ref{alg:FedLUCB-SDP} under a mild condition on injected noise. 
Let $t=kB$, and $N_{t,i}, n_{t,i}$ denote
total noise injected up to the $k$-th communication by agent $i$ to covariance matrices $\sum_{s=1}^t x_{s,i}x_{s,i}^{\top}$ and bias vectors $\sum_{s=1}^t x_{s,i}y_{s,i}$, respectively. Moreover, let (i) $\sum_{i=1}^M n_{t,i}$ be a random vector whose entries are independent, mean zero, sub-Gaussian with variance at most $\sigma_1^2$, and
(ii) $\sum_{i=1}^M N_{t,i}$ be a random symmetric matrix whose entries on and above the diagonal are independent sub-Gaussian random variables with variance at most $\sigma_2^2$. Let $\sigma \!=\! \max\lbrace \sigma_1, \sigma_2\rbrace$. Then, we have the following result.
\begin{lemma}[Informal regret bound]
\label{lem:informal}
With high probability, the regret of Algorithm~\ref{alg:FedLUCB-SDP} satisfies
\begin{align*}
     \hspace*{10mm} \!R_{M}(T) \!=\!  \widetilde{O}\left(\!{d M  B} + d\sqrt{MT}+\sqrt{\sigma MT}d^{3/4} \!\right).
\end{align*}
\end{lemma}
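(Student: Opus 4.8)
The plan is to run the optimism-based (LinUCB) analysis, using the abstract noise condition so that the same argument serves every privacy instantiation: everything reduces to (i) a confidence radius $\beta_{t,i}$ and (ii) the sum of the associated ellipsoidal widths, with the injected privacy noise and the fixed-batch staleness being the only two departures from the classical single-agent proof. Fix $t=kB$ and write $V_{t,i}=H_{t,i}+\Delta^{\mathrm{cov}}$, where $H_{t,i}=\lambda I+\sum_{\text{used}} x x^{\top}$ is the exact (noise-free) regularized Gram matrix of the data agent $i$ actually uses and $\Delta^{\mathrm{cov}}=\sum_j N_{\cdot,j}$ is the aggregated covariance noise (variance $\le\sigma_2^2\le\sigma^2$ per entry). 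First I would invoke a high-probability operator-norm bound $\norm{\Delta^{\mathrm{cov}}}_{\mathrm{op}}=\widetilde{O}(\sigma\sqrt d)$ for a symmetric matrix with independent sub-Gaussian entries, and set $\lambda=\widetilde{\Theta}(\sigma\sqrt d)$ so that $\lambda\ge 2\norm{\Delta^{\mathrm{cov}}}_{\mathrm{op}}$; this forces $\tfrac12 H_{t,i}\preceq V_{t,i}$, in particular $V_{t,i}\succeq\tfrac\lambda2 I\succ0$, so the played UCB is well-defined and $\norm{\cdot}_{V_{t,i}^{-1}}\le\sqrt2\,\norm{\cdot}_{H_{t,i}^{-1}}$.

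Substituting $y_{s,j}=\inner{x_{s,j}}{\theta^*}+\eta_{s,j}$ into $\hat\theta_{t,i}=V_{t,i}^{-1}(\cdot)$ yields
\[
\hat\theta_{t,i}-\theta^*=V_{t,i}^{-1}\Bigl(-\lambda\theta^*+\textstyle\sum_{s,j} x_{s,j}\eta_{s,j}+\Delta^{\mathrm{bias}}-\Delta^{\mathrm{cov}}\theta^*\Bigr).
\]
Taking $\norm{\cdot}_{V_{t,i}}$ and using $V_{t,i}\succeq\tfrac12 H_{t,i}$ bounds the four terms: the regularization term by $\sqrt{2\lambda}=\widetilde{O}(\sqrt\sigma\,d^{1/4})$; the martingale term $\sum x\eta$ by the self-normalized concentration inequality applied to $H_{t,i}$, giving $\widetilde{O}(\sqrt d)$; and the two privacy terms by $\norm{\cdot}_2/\sqrt{\lambda/2}=\widetilde{O}(\sqrt\sigma\,d^{1/4})$, using $\norm{\Delta^{\mathrm{bias}}}_2,\norm{\Delta^{\mathrm{cov}}}_{\mathrm{op}}=\widetilde{O}(\sigma\sqrt d)$. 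This produces a valid radius $\beta_{t,i}=\widetilde{O}(\sqrt d+\sqrt\sigma\,d^{1/4})$ with $\theta^*$ in the ellipsoid w.h.p. Optimism then gives the per-user bound $r_{t,i}\le 2\beta_{t,i}\min(1,\norm{x_{t,i}}_{V_{t,i}^{-1}})$ (clipping by $1$ via bounded rewards), so it remains to control $\sum_{t,i}\min(1,\norm{x_{t,i}}_{V_{t,i}^{-1}})$. Here I introduce the fully-synchronized matrix $\bar V_t=\lambda I+\sum_{j=1}^M\sum_{s\le t} x_{s,j}x_{s,j}^{\top}$ and use $\norm{x}_{V_{t,i}^{-1}}^2\le\frac{\det\bar V_t}{\det V_{t,i}}\,\norm{x}_{\bar V_t^{-1}}^2$: the $MT$ quantities $\norm{x_{t,i}}_{\bar V_t^{-1}}^2$ obey a single \emph{global} elliptical-potential bound $\widetilde{O}(d)$, while the determinant ratio measures the unsynchronized mass, at most $MB$ rank-one updates per round. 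A batched-bandit argument shows this ratio is $O(1)$ off the delay rounds (contributing an additive $\widetilde{O}(dMB)$), and Cauchy--Schwarz over the $MT$ rounds turns the $\widetilde{O}(d)$ potential into $\widetilde{O}(\sqrt{dMT})$. Multiplying by $\beta$ splits into $\sqrt d\cdot\widetilde{O}(\sqrt{dMT})=\widetilde{O}(d\sqrt{MT})$ plus the delay $\widetilde{O}(dMB)$, and $\sqrt\sigma\,d^{1/4}\cdot\widetilde{O}(\sqrt{dMT})=\widetilde{O}(\sqrt{\sigma MT}\,d^{3/4})$, which is exactly the claim.

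The hard part will be the interaction of the noise with the batched, stale Gram matrices, on two fronts. First, $\sum_i N_{t,i}$ is symmetric but indefinite, so $V_{t,i}$ can a priori be singular; taming this needs the operator-norm concentration together with the choice $\lambda=\widetilde{\Theta}(\sigma\sqrt d)$, and it is precisely this $\lambda$ that sets the $\sqrt\sigma\,d^{1/4}$ scale of the privacy terms (hence the final $d^{3/4}$ exponent). Second, because synchronization is on a fixed, data-independent schedule, one cannot guarantee a constant determinant ratio $\det\bar V_t/\det V_{t,i}$ at every round; the delay argument must separate the rare, badly-stale rounds (counted by $MB$ per batch) from the $O(1)$-ratio bulk, and must propagate the noise consistently through $\bar V_t$ versus $V_{t,i}$ so that the global potential applies to the \emph{noisy} matrices. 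Everything else is the routine LinUCB calculation.
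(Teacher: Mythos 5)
Your proposal is correct and follows essentially the same route as the paper's proof of Lemma~\ref{lem:general}/Lemma~\ref{lem:subG}: operator-norm concentration of the aggregated sub-Gaussian noise to set $\lambda=\widetilde{\Theta}(\sigma\sqrt{d})$ (yielding the $\sqrt{\sigma}\,d^{1/4}$ privacy contribution to $\beta$ and hence the $d^{3/4}$ exponent), a confidence-ellipsoid argument splitting the regularization, martingale, and noise terms, and an imaginary super-agent elliptical potential combined with a determinant-ratio/good-vs-bad-epoch decomposition that charges the badly-stale rounds $\widetilde{O}(dMB)$. The only cosmetic difference is that the paper sandwiches through the noise-free matrices $G_{t,i}+\lambda_{\min}I$ before applying the determinant trick rather than comparing the noisy $V_{t,i}$ to $\bar V_t$ directly, a subtlety you already flag and resolve the same way.
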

Armed with the above lemma, one only needs to determine the noise variance $\sigma^2$ under different privacy constraints. For silo-level LDP, we use concentrated differential privacy~\cite{bun2016concentrated} to obtain a tighter privacy accounting. In this process, we also utilize the nice properties of the tree-based mechanism. The final total noise level is $\sigma^2 =  8 M \kappa^2  \cdot\frac{(\log(2/\delta) + \epsilon)}{\epsilon^2}$ with $\kappa:= 1+\log K$. For SDP, the key idea is to leverage privacy amplification of $1/\sqrt{M}$ by shuffling. Hence, the noise variance by each agent is roughly $1/M$ of the noise under LDP. By Lemma~\ref{lem:informal}, we thus shave an $M^{1/4}$ factor from the regret under silo-level LDP. However, as mentioned before, the key technique to achieve this is our new amplification lemma in Appendix~\ref{app:SDP}.  For SDP via vector sum, we utilize the property of $\cP_{\text{Vec}}$ to compute each noisy synchronized p-sum under SDP using noise level $\widetilde{O}(\kappa/\epsilon^2)$ (where we use the fact that each data point only participates at most $\kappa$ times and advanced composition). Then, by the binary tree structure again, each private prefix sum only requires at most $\kappa$ noisy synchronized p-sums. Thus, the total amount of noise is $\widetilde{O}(\kappa^2/\epsilon^2)$.

\section{Simulation Results}\label{sec:exp}

We evaluate regret performance of Algorithm~\ref{alg:FedLUCB-SDP} under silo-level LDP and SDP, which we abbreviate as LDP-FedLinUCB and SDP-FedLinUCB, respectively. We fix confidence level $\alpha \!=\! 0.01$, batchsize $B\!=\!25$ and study comparative performances under varying privacy budgets $\epsilon,\delta$.
We plot time-averaged group regret $\text{Reg}_M(T)/T$ in Figure~\ref{fig:all_algos} by averaging results over 25 parallel runs. Our simulations are proof-of-concept only; we do not tune any hyperparameters.

\textbf{Synthetic bandit instance.}
We simulate a LCB instance with a parameter $\theta^*$ of dimension $d=10$ and $|\cK_i| = 100$ actions for each of the $M$ agents. Similar to \citet{vaswani2020old}, 
we generate $\theta^*$ and feature vectors by sampling a $(d\!-\!1)$-dimensional vectors of norm $1/\sqrt{2}$ uniformly at random, and append it with a $1/\sqrt{2}$ entry. 
Rewards are corrupted with Gaussian $\cN(0,0.25)$ noise.

\textbf{Real-data bandit instance.}
We generate bandit instances from Microsoft Learning to Rank dataset \citep{DBLP:journals/corr/QinL13}. Queries form the contexts
$c$ and actions $a$ are the available documents. The dataset contains 10K queries, each with
up to 908 judged documents, where the query-document pairs are judged on a 3-point scale,
$\text{rel}(c, a) \in \lbrace 0,1,2 \rbrace$. Each pair $(c,a)$ has a feature vector $\phi(c,a)$, which is partitioned into title and body features of dimensions 57 and 78, respectively. We first train a lasso regression model on title features to predict relevances from $\phi$, and take this model as the bandit parameter $\theta^*$ with $d=57$ (similar experiment with body features is reported in Appendix~\ref{app:sim}). Next, we divide the queries equally into $M\!=\!10$ agents and assign corresponding feature vectors to the agents. This way, we obtain a federated LCB instance with $10$ agents, each with number of actions $|\cK_i| \le 908$.

\textbf{Observations.} In sub-figure (a), we compare performance of LDP-FedLinUCB and SDP-FedLinUCB (with amplification based privacy protocol $\cP$) on synthetic Gaussian bandit instance with $M\!=\!100$ agents under privacy budget $\delta\!=\!0.0001$ and $\epsilon \!=\! 0.001$ or $0.0001$. We observe that regret of SDP-FedLinUCB is less than LDP-FedLinUCB for both values of $\epsilon$, which is consistent with our theoretical results. Here, we only work with small privacy budgets since the privacy guarantee of Theorem~\ref{thm:SDP} holds for $\epsilon,\delta \!\ll\! 1$.
Instead, in sub-figure (b), we consider higher privacy budgets as suggested in Theorem~\ref{thm:SDP-vec} (e.g. $\epsilon\!=\!0.2$, $\delta\!=\!0.1$) and
compare the regret performance of LDP-FedLinUCB and SDP-FedLinUCB (with vecor-sum based privacy protocol $\cP_{\text{vec}}^\cT$). As expected, here also we observe that regret of SDP-FedLinUCB decreases faster than that of LDP-FedLinUCB.

Next, we benchmark the performance of Algorithm~\ref{alg:FedLUCB-SDP} under silo-level LDP (i.e. LDP-FedLinUCB) against a non-private Federated LCB algorithm with fixed communication schedule, which we build upon the algorithm of~\citet{abbasi2011improved} and refer as FedLinUCB. In sub-figure (c), we demonstrate the cost of privacy under silo-level LDP on real-data bandit instance by varying $\epsilon$ in the set $\lbrace 0.2,1,5\rbrace$ while keeping $\delta$ fixed to 0.1. We observe that regret of LDP-FedLinUCB decreases and comes closer to that of FedLinUCB as $\epsilon$ increases (i.e., level of privacy protection decreases).
A similar regret behavior is noticed under SDP also (postponed to Appendix~\ref{app:sim}).


\begin{figure}[ht!]
\begin{subfigure}[t]{.33\linewidth}
            \centering
			\includegraphics[width = 1.9in]{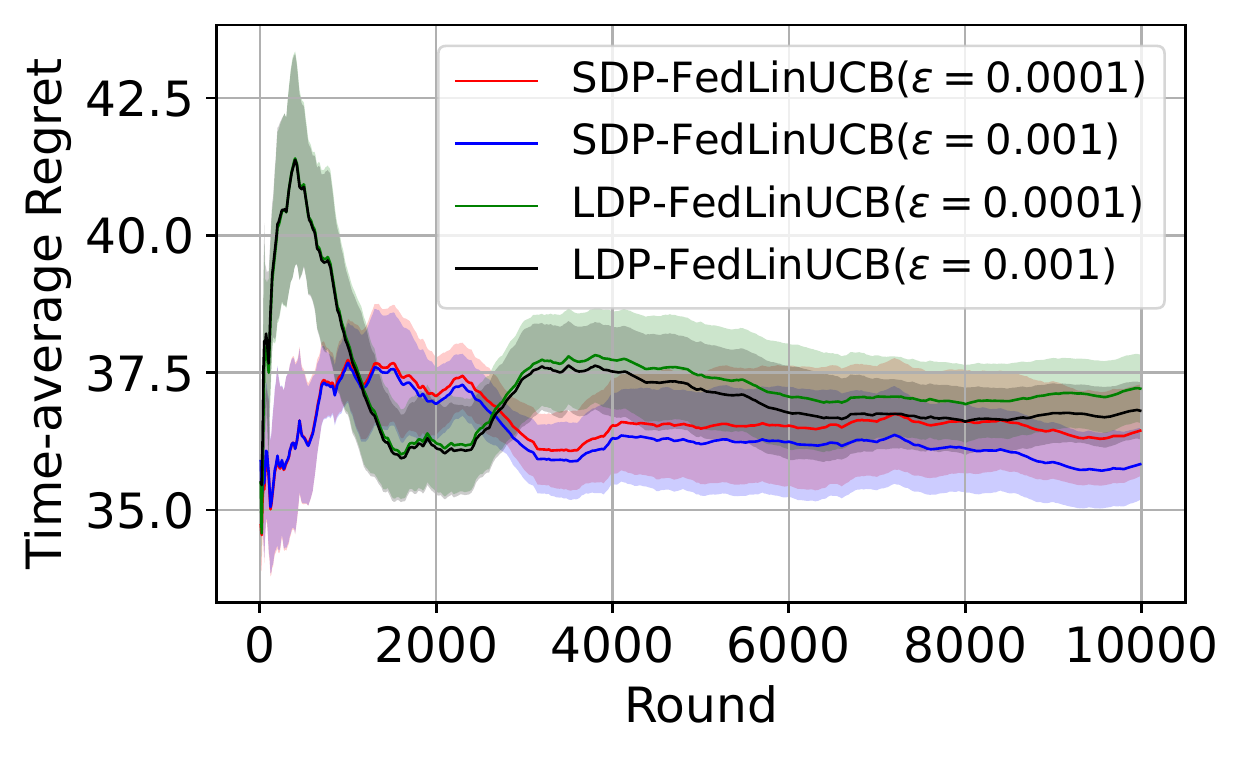}
   \vskip -1.5mm
			\caption{Synthetic data ($M=100$)}
		\end{subfigure}\ \
  \begin{subfigure}[t]{.33\linewidth}
        \centering
		\includegraphics[width = 1.85in]{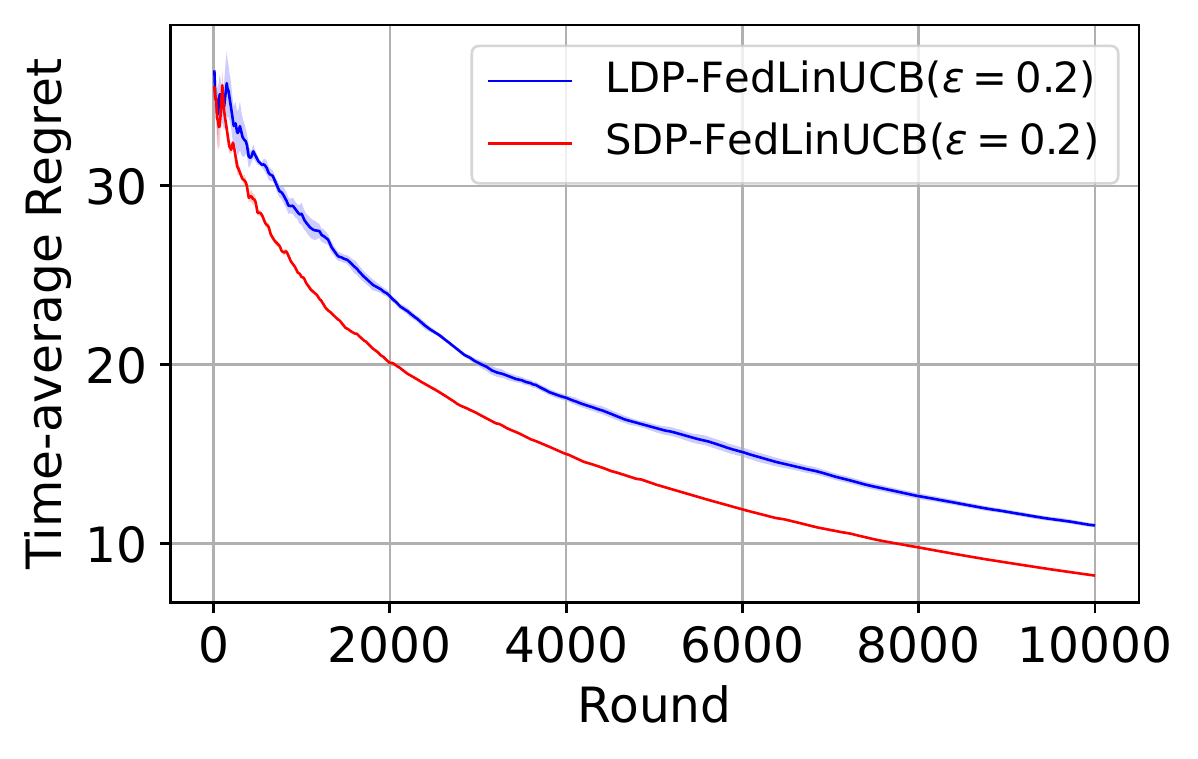}
  \vskip -1.5mm
			\caption{Synthetic data ($M=100$) }
		\end{subfigure}\ \
  \begin{subfigure}[t]{.32\linewidth}
            \centering
			\includegraphics[width = 1.8in]{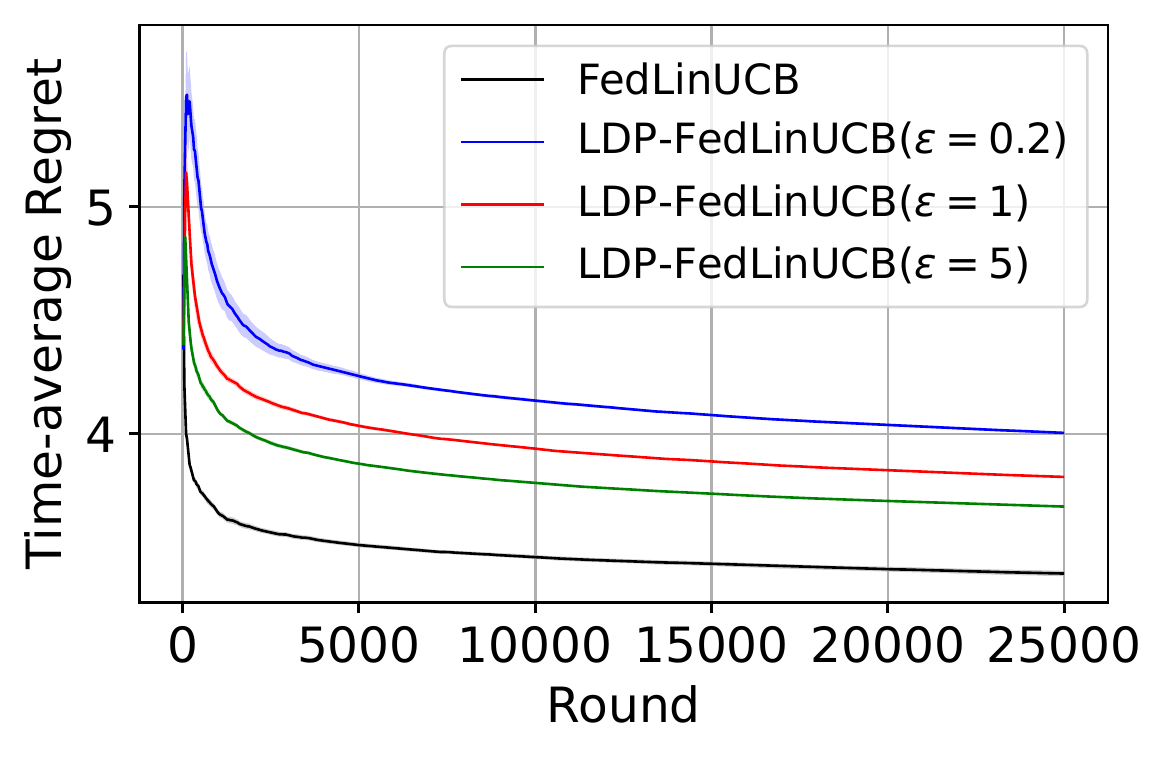}
   \vskip -1.5mm
			\caption{Real data ($M=10$)}
		\end{subfigure}
\caption{\footnotesize{Comparison of time-average group regret for LDP-FedLinUCB (silo-level LDP), SDP-FedLinUCB (shuffle model) and FedLinUCB (non-private) under varying privacy budgets $\epsilon, \delta$ on (a, b) synthetic Gaussian bandit instance and (c) bandit instance generated from MSLR-WEB10K Learning to Rank dataset. }  }\label{fig:all_algos}
		
\end{figure}
\section{Concluding Remarks}
\label{sec:con}

\noindent\textbf{Silo-level LDP/SDP vs. other privacy notions.} It is helpful to compare silo-level LDP and SDP with other existing privacy notions. In Appendix~\ref{app:ldp}, we compare it with the standard local model, central model and shuffle model for single-agent LCBs. As a by-product, via a simple tweak of Algorithm~\ref{alg:FedLUCB-SDP}, we also show how to achieve a slightly stronger privacy guarantee than silo-level LDP in the sense that now the action selection is also based on private data only. With this, we can not only protect against colluding among other silos (as in silo-level LDP), but against colluding among users within the same silo (as in standard central DP). 

\noindent\textbf{Non-unique users.} In this theoretical work, we assume that all $MT$ users are unique. In practice, it is often the case that the same user can participate in multiple rounds within the same silo or across different silos. For example, the same patient can have multiple medical tests at the same hospital or across different types of hospitals. We discuss how to handle the case of non-unique users in Appendix~\ref{app:non-unique}.

\noindent\textbf{Future work.} One immediate future work is to reduce communication costs by overcoming the challenges in private adaptive communication, see Appendix~\ref{app:priv-com}. Another direction could be considering similar cross-silo federated learning for private RL, especially with linear function approximation (cf.~\cite{ZhouXingyuRL}.)

\section{Acknowledgements}
XZ is supported in part by NSF CNS-2153220. XZ would like to thank Abhimanyu Dubey for discussions on the work~\cite{dubey2020differentially}. 
XZ would also like to thank Andrew Lowy and Ziyu Liu for insightful discussions on the privacy notion for cross-silo federated learning. XZ would also thank Vitaly Feldman and Audra McMillan for the discussion on some subtleties behind ``hiding among the clones''.


\printbibliography
\newpage

\appendix
\section{More Discussions on Gaps in SOTA}

\label{app:gap}
In this section, we provide more details on the current gaps in~\cite{dubey2020differentially}, especially on privacy violation and communication cost. It turns out that both gaps come from the fact that an adaptive communication schedule is employed in~\cite{dubey2020differentially}.
\subsection{More on violation of silo-level LDP}
As shown in the main paper, Algorithm 1 in~\cite{dubey2020differentially} does not satisfy silo-level LDP. 
To give a more concrete illustration of privacy leakage, we now specify the form of $f$\footnote{There is some minor issue in the form of $f$ in~\cite{dubey2020differentially}. The correct one is given by our restatement of their Algorithm 1, see line 9 in Algorithm~\ref{alg:FedLUCB-dubey}.}, local data $X_i$ and synchronized data $Z$ in~\eqref{eq:sync} according to~\cite{dubey2020differentially}. In particular, a communication is triggered  at round $t$ if for any silo $i$, it holds that
\begin{align}
\label{eq:sync-app}
\!(t\!-\!t')
\log\left[\frac{\det\left(Z \!+\! \sum_{s=t'+1}^t x_{s,i}x_{s,i}^{\top} \!+\! \lambda_{\min} I\right)}{\det\left(Z \!+\! \lambda_{\min} I\right)}\right] \!>\! D,\!
\end{align}
where $t'$ is the latest synchronization time before $t$, $Z$ is all synchronized (private) covariance matrices up to time $t'$, $\lambda_{\min} > 0$ is some regularization constant (which depends on privacy budgets $\epsilon,\delta$) and $D > 0$ is some suitable threshold (which depends on number of silos $M$). 

With the above explicit form in hand, we can give a more concrete discussion of Example~\ref{ex:leak}. A communication is triggered at round $t=1$ if  $\det\left( x_{1,m}x_{1,m}^{\top} \!+\! \lambda_{\min} I\right) > \det\left(\lambda_{\min} I\right) e^D$ holds for any silo $m$. This implies that $(\lambda_{\min} + \norm{x_{1,m}}^2) \lambda_{\min}^{d-1} > e^D \lambda_{\min}^d$, which, in turn, yields $\norm{x_{1,m}}^2 > \lambda_{\min}(e^D-1)=:C$. Now, if $\norm{x_{1,j}}^2 \le C$, then silo $j$ immediately knows that $\norm{x_{1,i}}^2 > C$, where $C$ is a known constant. Since $x_{1,i}$ contains the context information of the user (Alice), this norm condition could immediately reveal that some specific features in the context vector are active (e.g., Alice has both diabetes and heart disease), thus leaking Alice's private and sensitive information to silo $j$.

\begin{remark}
\label{rem:weak-FedDP}
The above result has two implications: (i) the current proof strategy for Fed-DP guarantee in~\cite{dubey2020differentially} does not hold since it essentially relies on the post-processing of DP through silo-level LDP; (ii) Fed-DP could fail to handle reasonable adversary model in cross-silo federated LCBs. That is, even if Algorithm 1 in~\cite{dubey2020differentially} satisfies Fed-DP, it still cannot protect Alice's information from being inferred by a malicious silo (which is a typical adversary model in cross-silo FL). Thus, we believe that silo-level LDP is a more proper privacy notion for cross-silo federated LCBs. 
\end{remark}

\subsection{More on violation of Fed-DP}
As shown in the main paper, Algorithm 1 in~\cite{dubey2020differentially} also does not satisfy its weaker notion of Fed-DP. To give a more concrete illustration, recall Example~\ref{ex:leak} and let us define $m_{i,j}$ as the message/data sent from silo $i$ to silo $j$ after round $t=1$. Suppose in the case of Alice, there is no synchronization and hence $m_{i,j} = 0$. On the other hand, in the case of Tracy (i.e., the first user at silo $i$ changes from Alice to Tracy),  suppose synchronization is triggered by silo $i$ via rule~\eqref{eq:sync} due to Tracy's data. Then, according to~\cite{dubey2020differentially}, $m_{i,j} = x_{1,i}y_{1,i} + \cN$ (consider bias vector here), where $\cN$ is the injected noise when silo $i$ sends out its data. Now, based on the requirement of Fed-DP, the recommended action at silo $j$ in round $t=2$ needs to be ``similar'' or ``indistinguishable'' in probability under the change from Alice to Tracy. Note that silo $j$ chooses its action at round $t=2$ based on its local data (which is unchanged) and $m_{i,j}$, via \emph{deterministic} selection rule (i.e., LinUCB) in Algorithm 1 of~\cite{dubey2020differentially}. Thus, Fed-DP essentially requires $m_{i,j}$ to be close in probability when Alice changes to Tracy, which is definitely not the case (i.e., $0$ vs. $x_{1,i}y_{1,i} + \cN$). Thus, Algorithm 1 in~\cite{dubey2020differentially} also fails Fed-DP. 

\begin{remark}
One can also think from the following perspective: the non-private data-dependent sync rule (i.e.,~\eqref{eq:sync-app}) in~\cite{dubey2020differentially} impacts the communicated messages/data as well, which cannot be made private by injecting noise when sending out data. To rescue, a possible approach is to use \emph{private (noisy)} data in rule~\eqref{eq:sync-app} when determining synchronization (while still injecting noise when sending out data). As a result, whether there exists a synchronization would be ``indistinguishable'' under Alice or Tracy and hence $m_{i,j}$ now would be similar. However, this approach still suffers the gap in communication cost analysis (see below) and moreover it will incur new challenges in regret analysis, see Appendix~\ref{app:priv-com} for a detailed discussion on this approach.  
\end{remark}

\subsection{More on communication cost analysis}
The current analysis in~\cite{dubey2020differentially} (cf. Proposition 5) for communication cost (i.e., how many rounds of communication within $T$) essentially follows the approach in the non-private work~\cite{wang2019distributed} (cf. proof of Theorem 4). However, due to additional privacy noise injected into the communicated data, one key step of the approach in~\cite{wang2019distributed} fails in the private case. In the following, we first point out the issue using notations in~\cite{dubey2020differentially}. 


The key issue in its current proof of Proposition 5 in~\cite{dubey2020differentially} is that
\begin{align}
\label{eq:com-gap}
   \log \frac{\det(\mathbf{S}_{i,t+n'})}{\det(\mathbf{S}_{i,t})} > \frac{D}{n'} 
\end{align}
which appears right above Eq. 4 in~\cite{dubey2020differentially} does not hold. More specifically, $[t, t+n']$ is the $i$-th interval between two communication steps and $\mathbf{S}_{i,t}, \mathbf{S}_{i,t+n'}$ are corresponding synchronized private matrices. At the time $t+n'$, we know~\eqref{eq:sync-app} is satisfied by some silo (say $j \in [M]$), since there is a new synchronization. In the non-private case, $\mathbf{S}_{i,t+n'}$ simply includes some additional local covariance matrices from silos other than $j$, which are positive semi-definite (PSD). As a result,~\eqref{eq:com-gap} holds. However, in the private case, $\mathbf{S}_{i,t+n'}$ includes the \emph{private} messages from silos other than $j$, which may not be positive semi-definite (PSD), since there are some new covariance matrices as well as \emph{new Gaussian privacy noise} (which could be negative definite). Thus,~\eqref{eq:com-gap} may not hold anymore.

\section{A Generic Regret Analysis for Algorithm~\ref{alg:FedLUCB-SDP}}
\label{app:general}

In this section, we formally establish Lemma~\ref{lem:informal}, i.e., our generic regret bound of Algorithm~\ref{alg:FedLUCB-SDP} under sub-Gaussian noise condition. To this end, let us first recall the following notations. Fix $B, T \in \mathbb{N}$, we let $K = T/B$ be the total number of communication steps. For all $i\in [M]$ and all $t = k B$, $k\in [K]$, we let  $N_{t,i} = \widetilde{W}_{t,i} - \sum_{s=1}^t x_{s,i}x_{s,i}^{\top} $ and $ n_{t,i} = \widetilde{U}_{t,i} - \sum_{s=1}^t x_{s,i}y_{s,i}$ be the cumulative injected noise up to the $k$-th communication by agent $i$. We further let $H_{t}:= \lambda I_d + \sum_{i \in [M]} N_{t,i}$ and $h_{t}:=  \sum_{i \in [M]} n_{t,i}$.

\begin{assumption}[Regularity]
\label{ass:reg}
    Fix any $\alpha \in (0,1]$, with probability at least $1-\alpha$, we have $H_t$ is positive definite and  there exist constants $\lambda_{\text{max}}, \lambda_{\text{min}}$ and $\nu$ depending on $\alpha$ such that for all $t=kB$, $k\in [K]$
    \begin{align*}
    \norm{H_{t}} \le \lambda_{\max},\quad \norm{H_{t}^{-1}} \le 1/\lambda_{\min}, \quad \norm{h_{t}}_{H_{t}^{-1}} \le \nu.
    \end{align*} 
\end{assumption}

With the above regularity assumption and the boundedness in Assumption~\ref{ass:bounded}, we fist establish the following general regret bound of Algorithm~\ref{alg:FedLUCB-SDP}, which can be viewed as a direct generalization of the results in~\cite{shariff2018differentially,pmlr-v162-chowdhury22a} to the federated case.

\begin{lemma}
\label{lem:general}
Let Assumptions~\ref{ass:reg} and~\ref{ass:bounded} hold. Fix any $\alpha \in (0,1]$, there exist choices of $\lambda$ and $\{\beta_{t,i}\}_{t\in[T],i\in [M]}$ such that, with probability at least $1-\alpha$,  the group regret of Algorithm~\ref{alg:FedLUCB-SDP} satisfies 
\begin{align*}
    \text{Reg}_M(T) = O\left(\beta_{T} \sqrt{dMT \log\left(1+\frac{MT}{d\lambda_{\min}}\right)} \right) + O\left(  M \cdot B \cdot {d}\log\left(1+\frac{MT}{d\lambda_{\min}}\right)\right),
\end{align*}
where 
$\beta_{T}:=\sqrt{2\log\left(\frac{2}{\alpha}\right) + d\log\left(1+\frac{MT}{d\lambda_{\min}}\right)} + \sqrt{\lambda_{\max}} + \nu.$
\end{lemma}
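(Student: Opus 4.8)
The plan is to adapt the optimism-based regret analysis of LinUCB~\cite{abbasi2011improved,shariff2018differentially,pmlr-v162-chowdhury22a} to the federated, batched, and noise-perturbed Gram matrices produced by Algorithm~\ref{alg:FedLUCB-SDP}. The entire argument lives on the good event of Assumption~\ref{ass:reg} (instantiated at confidence level $\alpha/2$, so that the constants $\lambda_{\min},\lambda_{\max},\nu$ and positive-definiteness of $H_t$ hold with probability at least $1-\alpha/2$); I condition on this event throughout and spend the remaining $\alpha/2$ on a concentration step, which accounts for the $\log(2/\alpha)$ appearing in $\beta_T$. The first and central task is to certify the confidence ellipsoid. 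Fix an agent $i$ and round $t$, let $t'$ be the last synchronization before $t$, and write $V_{t,i} = \lambda I + \widetilde{W}_{\text{syn}} + W_i$ and $\hat{\theta}_{t,i} = V_{t,i}^{-1}(\widetilde{U}_{\text{syn}} + U_i)$. Substituting $y_{s,j} = \inner{x_{s,j}}{\theta^*} + \eta_{s,j}$ into the accumulated noisy bias and using the definitions $H_{t'} = \lambda I + \sum_i N_{t',i}$ and $h_{t'} = \sum_i n_{t',i}$, a direct expansion gives
\[
\hat{\theta}_{t,i} - \theta^* = V_{t,i}^{-1}\left( \sum_{s,j} x_{s,j}\,\eta_{s,j} + h_{t'} - H_{t'}\theta^* \right),
\]
where the sum runs over the plays visible to agent $i$ at round $t$ (all agents up to $t'$, and agent $i$ alone up to $t-1$), whose feature outer products satisfy $\sum_{s,j} x_{s,j}x_{s,j}^\top = V_{t,i} - H_{t'}$.

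Measuring in the $V_{t,i}$-norm and applying the triangle inequality, I would bound the three terms separately, repeatedly using the key domination $V_{t,i} \succeq H_{t'} \succeq \lambda_{\min} I$ (which holds because $V_{t,i}$ is $H_{t'}$ plus a sum of PSD outer products, and Assumption~\ref{ass:reg} forces $H_{t'}\succeq\lambda_{\min}I$). For the reward-noise term, since $\lambda_{\min} I + \sum_{s,j} x_{s,j}x_{s,j}^\top \preceq V_{t,i}$, the self-normalized martingale inequality of~\cite{abbasi2011improved} applies with regularizer $\lambda_{\min}$ and yields $\norm{\sum x_{s,j}\eta_{s,j}}_{V_{t,i}^{-1}} \le \sqrt{2\log(2/\alpha) + d\log(1+MT/(d\lambda_{\min}))}$, where the log-determinant is controlled using $\norm{x_{s,j}}\le 1$ and the at-most-$MT$ outer products. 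For the privacy bias-noise term, $\norm{h_{t'}}_{V_{t,i}^{-1}} \le \norm{h_{t'}}_{H_{t'}^{-1}} \le \nu$. For the regularization/covariance-noise term, $\norm{H_{t'}\theta^*}_{V_{t,i}^{-1}} \le \norm{H_{t'}\theta^*}_{H_{t'}^{-1}} = \norm{\theta^*}_{H_{t'}} \le \sqrt{\lambda_{\max}}$ using $\norm{\theta^*}\le 1$. Summing these establishes $\norm{\hat{\theta}_{t,i}-\theta^*}_{V_{t,i}} \le \beta_T$ simultaneously for all $(t,i)$, so setting $\beta_{t,i} = \beta_T$ makes the confidence sets valid. (One subtlety I would flag is the filtration: simultaneous plays by different agents within a round must be linearized so that the martingale structure of $\sum x_{s,j}\eta_{s,j}$ is preserved.)

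Given valid confidence sets, optimism is standard: since $a_{t,i}$ maximizes the UCB index and $\theta^*$ lies in the ellipsoid, the instantaneous regret obeys $r_{t,i} \le 2\beta_T \min\{1,\norm{x_{t,i}}_{V_{t,i}^{-1}}\}$, the cap by $1$ following from Assumption~\ref{ass:bounded}, so that $R_M(T) \le 2\beta_T \sum_{t,i}\min\{1,\norm{x_{t,i}}_{V_{t,i}^{-1}}\}$. The remaining work is an elliptical-potential argument that must absorb the batch delay. Let $\bar{V}_{t,i} = \lambda_{\min} I + \sum_{j}\sum_{s\le t-1} x_{s,j}x_{s,j}^\top + x_{t,i}x_{t,i}^\top$ be the idealized up-to-date global Gram matrix. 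The agent's matrix $V_{t,i}$ differs from $\bar{V}_{t,i}$ only in that it replaces the floor $\lambda_{\min}I$ by the larger $H_{t'}$ and omits the in-batch contributions of the other $M-1$ agents, at most $(M-1)B$ rank-one terms. Splitting the $MT$ plays into those where $V_{t,i}$ is ``fresh'' (global determinant ratio $\det(\bar{V}_{t,i})/\det(V_{t,i}) \le 2$) and the stale remainder, the fresh plays satisfy $\norm{x_{t,i}}_{V_{t,i}^{-1}} \le \sqrt{2}\,\norm{x_{t,i}}_{\bar{V}_{t,i}^{-1}}$ and feed, via Cauchy--Schwarz over $MT$ terms and the telescoping determinant bound, into the leading term $\beta_T\sqrt{dMT\log(1+MT/(d\lambda_{\min}))}$; the stale plays are counted by a doubling argument over the $K=T/B$ batches, each admitting $O(Md)$ log-doublings at cost $O(B)$, producing the additive $O(MBd\log(1+MT/(d\lambda_{\min})))$ term.

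I expect the third step to be the main obstacle. The injected privacy noise makes the perturbed matrices $V_{t,i}$ non-monotone in $t$ (unlike the noiseless setting where the Gram matrix only grows), so the usual ``determinant is increasing'' potential argument breaks down if applied directly to $V_{t,i}$. My plan to circumvent this is to run the determinant-ratio and counting argument against the \emph{noiseless} idealized sequence $\bar{V}_{t,i}$, which is monotone, while using Assumption~\ref{ass:reg} to guarantee a uniform lower curvature $\lambda_{\min} I$ and upper bound $\lambda_{\max}$ on the noisy matrices. Getting the batch-delay accounting to yield exactly the $MBd\log$ scaling while simultaneously controlling the noise-induced determinant inflation uniformly over all $MT$ plays is where the bulk of the technical care is required.
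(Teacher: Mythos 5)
Your proposal follows essentially the same route as the paper's proof: the identical three-term confidence-ellipsoid decomposition bounded via $V_{t,i}\succeq H_{t'}$ together with the self-normalized inequality at regularizer $\lambda_{\min}$, then optimism, then a good/bad (fresh/stale) epoch split with the determinant trick against an imaginary single-agent Gram matrix $\bar{V}_{t,i}$, yielding the leading $\beta_T\sqrt{dMT\log(\cdot)}$ term and the additive $O(MBd\log(\cdot))$ staleness term. The one care point --- which you already flag and resolve correctly in your final paragraph --- is that the determinant-doubling criterion must be posed between two \emph{noiseless} matrices (the paper uses $\det(V_k)/\det(V_{k-1})\le 2$ per epoch after first passing from the noisy $V_{t,i}$ to $G_{t,i}+\lambda_{\min}I$ via Assumption~\ref{ass:reg}), not as the ratio $\det(\bar{V}_{t,i})/\det(V_{t,i})$ with the noisy $V_{t,i}$ written in your middle paragraph, since neither of those two matrices Loewner-dominates the other.
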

Lemma~\ref{lem:informal} is a corollary of the above result, which holds by bounding $\lambda_{\max}, \lambda_{\min}, \nu$ under sub-Gaussian privacy noise. 
\begin{assumption}[sub-Gaussian private noise]
\label{ass:subG}
There exist constants $\widetilde{\sigma}_1$ and $\widetilde{\sigma}_2$ such that for all $t = kB$, $k \in [K]$:  (i) $\sum_{i=1}^M n_{t,i}$ is a random vector whose entries are independent, mean zero, sub-Gaussian with variance at most $\widetilde{\sigma}_1^2$, and
(ii) $\sum_{i=1}^M N_{t,i}$ is a random symmetric matrix whose entries on and above the diagonal are independent sub-Gaussian random variables with variance at most $\widetilde{\sigma}_2^2$. Let $\sigma^2 \!=\! \max\lbrace \widetilde \sigma_1^2,\widetilde \sigma_2^2\rbrace$.
\end{assumption}
Now, we are ready to state the formal version of Lemma~\ref{lem:informal} as follows. 
\begin{lemma}[Formal statement of Lemma~\ref{lem:informal}]
\label{lem:subG}
Let Assumptions~\ref{ass:subG} and~\ref{ass:bounded} hold. Fix time horizon $T \in \Nat$, batch size $B \in [T]$, confidence level $\alpha \in (0,1]$. Set $\lambda = \Theta(\max\{1, \sigma (\sqrt{d}+ \sqrt{\log(T/(B\alpha))}\})$ and $\beta_{t,i} = \sqrt{2\log\left(\frac{2}{\alpha}\right) + d\log\left(1+\frac{Mt}{d\lambda}\right)} + \sqrt{\lambda} $  for all $i \in [M]$. Then, Algorithm~\ref{alg:FedLUCB-SDP} achieves group regret 
\begin{align*}
    \text{Reg}_M(T) = O\left({d M  B}\log T + d\sqrt{MT}\log(MT/\alpha)\right) + O\left(\sqrt{\sigma MT\log(MT)}d^{3/4} \log^{1/4} (T/(B\alpha))\right)
\end{align*}
with probability at least $1-\alpha$.
\end{lemma}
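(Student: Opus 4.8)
The plan is to derive Lemma~\ref{lem:subG} as a corollary of the general regret bound in Lemma~\ref{lem:general}, so the main task is to translate the sub-Gaussian noise condition (Assumption~\ref{ass:subG}) into explicit high-probability bounds on the regularity constants $\lambda_{\max}$, $\lambda_{\min}$, and $\nu$ appearing in Assumption~\ref{ass:reg}. Recall $H_t = \lambda I_d + \sum_{i\in[M]} N_{t,i}$ and $h_t = \sum_{i\in[M]} n_{t,i}$, where $\sum_i N_{t,i}$ is a symmetric random matrix with independent sub-Gaussian entries (variance $\le \widetilde\sigma_2^2$) on and above the diagonal, and $\sum_i n_{t,i}$ is a sub-Gaussian random vector (variance $\le \widetilde\sigma_1^2$). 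First I would control the spectral norm of the noise matrix: by a standard sub-Gaussian random matrix tail bound (e.g., a Matrix-Bernstein or $\varepsilon$-net argument), with probability at least $1-\alpha'$ one has $\norm{\sum_i N_{t,i}} = O(\widetilde\sigma_2(\sqrt d + \sqrt{\log(1/\alpha')}))$. Setting $\lambda = \Theta(\max\{1, \sigma(\sqrt d + \sqrt{\log(T/(B\alpha))})\})$ makes $\lambda$ dominate this operator norm uniformly; a union bound over the $K = T/B$ communication steps supplies the $\log(T/(B\alpha))$ term. This immediately gives $\lambda_{\min} = \Omega(\lambda)$ (so $H_t$ is positive definite and $\norm{H_t^{-1}}\le 1/\lambda_{\min} = O(1/\lambda)$) and $\lambda_{\max} = O(\lambda)$.

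Next I would bound $\nu$, the weighted norm $\norm{h_t}_{H_t^{-1}}$. Since $\norm{H_t^{-1}} \le 1/\lambda_{\min} = O(1/\lambda)$, it suffices to bound $\norm{h_t}_2$: writing $\norm{h_t}_{H_t^{-1}}^2 \le \norm{h_t}_2^2/\lambda_{\min}$. The vector $h_t = \sum_i n_{t,i}$ has independent mean-zero sub-Gaussian coordinates with variance at most $\widetilde\sigma_1^2$, so a sub-Gaussian norm concentration bound gives $\norm{h_t}_2 = O(\widetilde\sigma_1(\sqrt d + \sqrt{\log(1/\alpha')}))$ with high probability, again union-bounded over the $K$ steps. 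Combined with $\lambda_{\min} = \Omega(\lambda)$ and the choice of $\lambda$, this yields $\nu = O(\sqrt{\sigma d} \cdot \log^{1/4}(\cdots))$ up to logarithmic factors — crucially, $\nu$ scales only like $\sqrt\sigma$ rather than $\sigma$, because the $1/\sqrt{\lambda_{\min}} = O(1/\sqrt\lambda) = O(1/\sigma^{1/2})$ factor cancels one power of the noise scale.

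Finally I would substitute these bounds into Lemma~\ref{lem:general}. The confidence radius becomes $\beta_T = \sqrt{2\log(2/\alpha) + d\log(1 + MT/(d\lambda_{\min}))} + \sqrt{\lambda_{\max}} + \nu = O(\sqrt{d\log(MT/\alpha)} + \sqrt\lambda + \nu)$. Plugging into the two terms of Lemma~\ref{lem:general}, the $O(MBd\log(\cdots))$ term reproduces the non-private $O(dMB\log T)$ contribution, while $\beta_T\sqrt{dMT\log(\cdots)}$ splits according to the three pieces of $\beta_T$: the $\sqrt{d\log(MT/\alpha)}$ piece gives $O(d\sqrt{MT}\log(MT/\alpha))$, and the dominant privacy piece comes from $\nu\sqrt{dMT\log(\cdots)} = O(\sqrt{\sigma d}\cdot\sqrt{dMT})\cdot\text{polylog} = O(\sqrt{\sigma MT\log(MT)}\,d^{3/4}\log^{1/4}(T/(B\alpha)))$, matching the claimed bound (the $\sqrt\lambda$ piece is lower order since $\lambda = O(\sigma\sqrt d \cdot \text{polylog})$ gives $\sqrt\lambda\sqrt{dMT} = O(\sigma^{1/2}d^{3/4}\sqrt{MT}\cdot\text{polylog})$, the same order). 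The adjustment to $\beta_{t,i}$ in the statement (using $\sqrt\lambda$ directly, absorbing $\nu$ into $\lambda$ via the choice of $\lambda$) is a cosmetic reparametrization that I would verify is consistent.

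I expect the main obstacle to be the matrix concentration step: obtaining a clean high-probability operator-norm bound on $\sum_i N_{t,i}$ that holds uniformly over all $K$ communication steps with the right dependence on $d$ and $\log(T/(B\alpha))$, while being careful that the entries are only sub-Gaussian (not necessarily Gaussian) and that the matrix is symmetric with dependence only between symmetric entries. A secondary subtlety is ensuring the choice $\lambda = \Theta(\max\{1,\sigma(\sqrt d + \sqrt{\log(T/(B\alpha))})\})$ is simultaneously large enough to dominate $\norm{\sum_i N_{t,i}}$ (guaranteeing positive definiteness and $\lambda_{\min}=\Omega(\lambda)$) yet small enough that the $\sqrt\lambda$ contribution to $\beta_T$ stays within the claimed order; tracking this balance and the exact polylogarithmic factors is where the bookkeeping is most delicate.
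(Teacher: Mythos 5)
Your proposal follows essentially the same route as the paper's proof: bound $\norm{\sum_i N_{t,i}}$ and $\norm{h_t}$ via standard sub-Gaussian matrix/vector concentration with a union bound over the $K=T/B$ communication rounds, choose $\lambda$ to dominate the matrix noise so that $\lambda_{\min}=\Omega(\lambda)$ and $\lambda_{\max}=O(\lambda)$, exploit $\norm{h_t}_{H_t^{-1}}\le \norm{h_t}/\sqrt{\lambda_{\min}}$ to get the crucial $\sqrt{\sigma}$ scaling of $\nu$, and substitute into Lemma~\ref{lem:general}. The only quibble is notational: you should have $\nu = O\bigl(\sqrt{\sigma}\,(\sqrt{d}+\sqrt{\log(T/(B\alpha))})^{1/2}\bigr) = O(\sqrt{\sigma}\,d^{1/4}\cdot\mathrm{polylog})$ rather than $O(\sqrt{\sigma d}\cdot\mathrm{polylog})$, which is what makes the final exponent $d^{3/4}$ (as you correctly state) rather than $d$.
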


\subsection{Proofs}

\begin{proof}[Proof of Lemma~\ref{lem:general}]
We divide the proof into the following six steps. Let $\cE$ be the event given in Assumption~\ref{ass:reg}, which holds with probability at least $1-\alpha$ under Assumption~\ref{ass:reg}. In the following, we condition on the event $\cE$.

\textbf{Step 1: Concentration.} In this step, we will show that with high probability, $\norm{\theta^* - \hat{\theta}_{t,i}}_{V_{t,i}} \le \beta_{t,i}$ for all $i \in [M]$. Fix an agent $i \in [M]$ and $t \in [T]$, let $t_{\text{last}}$ be the latest communication round of all agents before $t$. By the update rule, we have 
\begin{align*}
    \hat{\theta}_{t,i} &= V_{t,i}^{-1}(\widetilde{U}_{\text{syn}} + U_{i}) \\
    &= V_{t,i}^{-1}\left(\sum_{j=1}^M\sum_{s=1}^{t_{\text{last}}}x_{s,j}y_{s,j} + \sum_{j=1}^M n_{t_{\text{last}},j} + \sum_{s=t_{\text{last}}+1}^{t-1}x_{s,i}y_{s,i}\right)\\
    &=\left(\lambda I + \sum_{j=1}^M\sum_{s=1}^{t_{\text{last}}}x_{s,j}x_{s,j}^{\top} + \sum_{j=1}^M N_{t_{\text{last}},j} + \sum_{s=t_{\text{last}}+1}^{t-1}x_{s,i}x_{s,i}^{\top} \right)^{-1} \left(\sum_{j=1}^M\sum_{s=1}^{t_{\text{last}}}x_{s,j}y_{s,j} + \sum_{j=1}^M n_{t_{\text{last}},j} + \sum_{s=t_{\text{last}}+1}^{t-1}x_{s,i}y_{s,i}\right).
\end{align*}
By the linear reward function $y_{s,j} = \inner{x_{s,j}}{\theta^*} + \eta_{s,j}$ for all $j\in [M]$ and elementary algebra, we have 
\begin{align*}
    \theta^* - \hat{\theta}_{t,i} = V_{t,i}^{-1}\left(H_{t_{\text{last}}} \theta^* - \sum_{j=1}^M\sum_{s=1}^{t_{\text{last}}}x_{s,j}\eta_{s,j} - \sum_{s=t_{\text{last}}+1}^{t-1}x_{s,i}\eta_{s,i} - h_{t_{\text{last}}}\right),
\end{align*}
where we recall that $H_{t_{\text{last}}} = \lambda I + \sum_{j=1}^M N_{t_{\text{last}},j}$ and $h_{t_{\text{last}}} = \sum_{j=1}^M n_{t_{\text{last}},j}$.

Thus, multiplying both sides by $V_{t,i}^{1/2}$, yields
\begin{align*}
    \norm{\theta^* - \hat{\theta}_{t,i}}_{V_{t,i}} &\le \norm{\sum_{j=1}^M\sum_{s=1}^{t_{\text{last}}}x_{s,j}\eta_{s,j} + \sum_{s=t_{\text{last}}+1}^{t-1}x_{s,i}\eta_{s,i}}_{V_{t,i}^{-1}} + \norm{H_{t_{\text{last}}}\theta^*}_{V_{t,i}^{-1}} + \norm{h_{t_{\text{last}}}}_{V_{t,i}^{-1}} \\
    &\lep{a} \norm{\sum_{j=1}^M\sum_{s=1}^{t_{\text{last}}}x_{s,j}\eta_{s,j} + \sum_{s=t_{\text{last}}+1}^{t-1}x_{s,i}\eta_{s,i}}_{(G_{t,i} + \lambda_{\text{min}} I)^{-1}} + \norm{\theta^*}_{H_{t_{\text{last}}}} + \norm{h_{t_{\text{last}}}}_{H_{t_{\text{last}}}^{-1}}\\
     &\lep{b} \norm{\sum_{j=1}^M\sum_{s=1}^{t_{\text{last}}}x_{s,j}\eta_{s,j} + \sum_{s=t_{\text{last}}+1}^{t-1}x_{s,i}\eta_{s,i}}_{(G_{t,i} + \lambda_{\text{min}} I)^{-1}} + \sqrt{\lambda_{\text{max}}} + \nu
\end{align*}
where (a) holds by $V_{t,i} \succeq H_{t_{\text{last}}}$ and $V_{t,i} \succeq G_{t,i} + \lambda_{\text{min}} I$ with $G_{t,i} := \sum_{j=1}^M\sum_{s=1}^{t_{\text{last}}}x_{s,j}x_{s,j}^{\top} + \sum_{s=t_{\text{last}}+1}^{t-1}x_{s,i}x_{s,i}^{\top}$ (i.e., non-private Gram matrix) under event $\cE$; (b) holds by the boundedness of $\theta^*$ and event $\cE$. 

For the remaining first term, we can use self-normalized inequality (cf. Theorem 1 in~\cite{abbasi2011improved}) with a proper filtration\footnote{In particular, by the i.i.d noise assumption across time and agents, one can simply construct the filtration sequentially across agents and rounds, which enlarges the single-agent filtration by a factor of $M$.}. In particular, we have for any $\alpha \in (0,1]$, with probability at least $1-\alpha$, for all $t \in [T]$
\begin{align*}
\norm{\sum_{j=1}^M\sum_{s=1}^{t_{\text{last}}}x_{s,j}\eta_{s,j} + \sum_{s=t_{\text{last}}+1}^{t-1}x_{s,i}\eta_{s,i}}_{(G_{t,i} + \lambda_{\text{min}} I)^{-1}}  \le \sqrt{2\log\left(\frac{1}{\alpha}\right) + \log\left(\frac{\det(G_{t,i} + \lambda_{\min}I) }{\det(\lambda_{\min} I)}\right)}.
\end{align*}
Now, using the trace-determinant lemma (cf. Lemma 10 in~\cite{abbasi2011improved}) and the boundedness condition on $\norm{x_{s,j}}$ for all $s\in [T]$ and $j \in [M]$, we have 
\begin{align*}
    \det(G_{t,i} + \lambda_{\min}I) \le \left(\lambda_{\min} + \frac{Mt }{d}\right)^d.
\end{align*}
Putting everything together, we have with probability at least $1-2\alpha$, for all $i \in [M]$ and all $t\in [T]$, $\norm{\theta^* - \hat{\theta}_m}_{V_{t,i}} \le \beta_{t,i} = \beta_t$, where 
\begin{align}
\label{eq:beta}
    \beta_{t}:=\sqrt{2\log\left(\frac{1}{\alpha}\right) + d\log\left(1+\frac{Mt}{d\lambda_{\min}}\right)} + \sqrt{\lambda_{\max}} + \nu.
\end{align}
\textbf{Step 2: Per-step regret.} With the above concentration result, based on our UCB policy for choosing the action, we have the classic bound on the per-step regret $r_{t,i}$, that is, with probability at least $1-2\alpha$  
\begin{align*}
    r_{t,i} &= \inner{\theta^*}{x_{t,i}^*} - \inner{\theta^*}{x_{t,i}}\nonumber\\
    &\ep{a} \inner{\theta^*}{x_{t,i}^*} - \text{UCB}_{t,i}(x_{t,i}^*) + \text{UCB}_{t,i}(x_{t,i}^*) - \text{UCB}_{t,i}(x_{t,i}) + \text{UCB}_{t,i}(x_{t,i})- \inner{\theta^*}{x_{t,i}} \\
    & \lep{b} 0 + 0 + 2 \beta_{t,i} \norm{x_{t,i}}_{V_{t,i}^{-1}}  \le 2 \beta_{T} \norm{x_{t,i}}_{V_{t,i}^{-1}}
\end{align*}
where in (a), we let $\text{UCB}_{t,i}(x) := \inner{\hat{\theta}_{t,i}}{x} + \beta_{t,i} \norm{x}_{V_{t,i}^{-1}}$; (b) holds by the optimistic fact of UCB (from the concentration), greedy action selection, and the concentration result again. 

\textbf{Step 3: Regret decomposition by good and bad epochs.} In Algorithm~\ref{alg:FedLUCB-SDP}, at the end of each synchronization time $t = k B$ for $k \in [K]$, all the agents will communicate with the server by uploading private statistics and downloading the aggregated ones from the server. We then divide time horizon $T$ into epochs by the communication (sync) rounds. In particular, the $k$-th epoch contains rounds between $(t_{k-1},t_k]$, where $t_k = kB$ is the $k$-th sync round. We define $V_k := \lambda_{\text{min}} I + \sum_{i=1}^M \sum_{t=1}^{t_k} x_{t,i}x_{t,i}^{\top}$, i.e., all the data at the end of the $k$-th communication plus a regularizer. Then, we say that the $k$-th epoch is a ``good'' epoch if $\frac{\det(V_k)}{\det(V_{k-1})} \le 2$; otherwise it is a ``bad'' epoch. 
Thus, we can divide the group regret into two terms: $$
\text{Reg}_{M}(T) = \sum_{i \in [M]} \sum_{t \in\text{good epochs} } r_{t,i} + \sum_{i\in [M]} \sum_{t \in \text{bad epochs} } r_{t,i}.$$

\textbf{Step 4: Bound the regret in good epochs.} To this end, we introduce an \emph{imaginary} single agent that pulls all the $MT$ actions in the following order:  $x_{1,1,}, x_{1,2}, \ldots, x_{1,M}, x_{2,1}, \ldots, x_{2,M}, \ldots, x_{T,1}, \ldots, x_{T,M}$. We define a corresponding \emph{imaginary} design matrix $\bar{V}_{t,i} = \lambda_{\text{min}} I + \sum_{p<t, q\in [M]} x_{p,q}x_{p,q}^{\top}  + \sum_{p=t, q <i} x_{p,q}x_{p,q}^{\top}$, i.e., the design matrix right \emph{before}  $x_{t,i}$. The key reason behind this construction is that one can now use the standard result (i.e., the elliptical potential lemma (cf. Lemma 11 in~\cite{abbasi2011improved})) to bound the summation of bonus terms, i.e., $\sum_{t,i} \norm{x_{t,i}}_{\bar{V}_{t,i}^{-1}}$. 

Suppose that $t\in [T]$ is within the $k$-th epoch. One {key property} we will use is that for all $i$, $V_{k} \succeq \bar{V}_{t,i}$ and $G_{t,i} + \lambda_{\min} I \succeq V_{k-1}$, which simply holds by their definitions. This property enables us to see that for any $t \in \text{good epochs}$, $\det(\bar{V}_{t,i}) / \det(G_{t,i} + \lambda_{\min} I) \le 2$. This is important since by the standard ``determinant trick'', we have
\begin{align}
\label{eq:det-trick}
   \norm{x_{t,i}}_{(G_{t,i} + \lambda_{\min} I)^{-1}} \le \sqrt{2}\norm{x_{t,i}}_{\bar{V}_{t,i}^{-1}}.
\end{align}
In particular, this follows from Lemma 12 in~\cite{abbasi2011improved}, that is, for two positive definite matrices  $A, B\in \mathbb{R}^{d \times d}$ satisfying $A \succeq B$, then for any $x \in \mathbb{R}^d$, $\norm{x}_{A} \le \norm{x}_B \cdot \sqrt{\det(A)/\det(B)}$. Note that here we also use $\det(A) = 1/\det(A^{-1})$.
Hence, we can bound the regret in good epochs as follows. 
\begin{align}
\label{eq:good-regret}
    \sum_{i \in [M]} \sum_{t \in \text{good epochs} } r_{t,i} &\lep{a} \sum_{i \in [M]} \sum_{t \in \text{good epochs}}  \min\{ 2\beta_T\norm{x_{t,i}}_{V_{t,i}^{-1}},1\}\nonumber\\
    &\lep{b}\sum_{i \in [M]} \sum_{t \in \text{good epochs}}  \min\{ 2\beta_T\norm{x_{t,i}}_{(G_{t,i} + \lambda_{\min} I)^{-1}},1\} \nonumber\\
    & \lep{c} \sum_{i \in [M]} \sum_{t \in \text{good epochs}}  \min\{ 2\sqrt{2} \beta_{T}\norm{x_{t,i}}_{\bar{V}_{t,i}^{-1}},1\} \nonumber\\
    & \lep{d} \sum_{i \in [M]} \sum_{t \in \text{good epochs}} 2\sqrt{2} \beta_{T} \min\{ \norm{x_{t,i}}_{\bar{V}_{t,i}^{-1}},1\}\nonumber\\
     &\le \sum_{i \in [M]} \sum_{t\in [T] } 2\sqrt{2} \beta_{T} \min\{ \norm{x_{t,i}}_{\bar{V}_{t,i}^{-1}},1\}\nonumber\\
     &\lep{e} O\left(\beta_{T} \sqrt{dMT \log\left(1+\frac{MT}{d\lambda_{\min}}\right)} \right),
\end{align}
where (a) holds by the per-step regret bound in Step 2 and the boundedness of reward; (b) follows from the fact that $V_{t,i} \succeq G_{t,i} + \lambda_{\text{min}} I$  under event $\cE$;  (c) holds by~\eqref{eq:det-trick} when $t$ is in good epochs; (d) is true since $\beta_{T} \ge 1$; (e) holds by the elliptical potential lemma (cf. Lemma 11 in~\cite{abbasi2011improved}).

\textbf{Step 5: Bound the regret in bad epochs.} Let $T_{\text{bad}}$ be the total number of rounds in all bad epochs. Thus, the total number of bad rounds across \emph{all} agents are $M \cdot T_{\text{bad}}$. As a result, the cumulative group regret in all these bad rounds are upper bounded by $M \cdot T_{\text{bad}}$ due to the to the boundedness of reward. 

We are left to bound $T_{\text{bad}}$. All we need is to bound the $N_{\text{bad}}$ -- total number of bad epochs. Then, we have $T_{\text{bad}} = N_{\text{bad}} \cdot B$,  where $B$ is the fixed batch size. To this end, recall that $K = T/B$ and define ${\Psi} := \{k \in [K]: \log\det(V_k) - \log\det(V_{k-1}) > \log 2\}$, i.e., $N_{\text{bad}} = |{\Psi}|$. Thus, we have 
\begin{align*}
    \log 2 \cdot |{\Psi}| \le \sum_{k \in {\Psi}}  \log\det(V_k) - \log\det(V_{k-1}) \le \sum_{k \in [K]}  \log\det(V_k) - \log\det(V_{k-1}) \le d\log\left(1+\frac{MT}{d\lambda_{\min}}\right)
\end{align*}
Hence, we have $N_{\text{bad}} = |{\Psi}| \le \frac{d}{\log 2} \log\left(1+\frac{MT}{d\lambda_{\min}}\right)$. Thus we can bound the regret in bad epochs as follows.
\begin{align}
\label{eq:bad-regret}
    \sum_{i\in [M]} \sum_{t \in \text{bad epochs} } r_{t,i} &\le M\cdot T_{\text{bad}} =  M\cdot B\cdot N_{\text{bad}} \le M \cdot B \cdot \frac{d}{\log 2} \log\left(1+\frac{MT}{d\lambda_{\min}}\right).
\end{align}
\textbf{Step 6: Putting everything together.} Now, we substitute the total regret in good epochs given by~\eqref{eq:good-regret} and total regret in bad epochs given by~\eqref{eq:bad-regret} into the total regret decomposition in Step 3, yields the final cumulative group regret 
\begin{align*}
    \text{Reg}_M(T) = O\left(\beta_{T} \sqrt{dMT \log\left(1+\frac{MT}{d\lambda_{\min}}\right)} \right) + O\left(  M \cdot B \cdot {d}\log\left(1+\frac{MT}{d\lambda_{\min}}\right)\right),
\end{align*}
where 
$\beta_{T}:=\sqrt{2\log\left(\frac{1}{\alpha}\right) + d\log\left(1+\frac{MT}{d\lambda_{\min}}\right)} + \sqrt{\lambda_{\max}} + \nu$.
Finally, taking a union bound, we have the required result. 
\end{proof}
Now, we turn to the proof of Lemma~\ref{lem:subG}, which is an application of Lemma~\ref{lem:general} we just proved. 
\begin{proof}[Proof of Lemma~\ref{lem:subG}]
To prove the result, thanks to Lemma~\ref{lem:general}, we only need to determine the three constants $\lambda_{\max}, \lambda_{\min}$ and $\nu$ under the sub-Gaussian private noise assumption in Assumption~\ref{ass:subG}. To this end, we resort to concentration bounds for sub-Gaussian random vector and random matrix. 

To start with, under (i) in Assumption~\ref{ass:subG}, by the concentration bound for the norm of a vector containing sub-Gaussian entries (cf. Theorem 3.1.1 in~\cite{vershynin2018high}) and a union bound over all communication rounds, we have for all  $t = kB$ where $k = [T/B]$ and any $\alpha \in (0,1]$, with probability at least $1-\alpha/2$, for some absolute constant $c_1$,
\begin{align*}
    \norm{\sum_{i=1}^M n_{t,i}} = \norm{h_t} \le \Sigma_n:= c_1\cdot \widetilde{\sigma}_1 \cdot(\sqrt{d} + \sqrt{\log(T/(\alpha B)}).
\end{align*}

By (ii) in Assumption~\ref{ass:subG}, the concentration bound for the norm of a sub-Gaussian symmetric random matrix (cf. Corollary 4.4.8 in~\cite{vershynin2018high}) and a union bound over all communication rounds, we have for all $t = kB$ where $k = [T/B]$ and any $\alpha \in (0,1]$, with probability at least $1-\alpha/2$, 
\begin{align*}
    \norm{\sum_{i=1}^M N_{t,i}} \le \Sigma_N:= c_2\cdot \widetilde{\sigma}_2 \cdot(\sqrt{d} + \sqrt{\log(T/(\alpha B )})
\end{align*}
for some absolute constant $c_2$. Thus, if we choose $\lambda = 2 \Sigma_N$, we have $\norm{H_t} = \norm{\lambda I_d + \sum_{i=1}^M N_{t,i} } \le 3\Sigma_N$, i.e., $\lambda_{\max} = 3\Sigma_N$, and $\lambda_{\min} = \Sigma_N$. Finally, to determine $\nu$, we note that 
\begin{align*}
    \norm{h_t}_{H_t^{-1}} \le \frac{1}{\sqrt{\lambda_{\min}}} \norm{h_t} \le c\cdot \left(\sigma \cdot (\sqrt{d} + \sqrt{\log(T/(\alpha B)})\right)^{1/2} := \nu,
\end{align*}
where $\sigma= \max\{\widetilde{\sigma}_1,\widetilde{\sigma}_2\}$. The final regret bound is obtained by plugging the three values into the result given by Lemma~\ref{lem:general}.
\end{proof}

\section{Discussion on Private Adaptive Communication}
\label{app:priv-com}

In the main paper and Appendix~\ref{app:gap}, we have pointed out that the gap in privacy guarantee of Algorithm 1 in~\cite{dubey2020differentially} is that its adaptive communication schedule leads to privacy leakage due to its dependence on non-private data. As mentioned in Remark~\ref{rem:weak-FedDP}, one possible approach is to use private data to determine the sync in~\eqref{eq:sync-app}. This will resolve the privacy issue. However, the same issue in communication cost still remains (due to privacy noise), and hence $O(\log T)$ communication does not hold. Moreover, this new approach will also lead to new challenges in regret analysis, when compared with its current one in~\cite{dubey2020differentially} and the standard one in~\cite{wang2019distributed}.


To better illustrate the new challenges, let us restate Algorithm 1 in~\cite{dubey2020differentially} using our notations and first focus on how to establish the regret based on its current adaptive schedule (which has the issue of privacy leakage). After we have a better understanding of the idea, we will see how new challenges come up when one uses private data for an adaptive schedule.

\begin{algorithm}[!t]
   \caption{Restatement of Algorithm 1 in~\cite{dubey2020differentially}}
  \label{alg:FedLUCB-dubey}
\begin{algorithmic}[1]
  \STATE {\bfseries Parameters:} Adaptive communication parameter $D$, regularization $\lambda >0$, confidence radii $\{\beta_{t,i}\}_{t\in [T], i\in [M]}$, feature map $\phi_i:\cC_i \times \cK_i \to \Real^d$, privacy budgets $\epsilon>0, \delta \in [0,1]$.
  \STATE {\bfseries Initialize:} For all $i \in [M]$, $W_{i} = 0, U_{i} = 0$, {\priv} with $\epsilon, \delta$, $\widetilde{W}_{\text{syn}} = 0$, $\widetilde{U}_{\text{syn}} = 0$, 
 \FOR{$t\!=\!1, \ldots, T$}
  
    \FOR{each agent $i = 1,\ldots, M$}
      \STATE $V_{t,i} =  \lambda I+ \widetilde{W}_{\text{syn}} + W_{i}$, $\hat{\theta}_{t,i} = V_{t,i}^{-1}(\widetilde{U}_{\text{syn}} + U_{i}) $
      \STATE Play arm $a_{t,i} \!=\! \argmax_{a \in \cK_i} \inner{\phi_i(c_{t,i}, a)} {\hat{\theta}_{t,i}} + \beta_{t,i} \norm{\phi_i(c_{t,i},a)}_{V_{t,i}^{-1}}$ and set $x_{t,i} = \phi_i(c_{t,i}, a_{t,i})$
      \STATE Observe reward $y_{t,i}$
      \STATE Update $W_{i} = W_{i} + x_{t,i}x_{t,i}^{\top}$, $U_{i} = U_{i} + x_{t,i}y_{t,i}$
    \textcolor{DarkRed}{
      \IF{$\log\det({V}_{t,i} + x_{t,i}x_{t,i}^{\top}) - \log\det( V_{\text{last}}) > \frac{D}{ t - t_{\text{last}}}$} \alglinelabel{line:det-sync}
        \STATE Send a signal to the server to start a synchronization round.
      \ENDIF}   
      \IF{{a synchronization is started}}
        \STATE Send $W_{i}$ and $U_{i}$ to {\priv}
        \STATE {\priv} sends private cumulative statistics $\widetilde{W}_{t,i}, \widetilde{U}_{t,i}$ to server
        \STATE Server aggregates $\widetilde{W}_{\text{syn}} = \widetilde{W}_{\text{syn}} + \sum_{j=1}^M \widetilde{W}_{t,j}$ and $\widetilde{U}_{\text{syn}} = \widetilde{U}_{\text{syn}} + \sum_{j=1}^M \widetilde{U}_{t,j}$
        \STATE Receive $\widetilde{W}_{\text{syn}}$ and $\widetilde{U}_{\text{syn}}$ from the server
        \STATE Reset $W_{i} = 0$, $U_{i} = 0$, \textcolor{DarkRed}{$t_{\text{last}} = t$ and $V_{\text{last}} = \lambda I + \widetilde{W}_{syn}$} 
      \ENDIF
      
    \ENDFOR
  \ENDFOR
\end{algorithmic}
\end{algorithm}

As shown in Algorithm~\ref{alg:FedLUCB-dubey}, the key difference compared with our fixed-batch schedule is highlighted in color. Note that we only focus on silo-level LDP and use {\priv} to represent a general protocol that can privatize the communicated data (e.g., $\cP$ or the standard tree-based algorithm in~\cite{dubey2020differentially}).

\subsection{Regret Analysis under Non-private Adaptive Schedule}

In this section, we demonstrate the key step in establishing the regret with the non-private adaptive communication schedule in Algorithm~\ref{alg:FedLUCB-dubey} (i.e., line 9). It turns out that the regret analysis is very similar to our proof for Lemma~\ref{lem:general} for the fixed batch case, in that the only key difference lies in Step 5 when bounding the regret in bad epochs\footnote{There is another subtle but important difference, which lies in the construction of filtration that is required to apply the standard self-normalized inequality to establish the concentration result. We believe that one cannot directly use the standard filtration (e.g.,~\cite{abbasi2011improved}) in the adaptive case, and hence more care is indeed required.}. The main idea behind adaptive communication is: \emph{whenever the accumulated local regret at any agent exceeds a threshold, then synchronization is required to keep the data homogeneous among agents.} This idea is directly reflected in the following analysis.



\textbf{Bound the regret in bad epochs (adaptive communication case).} 
Let's consider an arbitrary bad epoch $k$, i.e., $(t_{k-1}, t_k]$, where $t_k$ is the round for the $k$-th communication. For all $i$, we want to bound the total regret between $(t_{k-1}, t_k]$, denoted by $R_i^k$. That is, the local regret between any two communications (in the bad epoch) will not be too large. For now, suppose we already have such a bound $U$ (which will be achieved by adaptive communication later), i.e., $R_i^k \le U$ for all $i, k$, we can easily bound the total regret in bad epochs. To see this, recall that ${\Psi} := \{k \in [K]: \log\det(V_k) - \log\det(V_{k-1}) > \log 2\}$, i.e., $N_{bad} = |{\Psi}|$, we have 
\begin{align*}
    \sum_i \sum_{t \in \text{bad epochs} } r_{t,i} = \sum_i \sum_{k \in \Psi } R_i^k = O\left( |\Psi| M  U\right).
\end{align*}
Plugging in $N_{bad} = |{\Psi}| \le \frac{d}{\log 2} \log\left(1+\frac{MT}{d\lambda}\right)$, we have the total regret for bad epochs. Now, we are only left to find $U$. Here is the place where the adaptive schedule in the algorithm comes in. 
First, note that
\begin{align}
   \sum_{t_{k-1} < t < t_k} r_{t,i} &\lep{a}  \sum_{t_{k-1} < t < t_k}  \min\{ 2\beta_{T}\norm{x_{t,i}}_{V_{t,i}^{-1}},1\}\label{eq:5} \\
   &\lep{b} O\left(\beta_{T}\sqrt{(t_k - t_{k-1}) \log \frac{\det V_{t_k,i}}{\det V_{{\text{last}}}}}\right)\label{eq:det}\\
   &\lep{c} O\left(\beta_T \sqrt{D}\right)\nonumber,
\end{align}
where (a) holds by boundedness of reward; (b) follows from the elliptical potential lemma, i.e., $V_{\text{last}}$ is PSD under event $\cE$ and $V_{t,i} = V_{t-1,i} + x_{t-1,i}x_{t-1,i}^{\top}$ for all $t \in (t_{k-1}, t_k)$; 
(c) holds by the adaptive schedule in line 9 of Algorithm~\ref{alg:FedLUCB-dubey}.
As a result, we have $R_i^k \le  O\left(\beta_{T} \sqrt{D}\right) + 1$, where the regret at round $t_k$ is at most $1$ by the boundedness of reward. With a proper choice of $D$, one can obtain a final regret bound. 

\subsection{Challenges in Regret Analysis under Private Adaptive Schedule}
Now, we will discuss new challenges when one uses private data for an adaptive communication schedule. In this case, one needs to first privatize the new local gram matrices (e.g., $\sum_{s=t_{\text{last}}+1}^t x_{s,i}x_{s,i}^{\top}$) before being used in the determinant condition. This can be done by using standard tree-based algorithm with each data point as $x_{s,i}x_{s,i}^{\top}$. With this additional step, now the determinant condition becomes 
\begin{align}
\label{eq:det-new}
    \log\det(\widetilde{V}_{t,i}) - \log\det( V_{\text{last}}) > \frac{D}{ t - t_{\text{last}}},
\end{align}
where $\widetilde{V}_{t,i} := V_{\text{last}} + \sum_{s = t_{\text{last}}+1}^ t x_{s,i}x_{s,i}^{\top} + N^{\text{loc}}_{t,i}$ and $N_{t,i}^{\text{loc}}$ is the new local injected noise for private schedule up to time $t$. Now suppose one uses~\eqref{eq:det-new} to determine $t_{k}$. Then, it does not imply that~\eqref{eq:det} is upper bounded by $\beta_T\sqrt{D}$. That is, $\frac{\det(\widetilde{V}_{t,i})}{\det( V_{\text{last}})} \le D'$ does not necessarily mean that $\frac{\det(V_{\text{last}} + \sum_{s = t_{\text{last}}+1}^ t x_{s,i}x_{s,i}^{\top})}{\det( V_{\text{last}})} \le D'$.

One may try to work around~\eqref{eq:5} by first using $G_{t,i} + \lambda_{\min} I$ to lower bound $V_{t,i}$. Then,~\eqref{eq:det} becomes $O\left(\beta_{T}\sqrt{(t_k - t_{k-1}) \log \frac{\det (G_{t_k,i} + \lambda_{\min}I)}{\det (G_{t_{k-1},i}+\lambda_{\min} I)}}\right) $, which again cannot be bouded based on the rule given by~\eqref{eq:det-new}. To see this, note that $\frac{\det(\widetilde{V}_{t_k-1,i})}{\det( V_{\text{last}})} \le D'$  only implies that $\frac{\det (G_{t_k,i} + \lambda_{\min}I)}{\det (G_{t_{k-1},i}+\lambda_{\max} I)} \le D'$.

\section{Additional Details on Federated LCBs under Silo-Level LDP}
\label{app:LDP}

In this section, we provide details for Section~\ref{sec:main-LDP}. In particular, we present the proof for Theorem~\ref{thm:LDP} and the alternative privacy protocol for silo-level LDP. 

\subsection{Proof of Theorem~\ref{thm:LDP}}
\begin{proof}[Proof of Theorem~\ref{thm:LDP}]

\textbf{Privacy.} We only need to show that $\cP$ in Algorithm~\ref{alg: p-sdp} with a proper choice of $\sigma_0$ satisfies $(\epsilon,\delta)$-DP for all $k \in [K]$, which implies that the full transcript of the communication is private in Algorithm~\ref{alg:FedLUCB-SDP} for any local agent $i$. 


First, we recall that the (multi-variate) Gaussian mechanism satisfies zero-concentrated differential privacy (zCDP)~\cite{bun2016concentrated}. In particular, by~\cite[Lemma 2.5]{bun2016concentrated}, we have that computation of each node (p-sum) in the tree is $\rho$-zCDP with $\rho = \frac{L^2}{2\sigma_0^2}$.
Then, from the construction of the binary tree in $\cP$, one can easily see that one single data point $\gamma_i$ (for all $i \in [K]$) only impacts at most $1+\log(K)$ nodes. Thus, by \emph{adaptive} composition of zCDP (cf. Lemma 2.3 in~\cite{bun2016concentrated}), we have that  the entire releasing of all p-sums is $(1+\log K) \rho$-zCDP. Finally, we will use the conversion lemma from zCDP to approximated DP (cf. Proposition 1.3 in~\cite{bun2016concentrated}). In particular, we have that $\rho_0$-zCDP implies $(\epsilon = \rho_0 + 2\sqrt{\rho_0\cdot \log(1/\delta)}, \delta)$-DP for all $\delta > 0$. In other words, to achieve a given $(\epsilon,\delta)$-DP, it suffices to achieve $\rho_0$-zCDP with $\rho_0 = f(\epsilon,\delta):=(\sqrt{\log(1/\delta) + \epsilon} - \sqrt{\log(1/\delta)})^2$. In our case, we have $\rho_0 = (1+\log (K)) \rho = (1+\log(K))\frac{L^2}{2\sigma_0^2}$. Thus, we have $\sigma_0^2 =  (1+\log(K))\frac{L^2}{2\rho_0} = (1+\log(K))\frac{L^2}{2f(\epsilon,\delta)}$. To simply it, one can lower bound $f(\epsilon,\delta)$ by $\frac{\epsilon^2}{4\log(1/\delta) + 4\epsilon}$ (cf. Remark 15 in~\cite{steinke2022composition}). Therefore, to obtain $(\epsilon,\delta)$-DP, it suffices to set $\sigma_0^2 = 2\cdot L^2 \cdot \frac{(1+\log(K))(\log(1/\delta) + \epsilon)}{\epsilon^2}$.
Note that there are two streams of data in Algorithm~\ref{alg:FedLUCB-SDP}, and hence it suffices to ensure that each of them is $(\epsilon/2,\delta/2)$-DP. This gives us the final noise level $\sigma_0^2 = 8 \frac{(1+\log(K))(\log(2/\delta) + \epsilon)}{\epsilon^2}$ (note that by boundedness assumption $L = 1$ in our case).


\textbf{Regret.} In order to establish the regret bound, thanks to Lemma~\ref{lem:subG}, we only need to determine the maximum noise level in the learning process. Recall that ${\sigma}_0^2 = 8\cdot\frac{(1+\log(K))(\log(2/\delta) + \epsilon)}{\epsilon^2}$ is the noise level for both streams (i.e., $\gamma^{\text{bias}}$ and $\gamma^{\text{cov}}$). Now, by the construction of  binary tree in $\cP$, one can see that each prefix sum $\sum[1,k]$ only involves at most $1+\log(k)$ tree nodes. Thus, we have that the noise level in $n_{t,i}$ and $N_{t,i}$ are upper bounded by $(1+\log(K)) {\sigma}_0^2$. As a result, the overall noise level across all $M$ silos is upper bounded by $\sigma_{\text{total}}^2 = M (1+\log (K)) {\sigma}_0^2$. Finally, setting $\sigma^2$ in Lemma~\ref{lem:subG} to be the noise level $\sigma_{\text{total}}^2$ , yields the required result.  
\end{proof}

\subsection{Alternative Privacy Protocol for Silo-Level LDP}
\label{app:alt}
For silo-level LDP, each local randomizer can simply be the standard tree-based algorithm, i.e., releasing the prefix sum at each communication step $k$ (rather than p-sum in Algorithm~\ref{alg: p-sdp}). The analyzer now becomes a simple aggregation. As before, no shuffler is required in this case. This alternative protocol is given by Algorithm~\ref{alg: p-ldp-alt}, which is essentially the main protocol used in~\cite{dubey2020differentially}.

It can be seen that both privacy and regret guarantees under this $\cP_{\text{alt}}$ are the same as Theorem~\ref{thm:LDP}. To see this, for privacy, the prefix sum is a post-processing of the p-sums. Thus, since we have already shown that the entire releasing of p-sums is private in the proof of Theorem~\ref{thm:LDP}, hence the same as the prefix sum. Meanwhile, the total noise level at the server is the same as before. Thus, by~Lemma~\ref{lem:subG}, we have the same regret bound.

\begin{algorithm}[!t]
\caption{$\mathcal{P}_{\text{alt}}$, an alternative privacy protocol for silo-level LDP}
\label{alg: p-ldp-alt}
\begin{algorithmic}[1]
\STATE {\bfseries Procedure:} Local Randomizer $\mathcal{R}$
\STATE{\textcolor{DarkBlue}{\texttt{// Input: stream data $\gamma = (\gamma_i)_{i\in [K]}$; privacy parameters $\epsilon, \delta$; Output: private prefix sum}}}
\begin{ALC@g}
    \FOR{$k\!=\!1, \ldots, K$}
       \STATE Express $k$ in binary form: $k = \sum_j \text{Bin}_j(k) \cdot 2^j$
       \STATE Find the index of first one $i_k: = \min\{j: \text{Bin}_j(k) =1\}$
       \STATE Compute p-sum $\alpha_{i_k} = \sum_{j < i} \alpha_j + \gamma_k$.
       \STATE Add noise to p-sum $\hat{\alpha}_{i_k} = \alpha_{i_k} + \cN(0,\sigma_0^2 I)$ 
        \STATE Output private prefix sum $\widetilde{s}_k = \sum_{j: \text{Bin}_j(k)=1} \hat{\alpha}_j$
  \ENDFOR
\end{ALC@g}
\STATE {\bfseries end procedure}
\STATE {\bfseries Procedure: Analyzer $\mathcal{A}$ } 
\STATE{\textcolor{DarkBlue}{\texttt{// Input: a collection of $M$ data points, $y = \{y_i\}_{i\in [M]}$; Output: Aggregated sum}}}
\begin{ALC@g}
  \STATE Output $\widetilde{y} = \sum_{i\in [M]} y_i$
\end{ALC@g}
\STATE {\bfseries end procedure}
\end{algorithmic}
\end{algorithm}

\section{Additional Details on Federated LCBs under SDP}
\label{app:SDP}
In this section, we provide more detailed discussions on SDP and present the proof for Theorem~\ref{thm:SDP} (SDP via amplification lemma) and Theorem~\ref{thm:SDP-vec} (SDP via vector sum). 

First, let us start with some general discussions. 

\textbf{Importance of communicating P-sums.}
For SDP, it is important to communicate P-sums rather than prefix sum. 
Note that communicating noisy p-sums in our privacy protocol $\cP$ rather than the noisy prefix sum (i.e., the sum from beginning as done in~\cite{dubey2020differentially}) plays a key role in achieving optimal regret with shuffling. To see this, both approaches can guarantee silo-level LDP. By our new amplification lemma, privacy guarantee can be amplified by $1/\sqrt{M}$ in $\epsilon$ for each of the $K$ shuffled outputs, where $K=T/B$ is total communication rounds. Now, if the prefix sum is released to the shuffler, then any single data point participates in at most $K$ shuffle mechanisms, which would blow up $\epsilon$ by a factor of $O(\sqrt{K})$ (by advanced composition~\cite{dwork2014algorithmic}). This would eventually lead to a $K^{1/4}$ factor blow up in regret due to privacy. Similarly, if we apply $\cP_{\text{Vec}}$ to the data points in the prefix sum, then again a single data point can participate in at most $K$ shuffled outputs. 

On the other hand, if only noisy p-sums are released for shuffling at each communication round $k\in [K]$ (as in our protocol $\cP$) or only the data points in each p-sum are used in $\cP_{\text{Vec}}$ (as in our protocol in $\cP^{\cT}_{\text{Vec}}$), then due to the binary-tree structure, each data point only participates in at most $\log K$ shuffled mechanisms, which only leads to $O(\sqrt{\log K})$ blow-up of $\epsilon$; hence allowing us to achieve the desired $\widetilde{O}(\sqrt{MT})$ regret scaling, and 
close the gap present under silo-level LDP. 

\begin{remark}[Shuffled tree-based mechanism]
Both the protocol $\cP$ in Algorithm~\ref{alg: p-sdp} along with our new amplification lemma and protocol $\cP_{\text{Vec}}^{\cT}$ in Algorithm~\ref{alg: p-vec-T} can be treated as a black-box method, which integrates shuffling into the tree-based mechanism while providing formal guarantees for continual release of sum statistics. Hence, it can be applied to other federated online learning problems beyond contextual bandits.
\end{remark}

\subsection{Amplification lemma for SDP}
We first formally introduce our new amplification lemma, which is the key to our analysis, as mentioned in the main paper.

The motivation for our new amplification result is two-fold: (i) Existing results on privacy amplification via shuffling (e.g.,~\cite{feldman2022hiding,erlingsson2019amplification,cheu2019distributed,balle2019privacy}) are only limited to the standard LDP case, i.e., each local dataset has size $n=1$, which is not applicable in our case where each silo runs a DP (rather than LDP) mechanism over a dataset of size $n=T$;
(ii) Although a recent work~\cite{lowy2021private} establishes a general amplification result for the case of $n>1$, it introduces a very large value for the final $\delta$ that scales linearly with $n$ due to group privacy. 

We first present the key intuition behind our new lemma. Essentially, as in~\cite{lowy2021private}, we follow the nice idea of hiding among the clones introduced in~\cite{feldman2022hiding}. That is, the output from silo $2$ to $n$ can be similar to that of silo $1$ by the property of DP (i.e., creating clones). The key difference between $n=1$ and $n>1$ is that in the latter case, the similarity distance between the output of silo $1$ and $j$ ($j>1$) will be larger as in this case all $n>1$ data points among two silos could be different. To capture this,~\cite{lowy2021private} resorts to group privacy for general DP local randomizers.\footnote{This is because it mainly focuses on the lower bound, where one needs to be general to handle any mechanisms.} However, group privacy for approximate DP will introduce a large value for $\delta$. Thus, since we know that each local randomizer in our case is the Gaussian mechanism, we can capture the similarity of outputs between silo $1$ and $j$ ($j>1$) by directly bounding the sensitivity. This helps to avoid the large value for the final $\delta$. Specifically, we have the following result, which can be viewed as a refinement of Theorem D.5 in~\cite{lowy2021private} when specified to the Gaussian mechanism. We follow the notations in~\cite{lowy2021private} for easy comparison. 

\begin{lemma}[Amplification lemma for Gaussian mechanism]
\label{lem:amp}
Let $\bx = (X_1, \cdots, X_N) \in \XX^{N \times n}$ be a distributed data set, i.e., $N$ silos each with $n$ data points. Let  $r \in \mathbb{N}$ and let $\rand_r(\bz, \cdot): \mathcal{X}^n \to \mathcal{Z} := \Real^d$ be a Gaussian mechanism with $(\epsor, \delor)$-DP, $\epsor \in (0,1)$\footnote{Note that standard Gaussian mechanism only applies to the regime when $\epsilon <1$. In our case, $\epsilon_0^r$ is often less than $1$. Gaussian mechanism also works for the regime $\epsilon > 1$, in this case, $\sigma^2 \approx 1/\epsilon$ rather than $1/\epsilon^2$. With minor adjustment of the final $\epsilon^r$, our proof can be extended. },  for all $\bz = Z_{(1:r-1)}^{(1:N)} \in \mathcal{Z}^{(r-1) \times N}$ and $i \in [N],$ where $\XX$ is an arbitrary set.
Suppose for all $i$, $\max_{\text{any pair}(X,X')}\norm{\rand_r(\bz, X) - \rand_r(\bz, X')} \le n \cdot \max_{\text{adjacent pair}(X,X')}\norm{\rand_r(\bz, X) - \rand_r(\bz, X')}$.\footnote{This is w.l.o.g; one can easily generalize it to any upper bound that is a function of $n$.} Given $\bz = Z_{(1:r-1)}^{(1:N)}$, consider the shuffled algorithm $\mathcal{A}^r_s: \XX^{n \times N} \times \ZZ^{(r-1) \times N} \to \mathcal{Z}^N$ that first samples a random permutation $\pi$ of $[N]$ and then computes $Z_r = (Z^{(1)}_r, \cdots, Z^{(N)}_r),$ where 
$Z^{(i)}_r := \rand_{r}(\bz, X_{\pi(i)}).$ Then, for any $\delta \in [0,1]$ such that $\epsilon^r_0 \le \frac{1}{n}\ln\left(\frac{N}{16 \log(2/\delta)}\right)$,  $\mathcal{A}^r_s$ is $(\epsilon^r, {\delta}^r)$-DP, where \begin{align*} 
\label{eq: epsilon r amplification}
    \epsilon^r &:= \ln\left[1 + \left(\frac{e^{\epsor} - 1}{e^{\epsor} + 1}\right)\left(\frac{8 \sqrt{e^{n\epsor}\log(4/\delta)}}{\sqrt{N}} + \frac{8 e^{n\epsor}}{N} \right) 
    \right] \\ \delta^r &:=\left(\frac{e^{\epsor} - 1}{e^{\epsor} + 1}\right)\delta + N (e^{\epsilon^r}+1)(1+e^{-\epsilon_0^r}/2)\delor.
\end{align*}
If $\epsor \le 1/n$, choosing $\delta = N n \delta_0^r$ yields
$\epsilon^r = O\left(\frac{\epsilon_0^r \sqrt{\log(1/(n N \delta_0^r))}}{\sqrt{N}}\right)$
and $\delta^r = O(N\delta_0^r)$, where $\delta_0^r \le 1/(N n)$.
\end{lemma}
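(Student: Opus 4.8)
The plan is to adapt the ``hiding among the clones'' argument of \cite{feldman2022hiding}, following the reduction of \cite{lowy2021private}, but replacing the black-box group-privacy step by a direct computation tailored to the Gaussian mechanism. First I would fix the prefix $\bz = Z_{(1:r-1)}^{(1:N)}$ and a pair of neighboring distributed datasets $\bx, \bx'$ differing only in the data of one silo, say silo $1$, whose local dataset changes from $X_1$ to $X_1'$. Since the shuffle permutes the $N$ silo outputs uniformly, it suffices to bound the divergence between the laws of the shuffled collection $Z_r$ under $\bx$ and under $\bx'$. The core of the argument rewrites this collection as a mixture: conditioned on the data, each silo $j \ge 2$ independently produces an output that, with a certain probability, is a ``clone'' of silo $1$'s output (distributed exactly as $\rand_r(\bz, X_1)$ or $\rand_r(\bz, X_1')$), and otherwise contributes a residual. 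Shuffling then hides the single changed output among the $C$ clones, and the privacy loss conditioned on $C = c$ clones decays like $1/\sqrt{c}$, as in the standard amplification lemma.

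Second, I would carry out the clone-creation step for the Gaussian randomizer, which is where the novelty lies. To view silo $j$'s output as a clone of silo $1$'s, one must compare $\rand_r(\bz, X_j)$ with $\rand_r(\bz, X_1)$, whose underlying datasets may differ in all $n$ points. For a generic DP mechanism this forces group privacy and inflates $\delta$ by a factor exponential in $n$; instead I would invoke the sensitivity hypothesis $\max_{(X,X')}\norm{\rand_r(\bz,X)-\rand_r(\bz,X')} \le n\cdot\max_{\text{adjacent}}\norm{\rand_r(\bz,X)-\rand_r(\bz,X')}$ to bound the $\ell_2$ distance of the two Gaussian means by $n$ times the per-user sensitivity. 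Since the privacy-loss variable of a Gaussian mechanism is itself Gaussian with variance set by this $\ell_2$ distance, the probability of cloning is governed by $e^{-n\epsor}$ rather than $e^{-\epsor}$, while the additive failure term stays at $O(\delor)$ per silo. This is exactly the trade that pushes the $n$-fold blow-up onto the $\epsilon$ side (as $e^{n\epsor}$) while keeping the contribution to $\delta$ additive and free of any $e^{O(n)}$ factor.

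Third, I would assemble the estimates. The expected number of clones is of order $N e^{-n\epsor}$, so the hypothesis $\epsor \le \frac{1}{n}\ln\big(N/(16\log(2/\delta))\big)$ is precisely the condition $e^{n\epsor} \le N/(16\log(2/\delta))$ guaranteeing enough clones for a Chernoff bound on the binomial clone count $C$ to hold with failure probability absorbable into $\delta$. Substituting the clone probability into the conditional-on-$C$ amplification bound yields $\epsilon^r = \ln\big[1 + (\tfrac{e^{\epsor}-1}{e^{\epsor}+1})(8\sqrt{e^{n\epsor}\log(4/\delta)}/\sqrt{N} + 8e^{n\epsor}/N)\big]$, and collecting the additive slacks (the $\delta$ from the Chernoff step together with $N$ copies of the per-silo $\delor$, each scaled by the coupling factor $(e^{\epsilon^r}+1)(1+e^{-\epsor}/2)$) gives $\delta^r$. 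The asymptotic corollary then follows by taking $\epsor \le 1/n$ so that $e^{n\epsor} = \Theta(1)$, choosing $\delta = Nn\delor$ so that $\log(4/\delta) = \Theta(\log(1/(nN\delor)))$, and using $\ln(1+x) \le x$ to read off $\epsilon^r = O(\epsor\sqrt{\log(1/(nN\delor))}/\sqrt{N})$ and $\delta^r = O(N\delor)$.

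The main obstacle I anticipate is the second step: making the clone coupling rigorous for an approximate-DP ($\delor > 0$) Gaussian mechanism whose neighboring datasets differ in $n$ points, while keeping the $\delta$ cost additive rather than multiplicative in $e^{O(n)}$. Concretely, for each silo $j \ge 2$ one must construct a coupling of $\rand_r(\bz, X_j)$ with a mixture placing the correct clone mass on the law of $\rand_r(\bz, X_1)$; the sensitivity bound controls the dominant Gaussian tail, but one has to verify that the residual mass (the event on which cloning fails) contributes only $O(\delor)$ and does not quietly reintroduce an $n$-dependent term --- which is exactly the failure mode of the naive group-privacy route of \cite{lowy2021private}.
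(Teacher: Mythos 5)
Your proposal is correct and follows essentially the same route as the paper's proof: the ``hiding among the clones'' reduction of \cite{feldman2022hiding} as adapted by \cite{lowy2021private}, with the group-privacy step replaced by a direct sensitivity argument showing that the Gaussian randomizer's outputs on two arbitrary local datasets are $(n\epsor,\delor)$-indistinguishable, so that the clone probability degrades to $e^{-n\epsor}$ while the $\delta$ contribution remains additive in $\delor$ per silo. The paper formalizes your clone-count/Chernoff step by invoking the mixture-decomposition lemmas of \cite{lowy2021private} (their Lemmas D.8--D.13) and Lemma 2.3 of \cite{feldman2022hiding}, and obtains the $\bigl(\tfrac{e^{\epsor}-1}{e^{\epsor}+1}\bigr)$ prefactor via the same convexity rewriting you describe, so the two arguments coincide in substance.
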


\subsection{Vector Sum Protocol for SDP}
\label{app:pvec}

One limitation of our first scheme for SDP is that the privacy guarantee holds only for very small values of $\epsilon$. 
This comes from two factors: one is due to the fact that standard $1/\sqrt{M}$ amplification result requires the local privacy budget to be close to one; the other one comes from the fact that now the local dataset could be $n = T$, which further reduces the range of valid $\epsilon$.

\begin{algorithm}[!t]
\caption{$\mathcal{P}_{\text{Vec}}^{{\cT}
}$, another privacy protocol used in Algorithm~\ref{alg:FedLUCB-SDP}}
\label{alg: p-vec-T}
\begin{algorithmic}[1]
%
\STATE {\bfseries Procedure:} Local Randomizer $\mathcal{R}$ at each agent
\STATE{\textcolor{DarkBlue}{\texttt{// Input: stream data $ (\gamma_1,\ldots,\gamma_K)$, privacy budgets $\epsilon > 0, \delta \in (0,1]$}}}
\begin{ALC@g}
    \FOR{$k\!=\!1, \ldots, K$}
       \STATE Express $k$ in binary form: $k = \sum_j \text{Bin}_j(k) \cdot 2^j$
       \STATE Find index of first one $i_k \!=\! \min\{j: \text{Bin}_j(k) \!=\! 1\}$
        \STATE Let $\cD_k$ be the set of all data points\footnotemark that contribute to $\alpha_{i_k} = \sum_{j < i_k} \alpha_j + \gamma_k$
       \STATE Output $y_k = \cR_{\text{Vec}}(\cD_k) $  {\textcolor{DarkBlue}{\texttt{// apply $R_{\text{Vec}}$ in Algorithm~\ref{alg: Pvec} to each data point}}}
  \ENDFOR
\end{ALC@g}
\STATE {\bfseries Procedure:} Analyzer $\mathcal{A}$ at server
\STATE{\textcolor{DarkBlue}{\texttt{// Input: stream data from $\cS$: $\lbrace\bar{y}_k=(\bar{y}_{k,1},\ldots,\bar{y}_{k,M})\rbrace_{k \in [K]}$ }}}
\begin{ALC@g}
   \FOR{$k\!=\!1, \ldots, K$}
       \STATE Express $k$ in binary and find index of first one $i_k$
        \STATE Add all messages from $M$ agents: $\widetilde{\alpha}_{i_k} = \cA_{\text{Vec}}(\bar{y}_k)$ {\textcolor{DarkBlue}{\texttt{// apply $A_{\text{Vec}}$ in Algorithm~\ref{alg: Pvec}}}}
        \STATE Output: $\widetilde{s}_k = \sum_{j: \text{Bin}_j(k)=1} \widetilde{\alpha}_j$
  \ENDFOR
\end{ALC@g}
\end{algorithmic}
\end{algorithm}
\footnotetext{In our application, each data point means a bias vector or a covariance matrix. See Appendix~\ref{app:pvec} for a concrete example.}

In this section, we give the vector sum protocol in~\cite{cheu2021shuffle} for easy reference. Let's also give a concrete example to illustrate how to combine Algorithm~\ref{alg: Pvec} with Algorithm~\ref{alg: p-vec-T}. Consider a fixed $k = 6$. Then, for each agent, we have $\alpha_{i_6} = \gamma_5 + \gamma_6$. That is, consider the case of summing bias vectors, for agent $i \in [M]$, $\gamma_5 = \sum_{t= 4B+1}^{5B} x_{t,i}y_{t,i}$ and $\gamma_6 = \sum_{t= 5B+1}^{6B} x_{t,i}y_{t,i}$. Then, $\cD_6$ consists of $2B$ data points, each of which is a single bias vector. Now, $\cR_{\text{vec}}$ and $\cA_{\text{vec}}$ (as well the shuffler) work together to compute the noisy sum of $2B\cdot M$ data points. In particular, denote by $\cP_{\text{vec}}$ the whole process, then we have $\widetilde{\alpha}_{i_6} = \cP_{\text{vec}}(\cD_6^M)$, where $\cD_6^M$ is the data set that consists of $n= 2B\cdot M$ data points, each of them is a single bias vector.

Next, we present more details on the implementations, i.e., the parameter choices of $g, b, p$. Let's consider $k=6$ again as an example. In this case, the total number of data points that participate in $\cP_{\text{vec}}$ is $n = 2B\cdot M$. Then, according to the proof of Theorem C.1 in~\cite{pmlr-v162-chowdhury22a}, we have 
\begin{align*}
    g = \max\{2\sqrt{n}, d, 4\}, \quad b = \frac{24\cdot 10^4\cdot g^2\cdot \left(\log\left(\frac{4\cdot(d^2+1)}{\delta}\right)\right)^2}{\epsilon^2n},\quad p = 1/4.
\end{align*}

\begin{algorithm}
\caption{${P}_{\text{vec}}$, a shuffle protocol for vector summation \cite{cheu2021shuffle}}
\label{alg: Pvec}
\begin{algorithmic}[1]
\STATE {\bfseries Input:} Database of $d$-dimensional vectors $\mathbf{X}= (\mathbf{x}_1, \cdots, \mathbf{x}_n$); privacy parameters $\epsilon, \delta$; $L$.
\STATE {\bfseries procedure:} Local Randomizer ${R}_{\text{vec}}(\mathbf{x}_i)$
\begin{ALC@g}
\FOR{$j \in [d]$} 
\STATE Shift component to enforce non-negativity: $\mathbf{w}_{i,j} \gets \mathbf{x}_{i,j} + L$
\STATE $\mathbf{m}_j \gets \mathcal{R}_{1D}(\mathbf{w}_{i,j})$
\ENDFOR
\STATE Output labeled messages $\{(j, \mathbf{m}_j)\}_{j \in [d]}$
\end{ALC@g}
\STATE {\bfseries end procedure}
\STATE {\bfseries procedure: Analyzer} ${A}_{\text{vec}}(\mathbf{y})$ 
\begin{ALC@g}
\FOR{$j \in [d]$}
\STATE Run analyzer on coordinate $j$'s messages $z_j \gets \mathcal{A}_{\text{1D}}(\mathbf{y}_j)$ 
\STATE Re-center: $o_j \gets z_j - n \cdot L$
\ENDFOR
\STATE Output the vector of estimates $\mathbf{o} = (o_1, \cdots o_d)$
\end{ALC@g}
\STATE {\bfseries end procedure}
\end{algorithmic}
\end{algorithm}

\begin{algorithm}
\caption{$\mathcal{P}_{\text{1D}}$, a shuffle protocol for summing scalars \cite{cheu2021shuffle}}
\label{alg: P1D}
\begin{algorithmic}[1]
\STATE {\bfseries Input:} 
Scalar database $X = (x_1, \cdots x_n) \in [0,L]^n$; $g, b \in \mathbb{N}; p \in (0, \frac{1}{2})$. 
\STATE {\bfseries procedure: Local Randomizer $\mathcal{R}_{1D}(x_i)$}

\begin{ALC@g}
\STATE $\bar{x}_i \gets \lfloor x_i g/L \rfloor$.
\STATE Sample rounding value $\eta_1 \sim \textbf{Ber}(x_i g/L - \bar{x}_i)$.
\STATE Set $\hat{x}_i \gets \bar{x}_i + \eta_1$.
\STATE Sample privacy noise value $\eta_2 \sim \textbf{Bin}(b,p)$.
\STATE Report $\mathbf{y}_i \in \{0,1\}^{g + b}$ containing $\hat{x}_i + \eta_2$ copies of $1$ and $g + b - (\hat{x}_i + \eta_2)$ copies of $0$.
\end{ALC@g}
\STATE {\bfseries end procedure}
\STATE{\bfseries procedure: Analyzer} $\mathcal{A}_{\text{1D}}(\mathcal{S}(\mathbf{y}_1,\ldots,\mathbf{y}_n))$
\begin{ALC@g}
\STATE Output estimator $\frac{L}{g}((\sum_{i=1}^{n}\sum_{j=1}^{b+g} (\mathbf{y}_i)_j) - pbn)$.
\end{ALC@g}
\STATE {\bfseries end procedure}
\end{algorithmic}
\end{algorithm}

\subsection{Proofs}
First, we present proof of Theorem~\ref{thm:SDP}.

\begin{proof} [Proof of Theorem~\ref{thm:SDP}]
\textbf{Privacy.}
In this proof, we directly work on approximate DP. By the boundedness assumption and Gaussian mechanism, we have that with ${\sigma}_0^2 = \frac{2L^2 \log(1.25/\hat{\delta}_0)}{\hat{\epsilon}_0^2} $, $\cR$ in $\cP$ is $(\hat{\epsilon}_0, \hat{\delta}_0)$-DP for each communication round $k \in [K]$ (provided $\hat{\epsilon}_0 \le 1$) . Now, by our amplification lemma (Lemma~\ref{lem:amp}), we have that the shuffled output is $(\hat{\epsilon}, \hat{\delta})$-DP with $\hat{\epsilon} = O\left(\frac{\hat{\epsilon}_0 \sqrt{\log(1/(T M \hat{\delta}_0))}}{\sqrt{M}}\right)$ and $\hat{\delta} = O(M \hat{\delta}_0)$ (provided $\hat{\epsilon}_0 \le 1/T$ and $\hat{\delta}_0 \le 1/(M T)$). Here we note that in our case,  $N = M$ and $n= T$, where $n=T$ follows from the fact that there exists  $\alpha_i$ in the tree that corresponds to the sum of $T$ data points. Moreover, since the same mechanism is run at all silos, shuffling-then-privatizing is the same as first privatizing-then-shuffling the outputs.  Next, we apply the advanced composition theorem (cf. Theorem 3.20 in~\cite{dwork2014algorithmic}). In particular, by the binary tree structure, each data point involves only $\kappa:=1+\log(K)$ times in the output of $\cR$. Thus, to achieve $(\epsilon,\delta)$-DP, it suffices to have $\hat{\epsilon} = \frac{\epsilon}{2\sqrt{2\kappa \log(2/\delta)}}$ and $\hat{\delta} = \frac{\delta}{2\kappa}$. Using all these equations, we can solve for $\hat{\epsilon}_0 = C_1\cdot \frac{\epsilon \sqrt{M}}{\sqrt{\kappa\log(1/\delta)\log(\kappa/(\delta T))}}$ and $\hat{\delta}_0 = C_2\cdot \frac{\delta}{ M\kappa}$, for some constants $C_1>0$ and $C_2>0$.
To satisfy the conditions on $\hat{\epsilon}_0$ and $\hat{\delta}_0$, we have 
$\epsilon \le \frac{\sqrt{\kappa}}{C_1T\sqrt{M}}$
and $\delta \le \frac{\kappa}{C_2 T}$. With the choice of $\hat{\epsilon}_0$ and $\hat{\delta}_0$, we have the noise variance $\sigma_0^2 = O\left(\frac{2 L^2 \beta \log(1/\delta)\log(\kappa/(\delta T))\log(M\kappa/\delta)}{\epsilon^2 M}\right)$. Thus, we can apply $\cP$ to the bias and covariance terms  (with $L = 1$), respectively. 

\textbf{Regret.} Again, we simply resort to our Lemma~\ref{lem:subG} for the regret analysis. In particular, we only need to determine the maximum noise level in the learning process. Note that $\sigma_0^2 = O\left(\frac{2 L^2 \kappa \log(1/\delta)\log(\kappa/(\delta T))\log(M\kappa/\delta)}{\epsilon^2 M}\right)$ is the noise level injected for both bias and covariance terms. Now, by the construction of the binary tree in $\cP$, one can see that each prefix sum only involves at most $1+\log(k)$ tree nodes. 
As a result, the overall noise level across all $M$ silos is upper bounded by $\sigma_{\text{total}}^2 = M \kappa {\sigma}_0^2$. Finally, setting $\sigma^2$ in Lemma~\ref{lem:subG} to be the noise level $\sigma_{\text{total}}^2$ , yields the required result.  
\end{proof}
Now, we prove Theorem~\ref{thm:SDP-vec}.

\begin{proof}[Proof of Theorem~\ref{thm:SDP-vec}]
\textbf{Privacy.} For each calculation of the noisy synchronized p-sum, there exist parameters for $\cP_{\text{Vec}}$ such that it satisfies $(\epsilon_0,\delta_0)$-SDP where $\epsilon_0 \in (0,15]$ and $\delta_0 \in (0,1/2)$ (see Lemma 3.1 in~\cite{cheu2021shuffle} or Theorem 3.5 in~\cite{pmlr-v162-chowdhury22a}). Then, by the binary tree structure, each single data point (bias vector or covariance matrix) only participates in at most $\kappa:=1+\log(K)$ runs of $\cP_{\text{Vec}}$.  Thus, to achieve $(\epsilon,\delta)$-DP, it suffices to have ${\epsilon_0} = \frac{\epsilon}{2\sqrt{2\kappa \log(2/\delta)}}$ and ${\delta_0} = \frac{\delta}{2\kappa}$ by advanced composition theorem. Thus, for any $\epsilon \in (0, 30\sqrt{2\kappa \log(2/\delta)})$ and $\delta \in (0,1)$, there exist parameters for  $\cP_{\text{Vec}}$ such that the entire calculations of noisy p-sums are $(\epsilon,\delta)$-SDP. Since we have two streams of data (bias and covariance), we finally have that for any $\epsilon \in (0, 60\sqrt{2\kappa \log(2/\delta)})$ and $\delta \in (0,1)$, there exist parameters for  $\cP_{\text{Vec}}$ such that Algorithm~\ref{alg:FedLUCB-SDP} with $\cP_{\text{Vec}}^{\cT}$ satisfies $(\epsilon,\delta)$-SDP.

\textbf{Regret.} By the same analysis in the proof of Theorem 3.5 in~\cite{pmlr-v162-chowdhury22a}, the injected noise for each calculation of the noisy \emph{synchronized} p-sum is sub-Gaussian with the variance being at most $\hat{\sigma}^2 = O\left(\frac{\log^2(d^2/\delta_0)}{\epsilon_0^2}\right) = O\left(\frac{\kappa \log (1/\delta) \log^2(d^2\kappa/\delta)}{\epsilon^2}\right)$. Now, by the binary tree structure, each prefix sum only involves at most $\kappa$ p-sums. Hence, the overall noise level is upper bounded by $\sigma_{\text{total}}^2 = \kappa \hat{\sigma}^2$. Finally, setting $\sigma^2$ in Lemma~\ref{lem:subG} to be the noise level $\sigma_{\text{total}}^2$ , yields the required result.  
\end{proof}

Now, we provide proof of amplification Lemma~\ref{lem:amp} for completeness. We follow the same idea as in~\cite{feldman2022hiding} and~\cite{lowy2021private}. For easy comparison, we use the same notations as in~\cite{lowy2021private} and highlighted the key difference using color text.

\begin{proof}[Proof of Lemma~\ref{lem:amp}]
Let $\bx_0, \bx_1 \in \XX^{n \times N}$ be adjacent distributed data sets (i.e. $\sum_{i=1}^N \sum_{j=1}^n \mathbbm{1}_{\{ x_{i,j} \neq x_{i,j}\}} = 1$). Assume WLOG that $\bx_0 = (X_1^0, X_2, \cdots, X_N)$ and $\bx_1 = (X_1^1, X_2, \cdots, X_N),$ where $X_1^0 = (x_{1,0}, x_{1,2}, \cdots, x_{1,n}) \neq (x_{1,1}, x_{1,2}, \cdots, x_{1,n}).$ We can also assume WLOG that $X_j \notin \{X_1^0, X_1^1\}$ for all $j \in \{2, \cdots, N\}$ by re-defining $\XX$ and $\rand_r$ if necessary. 

Fix $i \in [N], r \in [R], \bz = \bz_{1:r-1} = Z_{(1:r-1)}^{(1:N)} \in \ZZ^{(r-1) \times N}$, denote $\mathcal{R}(X):= \rand_r(\bz, X)$ for $X \in \XX^n,$ and $\Al_s(\bx):= \Al_s^r( \bz_{1:r-1}, \bx).$  Draw $\pi$ uniformly from the set of permutations of $[N]$. Now, since $\RR$ is $(\epsilon_0^r, \delta_0^r)$-DP,
$\RR(\Xoo) \edorsim  \RR(\Xoz)$, so by~\cite[Lemma D.12]{lowy2021private}, there  exists a local randomizer $\RR'$ such that $\RR'(\Xoo) \edorzsim \RR(\Xoz)$ and $TV(\RR'(\Xoo), \RR(\Xoo)) \leq \delor.$

Hence, by~\cite[Lemma D.8]{lowy2021private}, there exist distributions $U(X_1^0)$ and  $U(X_1^1)$ such that \begin{equation}
\label{eq: 16}
    \RR(\Xoz) = \frac{e^{\epsor}}{e^{\epsor} + 1}U(\Xoz) + \frac{1}{e^{\epsor} + 1}U(\Xoo) 
\end{equation}
and \begin{equation}
\label{eq: 17}
     \RR'(\Xoo) = \frac{1}{e^{\epsor} + 1}U(\Xoz) + \frac{e^{\epsor}}{e^{\epsor} + 1}U(\Xoo).
\end{equation}

\textcolor{DarkBlue}{Here, we diverge from the proof in~\cite{lowy2021private}. We denote $\tepso := n\epsor$ and \textcolor{blue}{$\tdelo := \delor.$} Then, by the assumption of $\RR(X)$, for any $X$, we have $\RR(X) \tedosim \RR(\Xoz))$ and $\RR(X) \tedosim \RR(\Xoo))$. This is because by the assumption, when the dataset changes from any $X$ to $\Xoz$ (or $\Xoo$), the total change in terms of $l_2$ norm can be $n$ times that under an adjacent pair. Thus, one has to scale the $\epsilon_0^r$ by $n$ while keeping the same $\delta_0^r$. }

Now, we resume the same idea as in~\cite{lowy2021private}.
By convexity of hockey-stick divergence and the above result, we have $\RR(X) \tedosim \frac{1}{2}(\RR(\Xoz) + \RR(\Xoo)) := \rho$ for all $X \in \XX^n.$ 
That is, $\RR$ is $(\tepso, \tdelo)$ deletion group DP for groups of size $n$ with reference distribution $\rho.$ Thus, by~\cite[Lemma D.11]{lowy2021private}, we have that there exists a local randomizer $\RR''$ such that $\RR''(X)$ and $\rho$ are $(\tepso, 0)$ indistinguishable and $TV(\RR''(X), \RR(X)) \leq \tdelo$ for all $X.$ Then by the definition of $(\tepso, 0)$ indistinguishability, for all $X$ there exists a ``left-over'' distribution $LO(X)$ such that $\RR''(X) = \frac{1}{e^{\tepso}} \rho + (1 - 1/e^{\tepso})LO(X) = \frac{1}{2e^{\tepso}}(\RR(\Xoz) + \RR(\Xoo)) + (1 - 1/e^{\tepso})LO(X).$ 

Now, define a randomizer $\LL$ by $\LL(\Xoz) := \RR(\Xoz), ~\LL(\Xoo) := \RR'(\Xoo),$ and \begin{align}
\label{eq: 18}
\LL(X) &:= \frac{1}{2e^{\tepso}}\RR(\Xoz) + \frac{1}{2e^{\tepso}}\RR'(\Xoo) + (1 - 1/e^{\tepso})LO(X) \nonumber \\
&= \frac{1}{2e^{\tepso}}U(\Xoz) + \frac{1}{2e^{\tepso}}U(\Xoo) + (1 - 1/e^{\tepso})LO(X)
\end{align}
for all $X \in \XX^n \setminus \{\Xoz, \Xoo\}.$ (The equality follows from \eqref{eq: 16} and \eqref{eq: 17}.)
Note that $TV(\RR(\Xoz), \LL(\Xoz)) = 0, ~TV(\RR(\Xoo), \LL(\Xoo)) \leq \delor,$ and for all $X \in \XX^n \setminus \{\Xoz, \Xoo\}$, \textcolor{DarkBlue}{ $TV(\RR(X), \LL(X)) \leq TV(\RR(X), \RR''(X)) + TV(\RR''(X), \LL(X)) \leq \tdelo + \frac{1}{2e^{\tepso}}TV(\RR'(\Xoo), \RR(\Xoo)) = (1 + \frac{1}{2e^{n\epsor}})\delor$}.

Keeping $r$ fixed (omitting $r$ scripts everywhere), for any $i \in [N]$ and $\bz := \bz_{1:r-1} \in \ZZ^{(r-1) \times N},$ let $\LL^{(i)}(\bz, \cdot)$, ~$U^{(i)}(\bz, \cdot)$, and $LO^{(i)}(\bz, \cdot)$ denote the randomizers resulting from the process described above. Let $\Al_{\LL}: \XX^{n \times N} \to \ZZ^{N}$ be defined exactly the same way as $\Al_s^r := \Al_s$ (same $\pi$) but with the randomizers $\rand$ replaced by $\LL^{(i)}$. Since $\Al_s$ applies each randomizer $\RR^{(i)}$ exactly once and $\RR^{(1)}(\bz, X_{\pi(1)}, \cdots \RR^{(N)}(\bz, X_{\pi(N)})$ are independent (conditional on $\bz = \bz_{1:r-1}$) \footnote{This follows from the assumption  that $\RR^{(i)}(\bz_{1:r-1}, X)$ is conditionally independent of $X'$ given $\bz_{1:r-1}$ for all $\bz_{1:r-1}$ and $X \neq X'$.}, we have 
\textcolor{DarkBlue}{$TV(\Al_s(\bx_0), \Al_{\LL}(\bx_0)) \leq N (1 + \frac{1}{2e^{n\epsor}})\delor$ and $TV(\Al_s(\bx_1), \Al_{\LL}(\bx_1) \leq N (1 + \frac{1}{2e^{n\epsor}})\delor$.} Now we claim that $\Al_{\LL}(\bx_0)$ and $\Al_{\LL}(\bx_1)$ are $(\epsr, \delta)$ indistinguishable for any $\delta \geq 2e^{-Ne^{-n\epsor}/16}.$ Observe that this claim implies that $\Al_s(\bx_0)$ and $\Al_s(\bx_1)$ are $(\epsr, \delta^r)$ indistinguishable by~\cite[Lemma D.13]{lowy2021private} (with $P':= \Al_{\LL}(\bx_0), Q':= \Al_{\LL}(\bx_1), P:= \Al_{s}(\bx_0), Q:= \Al_{s}(\bx_1)$.) Therefore, it only remains to prove the claim, i.e. to show that $D_{e^{\epsr}}(\mathcal{A}_{\mathcal{L}}(\bx_0),\mathcal{A}_{\mathcal{L}}(\bx_1) \leq \delta$ for any $\delta \geq  2e^{-Ne^{-n\epsor}/16}.$

Now, 
define $\LL_U^{(i)}(\bz, X):= \begin{cases}
U^{(i)}(\bz, \Xoz) &\mbox{if} ~X = \Xoz \\
U^{(i)}(\bz, \Xoo) &\mbox{if} ~X = \Xoo \\
\LL^{(i)}(\bz, X) &\mbox{otherwise}.
\end{cases}.$
For any inputs $\bz, \bx$, let $\Al_{U}(\bz, \bx)$ be defined exactly the same as $\Al_s(\bz, \bx)$ (same $\pi$) but with the randomizers $\rand$ replaced by $\LL_U^{(i)}$. Then by \eqref{eq: 16} and \eqref{eq: 17}, 
\begin{equation}
    \label{eq: 19}
    \Al_{\LL}(\bx_0) = \frac{e^{\epsor}}{e^{\epsor} + 1}\Al_{U}(\bx_0) + \frac{1}{e^{\epsor} + 1}\Al_U(\bx_1) ~\text{and} ~\Al_{\LL}(\bx_1) = \frac{1}{e^{\epsor} + 1}\Al_{U}(\bx_0) + \frac{e^{\epsor}}{e^{\epsor} + 1}\Al_U(\bx_1).
\end{equation}

Then by \eqref{eq: 18}, for any $X \in \XX^n \setminus \{\Xoz, \Xoo\}$ and any $\bz = \bz_{1:r-1} \in \ZZ^{(r-1) \times N},$ we have $\LL_U^{(i)}(\bz, X) = \frac{1}{2e^{\tepso}}\LL^{(i)}_U(\bz, \Xoz) + \frac{1}{2e^{\tepso}}\LL^{(i)}_U(\bz, \Xoo) + (1 - e^{-\tepso})LO^{(i)}(\bz, X).$ Hence,~\cite[Lemma D.10]{lowy2021private} (with $p:= e^{-\tepso} = e^{-n\epsor}$) implies that $\Al_U(\bx_0)$ and $\Al_U(\bx_1)$) are \[
\left(\log\left(1 + \frac{8 \sqrt{e^{\widetilde{\epso}} \ln(4/\delta)}}{\sqrt{N}} + \frac{8 e^{\widetilde{\epso}}}{N}\right), \delta\right)
\] 
indistinguishable for any $\delta \geq 2e^{-Ne^{-n\epsor}/16}.$ 

\textcolor{DarkBlue}{Here, we also slightly diverge from~\cite{lowy2021private}. Instead of using~\cite[Lemma D.14]{lowy2021private}\footnote{We think that its restatement of~\cite[Lemma 2.3]{feldman2022hiding} is not correct (which can be easily fixed though).}, we can directly follow the proof of Lemma 3.5 in~\cite{feldman2022hiding} and Lemma 2.3 in~\cite{feldman2022hiding} to establish our claim that  $\Al_{\LL}(\bx_0)$ and $\Al_{\LL}(\bx_1)$ are indistinguishable (hence the final result). Here, we also slightly improve the $\delta$ term compared to~\cite{feldman2022hiding} by applying amplification via sub-sampling to the $\delta$ term as well. In particular, the key step is to rewrite~\eqref{eq: 19} as follows (with $T:= \frac{1}{2} ({\Al_{U}}(\bx_0) + {\Al_{U}}(\bx_1))  $
\begin{equation}
    \label{eq:with-T}
    \Al_{\LL}(\bx_0) = \frac{2}{e^{\epsor} + 1}T + \frac{e^{\epsor} - 1}{e^{\epsor} + 1}\Al_U(\bx_0) ~\text{and} ~\Al_{\LL}(\bx_1) = \frac{2}{e^{\epsor} + 1}T + \frac{e^{\epsor}-1}{e^{\epsor} + 1}\Al_U(\bx_1).
\end{equation}
Thus, by the convexity of the hockey-stick divergence and Lemma 2.3 in~\cite{feldman2022hiding}, we have $\Al_{\LL}(\bx_0)$ and $\Al_{\LL}(\bx_1)$ are \[
\left(\log\left(1 + \frac{\epsor-1}{\epsor+1}\left(\frac{8 \sqrt{e^{\widetilde{\epso}} \ln(4/\delr)}}{\sqrt{N}}\right) + \frac{8 e^{\widetilde{\epso}}}{N}\right), \frac{\epsor-1}{\epsor+1}\delta\right)
\] 
indistinguishable for any $\delta \geq 2e^{-Ne^{-n\epsor}/16}.$ As decribed before, this leads to the result that $\Al_s(\bx_0)$ and $\Al_s(\bx_1)$ are $(\epsr, \delta^r)$ indistinguishable by~\cite[Lemma D.13]{lowy2021private} (original result in Lemma 3.17 of~\cite{dwork2014algorithmic}) with (noting that $\widetilde{\epso} = n \epsilon_0^r$)
\begin{align*}
   \epsilon^r &:= \ln\left[1 + \left(\frac{e^{\epsor} - 1}{e^{\epsor} + 1}\right)\left(\frac{8 \sqrt{e^{n\epsor}\ln(4/\delta)}}{\sqrt{N}} + \frac{8 e^{n\epsor}}{N} \right) 
    \right],\\
   \delta^r &:= \left(\frac{e^{\epsor} - 1}{e^{\epsor} + 1}\right)\delta + N (e^{\epsilon^r}+1)(1+e^{-\epsilon_0^r}/2)\delor.
\end{align*}
}
\end{proof}

\section{Further Discussions}
\label{app:concluding}
In this section, we provide more details on our privacy notion and algorithm design. 

\subsection{Silo-level LDP/SDP vs. Other Privacy Notions}
\label{app:ldp}
In this section, we compare our silo-level LDP and SDP with standard privacy notions for single-agent LCBs, including local, central, and shuffle model for DP, respectively. 

\textbf{Silo-level LDP vs. single-agent local DP.} Under standard LDP for single-agent LCBs~\cite{zheng2020locally,duchi2013local,Zhou_Tan_2021}, each user only trusts herself and hence privatizes her response before sending it to the agent. In contrast, under silo-level LDP, each local user trusts the local silo (agent), which aligns with the pratical situations of cross-silo FL, e.g., patients often trust the local hospitals. In such cases, standard LDP becomes unnecessarily stringent, hindering performance/regret and making it less appealing to cross-silo federated LCBs.

\textbf{Silo-level LDP vs. single-agent central DP.} The comparison with standard central DP for single-agent LCB (e.g.,~\cite{shariff2018differentially}) is delicate. We first note that under both notions, users trust the agent and the privacy burden lies at the agent. Under standard central DP, the agent uses private statistics until round $t$ to choose action for each round $t$, which ensures that any other users $t'\neq t$ cannot infer too much about user $t$'s information by observing the actions on rounds $t'\neq t$ (i.e., joint differential privacy (JDP)~\cite{kearns2014mechanism})\footnote{As shown in~\cite{shariff2018differentially}, JDP relaxation is necessary for achieving sub-linear regret for LCBs under the central model.}. On the other hand, silo-level LDP does not necessarily require each agent (silo) to use private statistics to recommend actions to users within the silo. Instead, it only requires the agent to privatize its sent messages (both schedule and content). Thus, silo-level LDP may not protect a user $t$ from the colluding of all other users within the \emph{same} silo. In other words, the adversary model for silo-level LDP is that the adversary could be any other silos or the central server rather than other users within the same silo. Note that the same adversary model is assumed in a similar notion for federated supervised learning (e.g., inter-silo record-level differential privacy (ISRL-DP) in~\cite{lowy2021private}). In fact, with a minor tweak of our Algorithm~\ref{alg:FedLUCB-SDP}, one can achieve a slightly stronger notion of privacy than silo-level LDP in that it now can protect against both other silos/server and users within the same silo. The key idea is exactly that now each agent will only use private statistics to recommend actions, see Appendix~\ref{app:tweak}.

\textbf{Silo-level LDP vs. Federated DP in~\cite{dubey2020differentially}.} In~\cite{dubey2020differentially}, the authors define the so-called notion of \emph{federated DP} for federated LCBs, which essentially means that ``the action
chosen by any agent must be sufficiently impervious (in probability) to any single  data from any other agent''. This privacy guarantee is directly implied by our silo-level LDP. In fact, in order to show such a privacy guarantee, ~\cite{dubey2020differentially} basically tried to show that the outgoing communication is private, which is the idea of silo-level LDP. However, as mentioned in the main paper,~\cite{dubey2020differentially} only privatizes the communicated data and fails to privatize the communication schedule, which leads to privacy leakage. Moreover, as already mentioned in Remark~\ref{rem:weak-FedDP}, Fed-DP fails to protect a user's privacy even under a reasonable adversary model. Thus, we believe that silo-level LDP is a better option for federated LCBs.

\textbf{SDP vs. single-agent shuffle DP.} Under the single-agent shuffle DP~\cite{pmlr-v162-chowdhury22a,tenenbaum2023concurrent}, the shuffler takes as input a batch of users' data (i.e., from $t_1$ to $t_2$), which enables to achieve a regret of $\widetilde{O}(T^{3/5})$ (vs. $\widetilde{O}(T^{3/4})$ regret under local model and $\widetilde{O}(\sqrt{T})$ regret under central model). In contrast, under our SDP, the shuffler takes as input the DP outputs from all $M$ agents. Roughly speaking, single-agent shuffle DP aims to amplify the privacy dependence on $T$ while our SDP amplifies privacy over $M$. Due to this, single-agent shuffle DP can directly apply a standard amplification lemma (e.g.,~\cite{feldman2022hiding}) or shuffle protocol (e.g.,~\cite{cheu2021shuffle}) that works well with LDP mechanism at each user (i.e., the size of dataset is $n=1$). In contrast, in order to realize amplification over $M$ agents' DP outputs, we have to carefully modify the standard amplification lemma to handle the fact that now each local mechanism operates on $n>1$ data points, which is one of the key motivations for our new amplification lemma.

\subsection{A Simple Tweak of Algorithm~\ref{alg:FedLUCB-SDP} for a Stronger Privacy Guarantee}
\label{app:tweak}
As discussed in the last subsection, the adversary model behind silo-level LDP only includes other silos and the central server, i.e., excluding adversary users within the same silo. Thus, for silo-level LDP, Algorithm~\ref{alg:FedLUCB-SDP} can use non-private data to recommend actions within a batch (e.g., $V_{t,i}$ includes non-private recent local bias vectors and covariance matrices). If one is also interested in protecting against adversary users within the same silo, a simple tweak of Algorithm~\ref{alg:FedLUCB-SDP} suffices.

\begin{algorithm}[!t]
  \caption{Priv-FedLinUCB-Lazy }
  \label{alg:FedLUCB-lazy}
\begin{algorithmic}[1]
  \STATE {\bfseries Parameters:} Batch size $B \in \mathbb{N}$, regularization $\lambda >0$, confidence radii $\{\beta_{t,i}\}_{t\in [T], i\in [M]}$, feature map $\phi_i:\cC_i \times \cK_i \to \Real^d$, privacy protocol $\cP = (\cR,\cS,\cA)$
  \STATE {\bfseries Initialize:} For all $i \in [M]$, $W_{i} = 0, U_{i} = 0$, $\widetilde{W}_{\text{syn}} = 0$, $\widetilde{U}_{\text{syn}} = 0$ 
 \FOR{$t\!=\!1, \ldots, T$}
  
    \FOR{each agent $i = 1,\ldots, M$}
      \STATE \textcolor{DarkGreen}{$V_{t,i} =  \lambda I+ \widetilde{W}_{\text{syn}}$, $\hat{\theta}_{t,i} = V_{t,i}^{-1}\widetilde{U}_{\text{syn}}$}
      \STATE Play arm $a_{t,i} \!=\! \argmax_{a \in \cK_i} \inner{\phi_i(c_{t,i}, a)} {\hat{\theta}_{t,i}} + \beta_{t,i} \norm{\phi_i(c_{t,i},a)}_{V_{t,i}^{-1}}$ and set $x_{t,i} = \phi_i(c_{t,i}, a_{t,i})$
      \STATE Observe reward $y_{t,i}$
      \STATE Update  $U_{i} = U_{i} + x_{t,i}y_{t,i}$ and $W_{i} = W_{i} + x_{t,i}x_{t,i}^{\top}$
      \ENDFOR
      \IF{$t \Mod B = 0$}
       \STATE{\textcolor{DarkBlue}{\texttt{// Local randomizer $\cR$ at \emph{all} agents $i \in [M]$ }}}
        \STATE Send randomized messages $R_{t,i}^{\text{bias}} = \cR^{\text{bias}}(U_i)$ and $R_{t,j}^{\text{cov}} = \cR^{\text{cov}}(W_i)$ to the shuffler
         \STATE{\textcolor{DarkBlue}{\texttt{// Third party $\cS$ }}}
         \STATE $S_t^{\text{bias}} = \cS(\{R_{t,i}^{\text{bias}}\}_{i \in [M]})$ and $S_t^{\text{cov}} = \cS(\{R_{t,i}^{\text{cov}}\}_{i \in [M]})$
          \STATE{\textcolor{DarkBlue}{\texttt{// Analyzer $\cA$ at the server}}}
         \STATE Construct private cumulative statistics $\widetilde{U}_{\text{syn}}= \cA^{\text{bias}}(S_t^{\text{bias}})$ and $\widetilde{W}_{\text{syn}}= \cA^{\text{cov}}(S_t^{\text{cov}})$
         \STATE{\textcolor{DarkBlue}{\texttt{// \emph{All} agents $i \in [M]$}}}
        \STATE Receive $\widetilde{W}_{\text{syn}}$ and $\widetilde{U}_{\text{syn}}$ from the server
        \STATE Reset $W_{i} = 0$, $U_{i} = 0$
      \ENDIF

  \ENDFOR
\end{algorithmic}
\end{algorithm}

As shown in Algorithm~\ref{alg:FedLUCB-lazy}, the only difference is a lazy update of $\hat{\theta}_{t,i}$ is adopted (line 5), i.e., it is only computed using private data without any dependence on new non-private local data. In fact, same regret bound as in Theorem~\ref{thm:LDP} can be achieved for this new algorithm (though empirical performance could be worse due to the lazy update). 
In the following, we highlight the key changes in the regret analysis. It basically follows the six steps in the proof of Lemma~\ref{lem:general}. One can now define a mapping $\kappa (t)$ that maps any $t \in [T]$ to the most recent communication round. That is, for any $t \in [t_{k-1}, t_{k}]$ where $t_k = kB$ is the $k$-th communication round, we have $\kappa(t) = t_{k-1}$. Then, one can replace all $t$ in $V_{t,i}$ and $G_{t,i}$ by $\kappa(t)$. The main difference that needs a check is Step 4 when bounding the regret in good epochs. The key is again to establish a similar form as~\eqref{eq:det-trick}. To this end, note that for all $t \in [t_{k-1}, t_k]$ $V_{k} \succeq \bar{V}_{t,i}$ and $G_{\kappa(t),i} + \lambda_{\min} I = V_{k-1}$, which enables us to obtain $ \norm{x_{t,i}}_{(G_{\kappa(t),i} + \lambda_{\min} I)^{-1}} \le \sqrt{2}\norm{x_{t,i}}_{\bar{V}_{t,i}^{-1}}$. Following the same analysis yields the desired regret bound.

\subsection{Non-unique Users}
\label{app:non-unique}
In the main paper, we assume all users across all silos and $T$ rounds are unique. Here, we briefly discuss how to handle the case of non-unique users.
\begin{itemize}
    \item The same user appears multiple times in the same silo. One example of this could be one patient visiting the same hospital multiple times. In such cases, one needs to carefully apply group privacy or other technique (e.g.,~\cite{pmlr-v162-chowdhury22a}) to characterize the privacy loss of these returning users.
    \item The same user appears multiple times across different silos. One example of this could be one patient who has multiple records across different hospitals. Then, one needs to use adaptive advanced composition to characterize the privacy loss of these returning users.
\end{itemize}

\section{Additional Details on Simulation Results}\label{app:sim}

\begin{figure}[t]
		\begin{subfigure}[t]{.45\linewidth}
		\centering
        \includegraphics[width = 2.5in]{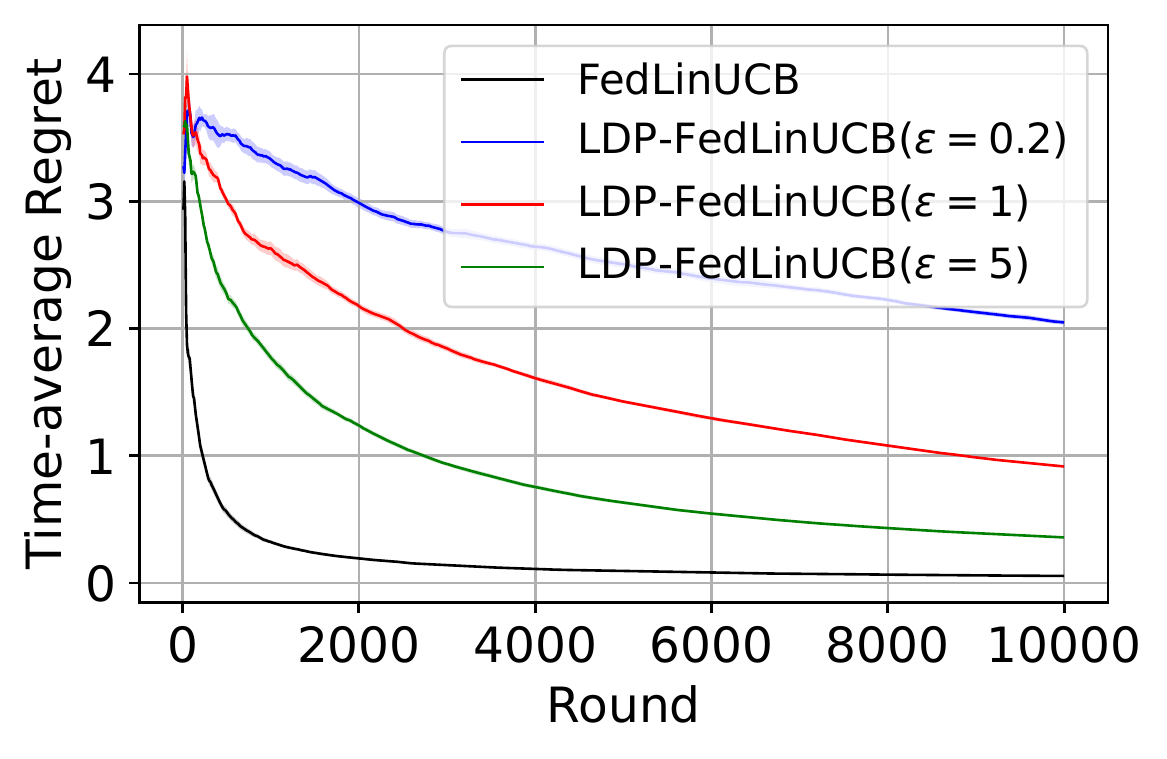}
        \vskip -1mm
			\caption{Synthetic data (varying $\epsilon$, $\delta=0.1)$}
		\end{subfigure} \ \
  \begin{subfigure}[t]{.45\linewidth}
        \centering
		\includegraphics[width = 2.5in]{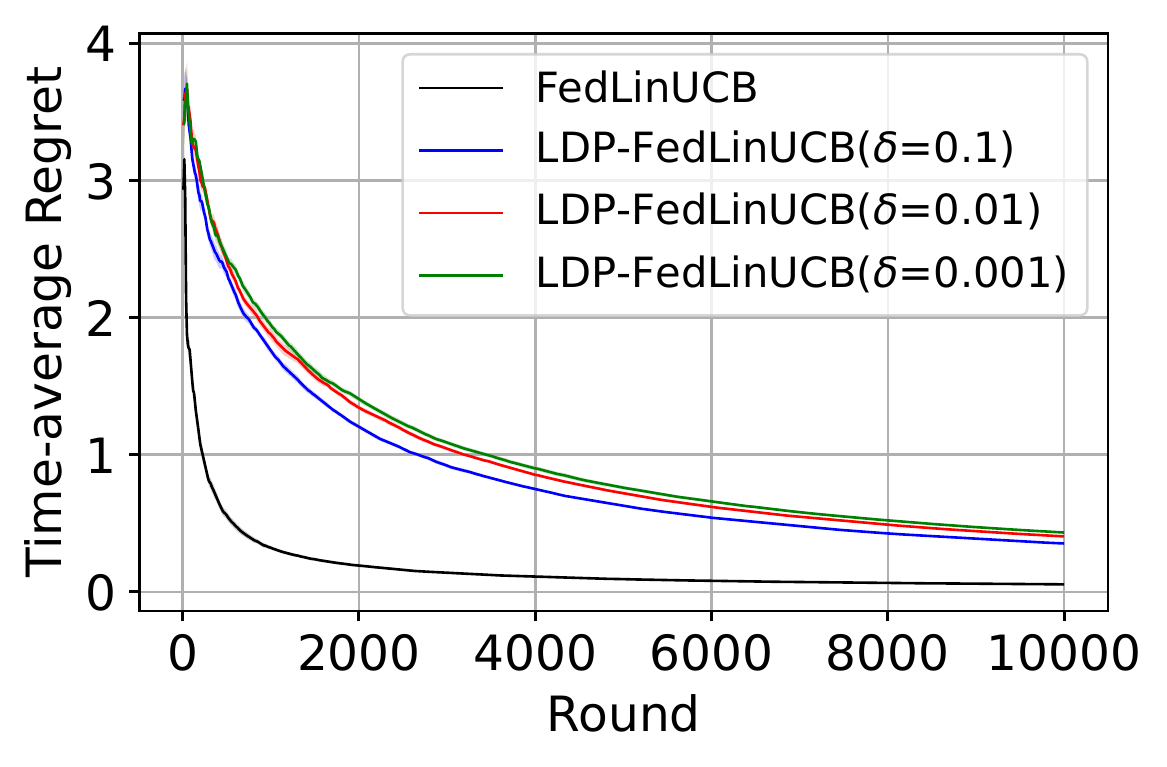}
  \vskip -1mm
			\caption{Synthetic data (varying $\delta$, $\epsilon=5$) }
		\end{subfigure} \\
  \begin{subfigure}[t]{.45\linewidth}
        \centering
		\includegraphics[width = 2.5in]{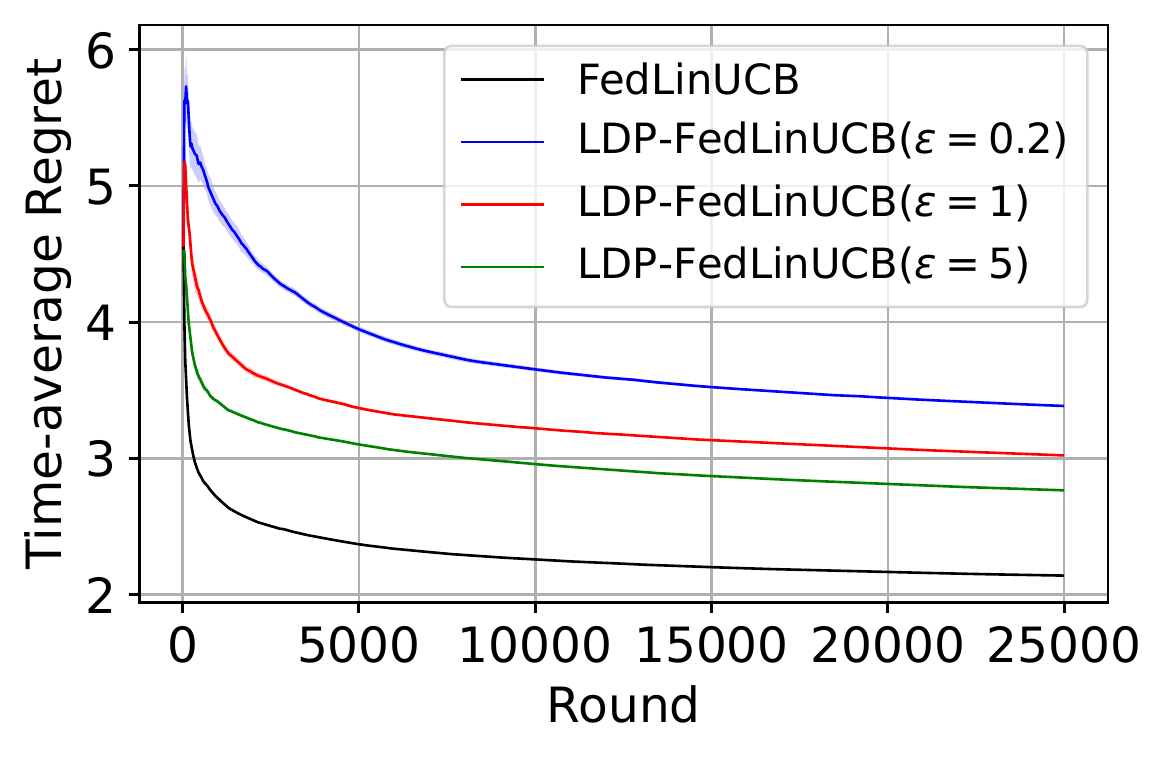}
  \vskip -1mm
			\caption{Real data (varying $\epsilon$, $\delta=0.1)$ }
		\end{subfigure}\ \
  \begin{subfigure}[t]{.42\linewidth}
		\centering
	\includegraphics[width = 2.5in]{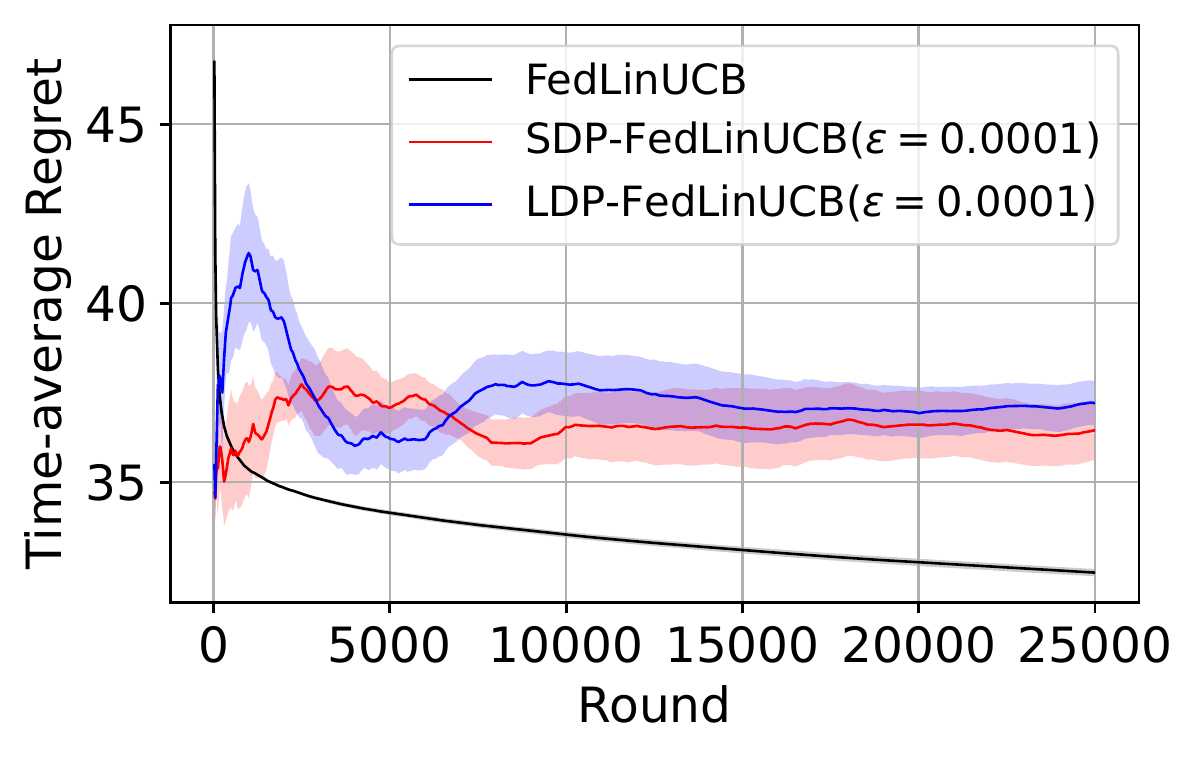}
 \vskip -1mm
			\caption{Real data ($M=100$)}
		\end{subfigure}
   \vspace{-1mm}
\caption{\footnotesize{Comparison of time-average group regret for FedLinUCB (non-private) and LDP-FedLinUCB (i.e., under silo-level LDP) on (a, b) synthetic Gaussian bandit instance and (c,d) bandit instance generated from MSLR-WEB10K Learning to Rank dataset.}  }\label{fig:all_algos_app}
		\vspace{0mm}
\end{figure}

In Figure~\ref{fig:all_algos_app}, we compare regret performance of LDP-FedLinUCB with FedLinUCB under varying privacy budgets.\footnote{All existing non-private federated LCB algorithms (e.g., \citet{wang2019distributed}) adopts adaptive communication. We refrain from comparing with those to maintain consistency in presentation.} 
In sub-figure (a), we plot results for $\delta=0.1$ and varying level of $\epsilon \in \lbrace 0.2,1,5\rbrace$ on synthetic Gaussian bandit instance, wherein sub-figure (b), we plot results for $\epsilon=5$ and varying level of $\delta \in \lbrace 0.1,0.01,0.001\rbrace$. In sub-figure (c), we plot results for $\delta=0.1$ and varying level of $\epsilon \in \lbrace 0.2,1,5\rbrace$ on bandit instance generated from MSLR-WEB10K data by training a lasso model on bodyfeatures ($d=78$). In all these plots, we observe that regret of LDP-FedLinUCB decreases and, comes closer to that of FedLinUCB as $\epsilon,\delta$ increases (i.e., level of privacy protection decreases), which support our theoretical results. Here, we don't compare SDP-LinUCB (with privacy amplification) since its privacy guarantee holds for $\epsilon,\delta \ll 1$. Instead, we do so in sub-figure (d) with $\epsilon=\delta=0.0001$. Here also, we observe a drop in regret
of SDP-FedLinUCB compared to that of LDP-FedLinUCB.

\end{document}